\documentclass{article}

\usepackage{microtype}
\usepackage{graphicx}
\usepackage{subcaption}
\usepackage{amsfonts}
\usepackage{amsmath}
\usepackage{amssymb, dsfont}
\usepackage{bm}
\usepackage{mathtools}
\usepackage{amsthm}
\usepackage{xcolor}
\usepackage{graphicx}     
\usepackage{subcaption}  
\usepackage{makecell}
\usepackage{bm}
\usepackage{tikz}
\usetikzlibrary{positioning,fit}
\usepackage{wrapfig}
\usepackage{xr-hyper}
\usepackage{booktabs}
\usepackage{longtable}
\usepackage{multirow}
\usepackage{threeparttable}
\usepackage[labelfont=bf]{caption}
\usepackage{hyperref}

\usepackage[arxiv]{icml2026}

\usepackage{amsmath}
\usepackage{amssymb}
\usepackage{mathtools}
\usepackage{amsthm}

\usepackage[capitalize,noabbrev]{cleveref}

\theoremstyle{plain}
\newtheorem{theorem}{Theorem}[section]
\newtheorem{proposition}[theorem]{Proposition}

\newtheorem{corollary}[theorem]{Corollary}
\theoremstyle{definition}
\newtheorem{definition}[theorem]{Definition}

\theoremstyle{remark}

\usepackage[textsize=tiny]{todonotes}

\icmltitlerunning{Functional Decomposition and Shapley Interactions for Interpreting Survival Models}

\begin{document}

\twocolumn[
  \icmltitle{Functional Decomposition and Shapley Interactions \\ for Interpreting Survival Models}



  \icmlsetsymbol{equal}{*}

  \begin{icmlauthorlist}
    \icmlauthor{Sophie Hanna Langbein}{bips,bre}
    \icmlauthor{Hubert Baniecki}{uw,ccai}
    \icmlauthor{Fabian Fumagalli}{lmu,bi}
    \icmlauthor{Niklas Koenen}{bips,bre}
    \icmlauthor{Marvin N. Wright}{bips,bre}
    \icmlauthor{Julia Herbinger}{bips}
  \end{icmlauthorlist}

  \icmlaffiliation{bips}{Leibniz Institute for Prevention Research and Epidemiology – BIPS}
  \icmlaffiliation{bre}{Faculty of Mathematics and Computer Science, University of Bremen}
  \icmlaffiliation{uw}{University of Warsaw}
  \icmlaffiliation{ccai}{Centre for Credible AI, Warsaw University of Technology}
  \icmlaffiliation{lmu}{LMU Munich, MCML}
  \icmlaffiliation{bi}{Bielefeld University}

  \icmlcorrespondingauthor{Marvin N. Wright}{wright@leibniz-bips.de}

  \icmlkeywords{Machine Learning, ICML}

  \vskip 0.3in
]



\printAffiliationsAndNotice{}  

\begin{abstract} 
  Hazard and survival functions are natural, interpretable targets in time-to-event prediction, but their inherent non-additivity fundamentally limits standard additive explanation methods.
  We introduce Survival Functional Decomposition (SurvFD), a principled approach for analyzing feature interactions in machine learning survival models.  
  By decomposing higher-order effects into time-dependent and time-independent components, SurvFD offers a previously unrecognized perspective on survival explanations, explicitly characterizing when and why additive explanations fail. 
  Building on this theoretical decomposition, we propose SurvSHAP-IQ, which extends Shapley interactions to time-indexed functions, providing a practical estimator for higher-order, time-dependent interactions. 
  Together, SurvFD and SurvSHAP-IQ establish an interaction- and time-aware interpretability approach for survival modeling, with broad applicability across time-to-event prediction tasks.
\end{abstract}

\section{Introduction}

Understanding whether effects vary across subgroups, patient characteristics, or co-exposures is critical for clinical and public health decisions. 
Such \emph{interactions} are important in survival analysis, e.g., between genetic and environmental factors~\citep{minelli2011interactive}, obesity and treatment effects~\citep{jensen2008obesity, hanai2014obesity}, or age and tumor markers~\citep{nielsen2008interactions, STEHLIK2010, julkunen2025comprehensive}. 
Survival machine learning models can automatically capture complex, non-linear, and time-varying interactions without prior specification~\citep{ishwaran2008random, barnwal2022xgboost, wiegrebe2024deep}, but their opacity limits interpretability and clinical utility~\citep{langbein2025gradientbased,baniecki2025interpretable}.

\begin{figure}[t]
    \centering
    \includegraphics[width=\columnwidth]{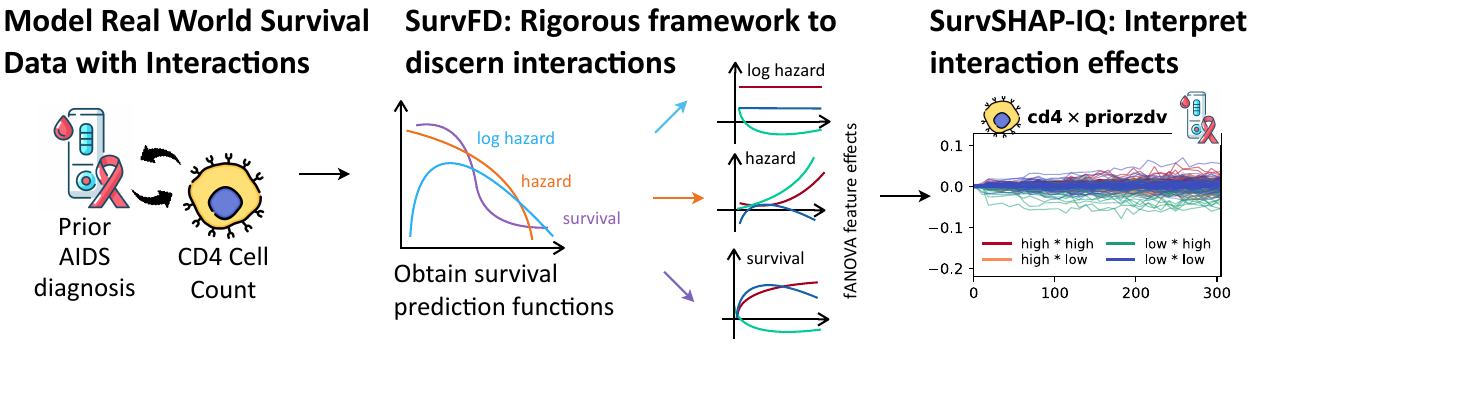}
    \caption{
    SurvFD and SurvSHAP-IQ facilitate the interpretation of interactions in survival models. (images: Flaticon.com)
    }
    \label{fig:intro_fig}
\end{figure}

A principled way to formalize feature interactions is functional decomposition (FD), which additively separates prediction functions into main and interaction effects~\citep{stone1994fanova, hooker2004discovering, hooker2007generalized, Owen_2013}. 
In survival analysis, however, FD is more challenging because predictions are intrinsically time-dependent. 
A key limitation is that standard FD does not distinguish between \emph{time-independent} and \emph{time-dependent} effects. 
Moreover, although additive decomposition is natural on the log-hazard scale, transformations to interpretable scales, such as hazard or survival functions, induce additional time-dependent effects and interactions not present on the log-hazard scale. 
This motivates a principled FD approach for understanding time-dependence and interactions in survival models and the need to quantify interactions across scales and time.

Recently, Shapley-based interaction indices have been used to estimate interactions based on FD~\citep{grabisch1999axiomatic, sundararajan2020shapley, bordt2023shapley, tsai2023faith, fumagalli2023shapiq}, but are largely restricted to scalar outcomes, leaving survival-specific decompositions and their estimation underdeveloped. 
This prevents a systematic and theoretically grounded quantification of feature interactions in time-to-event modeling.
We now review existing approaches and their limitations.

\textbf{Related work.} 
In statistical survival models, interactions are assessed via 
(1) \emph{subgroup analyses}~\citep[effect modification,][]{vanderweele2009distinction} or 
(2) explicit interaction (product) terms. 
Their interpretation is model-specific: interactions represent departures from multiplicativity in CoxPH models and from additivity in additive hazard models~\citep{aalen1980model,rod2012additive}.
In contrast, interaction analysis in machine learning survival models is limited. 
Existing local, model-agnostic explanation methods---such as SurvLIME~\citep{kovalev2020survlime}, SurvSHAP(t)~\citep{krzyzinski2023survshap}, JointLIME~\citep{chen2024jointlime}, and GradSHAP(t)~\citep{langbein2025gradientbased}---do not capture interactions, leaving their detection and quantification largely unaddressed.

So far, FD has not been widely adopted in survival analysis. 
\citet{huang2000functional} use \emph{functional ANOVA} decomposition \citep{hooker2004discovering} on the log-hazard, yielding flexible non-linear, time-dependent effects via splines. 
\citet{mercadier2021hoeffding} extend the Hoeffding–Sobol decomposition to homogeneous co-survival functions, decomposing joint survival for multiple events into main and interaction effects. 
Yet, none of these approaches provide a principled decomposition of survival models across time and prediction scales.

\textbf{Contributions.} Our work advances the literature in three ways:
\textbf{(1) SurvFD.} 
We formalize functional decomposition for survival models (SurvFD), recovering additive, any-order interaction effects, split into time-dependent and time-independent components. 
We characterize SurvFD across log-hazard (Thms.~\ref{th:loghazard_fct_cor}~\&~\ref{th:log_hazard_fct_incor}), hazard, and survival functions (Prop.~\ref{prop:hazard_interactions}, Cor.~\ref{th:hazard_survival_fct}) and its behavior under feature dependencies (Thm.~\ref{th:survfd_dependent}).
\textbf{(2) SurvSHAP-IQ.} 
Building on SurvFD, we extend Shapley interaction quantification to survival models. 
SurvSHAP-IQ provides estimates of higher-order interactions that can be visualized to interpret time-dependent survival predictions.
\textbf{(3) Empirical validation.} 
We show that SurvSHAP-IQ accurately recovers interaction effects across several simulated prediction functions while satisfying local accuracy.
We further demonstrate its utility for interpreting cancer survival models on multiple real-world datasets including multi-modal data.

\section{Background}\label{sec:background}

This section gives the necessary background on survival analysis, functional decomposition, and Shapley-based interpretation of survival models, as foundations for SurvFD theory and its practical implementation in SurvSHAP-IQ.

\textbf{General notation.} Let $\mathbb{D} = \{(\bm{x}^{(i)}, y^{(i)}, \delta^{(i)}) : i = 1, \ldots, n\}$ denote a survival dataset, 
where $\bm{x}^{(i)} = (x^{(i)}_1, \dots, x^{(i)}_p) \in \mathcal{X}$ is the $p$-dimensional vector of \emph{predictive features} for \emph{individual} $i$. We let $\bm{X} = (X_1, \dots, X_p)$ denote the corresponding random vector of features with support $\mathcal{X}$, and $X_j$ its $j$-th component. For each data point, $y^{(i)} = \min(t^{(i)}, c^{(i)})$ represents the observed time, 
defined as the minimum of the true event time $t^{(i)} \in \mathbb{R}^{+}_{0}$ and the censoring time $c^{(i)} \in \mathbb{R}^{+}_{0}$. 
The binary event indicator $\delta^{(i)} \in \{0,1\}$ equals 1 if the event occurs ($t^{(i)} < c^{(i)}$) 
and 0 if the observation is censored ($t^{(i)} > c^{(i)}$). We consider a single time-to-event setting with one event type (the event of interest), excluding competing and recurrent events.

\subsection{Survival Analysis} 
Survival analysis aims to characterize a set of functions mapping combinations of values from the feature space $\mathcal{X}$ and the time domain $\mathcal{T}$ to a scalar outcome. 
In the following, we consider the hazard function $h$ and survival function $S$.

\begin{definition}[Hazard function] 
The \textbf{hazard} (aka \textbf{risk}) \textbf{function} $h:\mathcal{X}\times\mathcal{T}\to\mathbb{R}^+_0$ gives the instantaneous event risk at time $t$, conditional on survival up to $t$ and observed features $\bm{x}\in\mathcal{X}$:
\begin{align}\label{eq:hazard_fct}
    h(t|\bm{x}) &\coloneq \lim_{\Delta t \rightarrow 0} \frac{\mathbb{P}(t \leq T \leq t + \Delta t | T \geq t, \bm{x})}{\Delta t}. 
\end{align}
\end{definition}

\begin{definition}[Survival function] 
The \textbf{survival function} $S:\mathcal{X} \times \mathcal{T} \rightarrow [0,1]$ describes the probability of the time-to-event being at least $t \geq 0$ conditional on the observed features $\bm{x} \in \mathcal{X}$: 
\begin{align}
    S(t|\bm{x}) &\coloneq \mathbb{P}(T \geq t | \bm{x}) = \exp\bigg(-\int_0^t h(u| \bm{x} ) du\bigg).\label{eq:survival_haz_rel}
\end{align}
\end{definition}

A common choice for the hazard function is the general multiplicative hazard model \citep{oakes1977asymptotic}:
\begin{equation}\label{eq:general_haz}
    h(t|\bm{x}) = h_0(t) \exp\bigl(G(t|\bm{x})\bigr),
\end{equation}
where $h_0(t)$ is a baseline hazard and $G(t|\bm{x})$ a (possibly time-dependent) \emph{risk score} linking features $\bm{x}$ to the hazard.

A well-known special case is the \emph{Cox proportional hazards (CoxPH) model}~\citep{cox1972regression}, which usually assumes $G(t|\bm{x}) = \bm{x}^\top \boldsymbol{\beta}$ with $\boldsymbol{\beta} \in \mathbb{R}^p$, i.e., the model considers neither interactions between features nor time-dependent effects, which implies constant hazard ratios over time — the \emph{proportional hazards assumption}.

In this work, we consider a generalized version of the risk score incorporating potentially time-dependent non-linear and interaction terms of arbitrary order
\begin{equation*}
G(t|\bm{x}) \;=\; 
\sum_{M \subseteq P} 
\beta_{M} \prod_{j \in M} g_j(x_j) \, l_{j}(t),
\end{equation*}
where $M \subseteq P = \left\{1, \ldots, p\right\}$ indexes feature subsets, 
$\beta_{M} \in \mathbb{R}$ denotes the associated main or interaction effect, ${g_j : \mathbb{R} \to \mathbb{R}}$ is a fixed (non-linear) feature transformation, and $l_{j}(t)$ captures time dependence (possibly constant). 

\subsection{Functional Decomposition}\label{sec:background_fd}
Functional decomposition (FD) expresses a model’s prediction function in terms of main and interaction effects over all feature subsets, so that any square-integrable function $F:\mathcal{X}\to\mathbb{R}$ on a $p$-dimensional feature space admits the following decomposition:
\begin{equation}\label{eq:fanova}
    F(\bm{x}) \;=\; f_\emptyset + \sum\nolimits_{\emptyset \neq M \subseteq P} f_M(\bm{x}),
\end{equation}
where $f_\emptyset=\int F(\bm{x})\,d\mathbb{P}_{\bm{X}}$ and $f_M$ is the \emph{pure effect} of subset $M$, defined via the inclusion–exclusion principle \citep{Hoeffding_1948,hooker2004discovering,hooker2007generalized}:
\begin{align}\label{eq:fanova_effect}
    f_M(\bm{x}) = \int F(\bm{x})\,d\mathbb{P}_{\bm{X}_{\bar{M}}} - \sum\nolimits_{Z \subset M} f_Z(\bm{x}).
\end{align}
Here, $f_M$ represents a \emph{pure main effect} ($|M|=1$) or \emph{pure interaction effect} ($|M|\geq2$). 
Uniqueness requires fixing the reference distribution $\mathbb{P}_{\bm{X}_{\bar{M}}}$; different choices yield different FDs with distinct properties and interpretations. 
Following \citet{fumagalli2025unifying}, we distinguish between marginal and conditional FD.

\textbf{Marginal FD vs. conditional FD.} Marginal FD is defined via the joint marginal distribution $\mathbb{P}_{\bm{X}_{\bar{M}}}=\mathbb{P}(\bm{X}_{\bar{M}})$, ignoring dependencies between $M$ and $\bar M$, whereas conditional FD integrates over the joint conditional distribution $\mathbb{P}_{\bm{X}_{\bar{M}}}=\mathbb{P}(\bm{X}_{\bar{M}}|\bm{X}_{M}=\bm{x}_M)$ to account for such dependencies. Marginal FD is unique and minimal \citep{kuo2010decompositions,fumagalli2025unifying} and reduces to functional ANOVA (Hoeffding decomposition) under feature independence, yielding orthogonal components \citep{hooker2004discovering,Hoeffding_1948}. Conditional FD is unique but not minimal, and enforcing hierarchical orthogonality is generally costly \citep{hooker2007generalized,chastaing2015generalized,rahman2014generalized}. Marginal FD underpins common model-centric explanations such as partial dependence \citep{friedman2001greedy}, permutation feature importance \citep{fisher2019all}, and interventional SHAP \citep{lundberg2017unified}, which are ``true to the model''. In contrast, conditional FD yields interpretations ``true to the data'' \citep{chen2020true}, underpinning methods such as conditional feature importance \citep{strobl2008conditional} and observational SHAP \citep{olsen2022using}.

\subsection{Shapley Values for Survival Models}\label{sec:background_shapley}

Our goal in this work is to interpret interactions in survival model predictions.
To quantify the attribution of feature $j \in P$ for instance $\bm{x} \in \mathcal{X}$, we define a time-dependent value function $v:\mathcal{P}(P)\to\mathbb{R}$, where $\mathcal{P}(P)$ is the power set of $P$. 
In the machine learning setting, the value function corresponds to model predictions for a feature subset $M$, integrated over a reference distribution $\mathbb{P}_{\bm{X}_{\bar{M}}}$ of the remaining features $\bar{M} := P \setminus M$ at a particular timepoint~$t$:  
$v(t|M) := \mathbb{E} \left[S(t| \bm{X}) | \bm{X}_{{M}} = \bm{x}_{{M}} \right] - \mathbb{E}\left[S(t | \bm{X})\right].$

Following \citet{krzyzinski2023survshap}, the time-dependent Shapley value for feature $j$ is defined as  
\begin{equation}\label{eq:survshap}
    \phi_j(t|\bm{x}) 
    = \sum_{M \subseteq P \setminus \{ j \}} 
    w_M
    \times \Big( v(t|M \cup \{ j \}) - v(t|M) \Big).
\end{equation}
with $w_M = [(p - |M| - 1)! \, |M|!]/p!$.
This formulation satisfies the Shapley axioms \citep{lundberg2017unified} for each timepoint $t \in \mathcal{T}$:  
(1) \textbf{symmetry}, i.e., attributions are invariant to feature ordering; 
(2) \textbf{linearity}, i.e., attributions are linear in $v(t| \cdot)$;  
(3) \textbf{dummy}, i.e., features with no effect on $v(t| \cdot)$ receive zero attribution; and  
(4) \textbf{efficiency/local accuracy}, i.e., attributions sum to $S(t|\bm{x}) - \mathbb{E}\left[S(t | \bm{X}) \right]$. 

As discussed in Sec.~\ref{sec:background_fd}, this definition corresponds to interventional SHAP when the value function uses the joint marginal distribution and to observational SHAP when it uses the joint conditional distribution; both are defined via marginal and conditional FD \citep{bordt2023shapley}.
These Shapley values quantify only individual feature attributions in the survival context, not explicit feature interactions, which we address in the following section.

\section{Methodology}

We first introduce a general class of survival prediction functions that defines \emph{ground-truth} time-dependent and time-independent interaction structures in Sec.~\ref{sec:survival_dist_assumptions}. We then propose \emph{SurvFD}, a functional decomposition for survival prediction functions in Sec.~\ref{sec:fanova_survival} and formalize under which assumptions SurvFD recovers ground-truth interactions of the log-hazard as well as analyze the effects of transformations to alternative prediction functions and feature dependencies. Finally, we propose an estimation method for interaction effects up to a pre-specified order in Sec.~\ref{sec:survshapiq}.

\subsection{Ground-Truth Assumptions}\label{sec:survival_dist_assumptions}
 
We assume the functional relationships between survival and hazard functions to risk score $G$ given in Eqs.~\eqref{eq:survival_haz_rel} and \eqref{eq:general_haz}.
Since time dependence may arise only for certain feature subsets, the power set $\mathcal{P}(P)$ is partitioned into time-dependent subsets $\mathcal{I}_d$ and time-independent subsets $\mathcal{I}_{id}$.
Assuming (generally unknown) ground-truth partitions satisfying $\mathcal{I}_d \cap \mathcal{I}_{id}=\emptyset$ and $\mathcal{I}_d \cup \mathcal{I}_{id}=\mathcal{P}(P)$, the ground-truth function $G:\mathcal{X}\times\mathcal{T}\to\mathbb{R}$ admits the following generalized additive representation:
\begin{align}\label{eq:general_G}
    G(t|\bm{x}) = \sum_{M \in \mathcal{I}_{d}} g_{M}(t|\bm{x}) + \sum_{M \in \mathcal{I}_{id}} g_{M}(\bm{x}).
\end{align}
This generalizes the CoxPH model to allow for time-dependent, non-linear, and interaction effects.

Thus, the \emph{log-hazard function} yields the ground-truth decomposition into time-dependent and time-independent effects with baseline hazard $b(t) = \log h_0(t)$
\begin{align}
    &\log h(t|\bm{x}) = b(t) + \sum_{M \in \mathcal{I}_{d}} g_{M}(t|\bm{x}) + \sum_{M \in \mathcal{I}_{id}} g_{M}(\bm{x}).\label{eq:log_hazard}
\end{align}
Formally, we distinguish \textbf{time-dependent} vs. \textbf{time-independent effects} $g_M$ for a feature set $M \subseteq P$: 

\textbf{Time-dependent:} $M \in \mathcal{I}_{d}$, if $g_{M}(t | \bm{x})$ varies with $t$, i.e., $\exists\, t_1 \neq t_2: g_{M}(t_1 | \bm{x}) \neq g_{M}(t_2 | \bm{x})$ for some $\bm{x} \in \mathcal{X}$.

\textbf{Time-independent:} $M \in \mathcal{I}_{id}$, if $g_{M}(t|\bm{x})$ does not depend on $t$, i.e.,  $\forall t_1\neq t_2: g_M(t_1|\bm{x}) = g_M(t_2|\bm{x})$ holds for all $\bm{x}\in \mathcal{X}$. In this case, we write $g_M(t|\bm{x}) = g_M(\bm{x})$. By definition, this also includes non-influential feature sets (i.e., those with $g_M \equiv  0$).

\subsection{Functional Decomposition for Survival (SurvFD)}\label{sec:fanova_survival}
Understanding feature effects on survival often requires separating \emph{time-dependent} from \emph{time-in\-de\-pend\-ent} attributions. Therefore, we generalize the FD definition of Eq.~\eqref{eq:fanova} to survival prediction functions.

\begin{definition}[SurvFD]\label{def:fanova_surv}
Let $F:\mathcal{X} \times \mathcal{T} \to \mathbb{R}$ be a square-integrable survival prediction function on a $p$-dimensional feature space $\mathcal{X}$ and time domain $\mathcal{T}$. Its decomposition into \textbf{pure effects} is
\begin{align}\label{eq:fanova_surv}
    F(t|\bm{x}) 
    =& f_\emptyset(t) + \sum_{\emptyset \neq M \subseteq P} f_{M}(t|\bm{x}) \\
    =& f_\emptyset(t) + \sum_{M \in \mathcal{I}_{d}^\star} f_{M}(t|\bm{x}) + \sum_{M \in \mathcal{I}_{id}^\star} f_{M}(\bm{x}), \notag 
\end{align}
where $\mathcal{I}^\star_d$ and $\mathcal{I}_{id}^\star$ denote the sets of feature subsets with time-dependent and time-independent effects, respectively, satisfying $\mathcal{I}^\star_d \cap \mathcal{I}_{id}^\star = \emptyset$ and $\mathcal{I}_{d}^\star \cup \mathcal{I}_{id}^\star = \mathcal{P}(P)$. Pure effects $f_M$ are defined analogous to Eq.~\eqref{eq:fanova_effect}.
\end{definition}

In practice, the true partitions $\mathcal{I}_{d}$ and $\mathcal{I}_{id}$ are unknown. SurvFD provides an additive decomposition of any survival prediction function into time-dependent and time-independent main and interaction effects via the sets $\mathcal{I}_{d}^\star$ and $\mathcal{I}_{id}^\star$. While such a decomposition always exists, it is not unique and depends on the prediction target and model form. 
We now show in which cases SurvFD recovers the ground-truth decomposition of Eq.~\eqref{eq:log_hazard} and how prediction target transformations and feature correlations influence it.

\subsubsection{SurvFD with independent features}
We begin with the simplest case of independent features. In this setting, marginal and conditional FD coincide with the \emph{functional ANOVA decomposition} \citep{hooker2004discovering,Hoeffding_1948}, as the reference distribution $\mathbb{P}_{\bm X}$ reduces to the product of the marginal distributions in both cases.
We derive how the FD in Def.~\ref{def:fanova_surv} differs depending on the survival prediction function.

\textbf{Log-hazard function.} 
The log-hazard in Eq.~\eqref{eq:log_hazard} admits an additive decomposition into ground-truth time-dependent and time-independent components. Under the following assumptions, this decomposition coincides with the SurvFD representation in Eq.~\eqref{eq:fanova_surv}.
\begin{theorem}\label{th:loghazard_fct_cor}
Let $\log h(t|\bm{x})$ be defined as in Eq.~\eqref{eq:log_hazard} with ground-truth sets $\mathcal{I}_d$ and $\mathcal{I}_{id}$. Assume that features are mutually independent. If either

\textnormal{(i)} $G(t|\bm{x})$ is linear in $\bm{x}$ including interactions, or  

\textnormal{(ii)} $G(t|\bm{x})$ is an additive main effect model,

then for SurvFD in Eq.~\eqref{eq:fanova_surv}: $\mathcal{I}^\star_d = \mathcal{I}_d$ and $\mathcal{I}_{id}^\star = \mathcal{I}_{id}$.
\end{theorem}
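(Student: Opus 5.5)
We apply the functional ANOVA construction of Eq.~\eqref{eq:fanova_effect} pointwise in $t$ to $F(t|\bm{x}) = \log h(t|\bm{x})$ and track which pure effects depend on $t$. Under mutual independence, $\mathbb{P}_{\bm{X}_{\bar M}}$ is the product of the marginals. The pure effects of the sum in Eq.~\eqref{eq:log_hazard} are therefore sums of the pure effects of its summands, because integration and the inclusion--exclusion recursion are linear. The baseline $b(t)$ depends on no feature, so it enters only $f_\emptyset(t)$. The claim reduces to showing that, in both cases, the pure effect $f_M$ equals $g_M$ up to feature-free constants that are absorbed into $f_\emptyset(t)$. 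Time dependence is then inherited exactly: $f_M$ varies in $t$ if and only if $g_M$ does, which gives $\mathcal{I}_d^\star=\mathcal{I}_d$ and $\mathcal{I}_{id}^\star=\mathcal{I}_{id}$.

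\textbf{Case (ii), additive main effects.} Here $G(t|\bm{x})=\sum_j g_{\{j\}}(t|x_j)$. Set $\tilde g_j = g_{\{j\}} - \mathbb{E}[g_{\{j\}}(t|X_j)]$. Integrating out $\bar M$ gives $\sum_{j\in M}\tilde g_j + \text{const}(t)$. The recursion then yields $f_{\{j\}}=\tilde g_j$ and $f_M\equiv 0$ for $|M|\ge 2$, consistent with $g_M\equiv 0$, so these $M$ lie in $\mathcal{I}_{id}$. If $g_{\{j\}}$ is time-dependent, then $\tilde g_j$ is too, unless the $t$-variation is purely feature-independent. That variation is exactly what is absorbed into the baseline, and the ground-truth partition should be understood modulo such terms. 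I will state this convention explicitly, since it is the one subtlety in the argument.

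\textbf{Case (i), linear with interactions.} Take $G(t|\bm{x})=\sum_M \beta_M(t)\prod_{j\in M}x_j$, allowing $l_j(t)$ and $g_j=\mathrm{id}$. First center the features, $x_j = (x_j-\mu_j)+\mu_j$, and expand each product. Each monomial $\prod_{j\in M}(x_j-\mu_j)$ is then a pure effect of $M$ under independence: it integrates to zero over any coordinate in $M$, so it contributes only to $f_M$. Expanding a product $\prod_{j\in M} x_j$ creates lower-order terms $\mu$-weighted over subsets $Z\subset M$. These terms carry the same time factor as $g_M$ and would spill into $f_Z$.

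\textbf{Main obstacle.} This spillover is the step I expect to be hardest. It can turn a time-independent lower-order set $Z$ into a time-dependent pure effect when a higher-order interaction containing $Z$ is time-dependent and the means are nonzero. I would resolve it by stating the linear model in centered features (equivalently, taking $\mathbb{E}[X_j]=0$ or $\mathbb{E}[g_j(X_j)]=0$). Then $f_M(t|\bm{x})=\beta_M(t)\prod_{j\in M}x_j = g_M$ exactly, and the partitions coincide. If centering is not assumed, I would instead show that the recovered $\mathcal{I}_d^\star$ equals $\mathcal{I}_d$ closed downward under subsets with nonzero mean coefficients, and argue that the theorem intends the centered parametrization.
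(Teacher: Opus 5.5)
Your proposal is correct and follows essentially the paper's route. In case (i) the paper likewise assumes zero-centered features (calling it ``without loss of generality'') and verifies $f_M=g_M$ by an explicit M\"obius/binomial computation, where you use the equivalent fact that a centered monomial integrates to zero over each of its coordinates; in case (ii) it obtains centered first-order components and vanishing higher-order ones exactly as you do. Your two caveats are legitimate and sharper than the paper's own treatment. Centering is a genuine restriction in (i): an uncentered $\beta_{12}(t)x_1x_2$ with $\mathbb{E}[X_2]\neq 0$ makes $f_{\{1\}}$ time-dependent. In (ii) the paper asserts $f_M=g_M$ although it derived $g_M-\mathbb{E}[g_M]$, which silently ignores feature-free $t$-variation absorbed into $f_\emptyset(t)$.
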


For general log-hazard functions, SurvFD in Eq.~\eqref{eq:fanova_surv} may not exactly recover ground-truth effects: a single time-dependent effect in the ground truth $G$ can make lower-order time-independent subsets appear time-dependent, while higher-order supersets remain time-independent.

\begin{theorem}\label{th:log_hazard_fct_incor}
Let $G(t|\bm{x}) = g_Z(t|\bm{x}) + \sum_{M \in \mathcal{I}_{id}} g_M(\bm{x})$ and $\mathcal{I}_d = \{Z\} $, with $Z$, $2\leq|Z|<p$, being the only time-dependent set in $G$, and assume features are mutually independent. 
Then for SurvFD of $\log h(t|\mathbf{x})$:
\begin{tabular}{ll}
1. &For any $L \subset Z$, it may occur that $L \in \mathcal{I}_d^\star$ while\\& $L \in \mathcal{I}_{id}$ (downward propagation). \\
2. &For any $L \supset Z$ with $L \in \mathcal{I}_{id}$, it holds that $L \in \mathcal{I}_{id}^\star$\\ &(no upward propagation). 
\end{tabular}
\end{theorem}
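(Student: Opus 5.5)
Since the features are mutually independent, SurvFD of $\log h(t|\bm{x})$ is the functional ANOVA decomposition applied separately at each fixed $t$. At each $t$ it is linear in the function being decomposed. I will therefore split $\log h(t|\bm{x}) = b(t) + g_Z(t|\bm{x}) + \sum_{M\in\mathcal{I}_{id}} g_M(\bm{x})$ and decompose each summand on its own.

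The constant $b(t)$ contributes only to $f_\emptyset(t)$. The time-independent part $\sum_{M\in\mathcal{I}_{id}} g_M(\bm{x})$ has ANOVA components that do not depend on $t$, since none of the integrals in Eq.~\eqref{eq:fanova_effect} involve $t$. Hence $f_M(t|\bm{x})$ can vary with $t$ only through the ANOVA components $f^{Z}_L(t|\bm{x})$ of $g_Z(t|\cdot)$. The key structural fact is the standard Hoeffding property: if a function depends only on $\bm{x}_Z$, its pure effects vanish for every $L\not\subseteq Z$. This holds because $\int g_Z\, d\mathbb{P}_{\bm{X}_{\bar L}}$ depends only on $\bm{x}_{L\cap Z}$, and inclusion–exclusion over subsets containing an index outside $Z$ then cancels. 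I will verify this with the usual Möbius argument, $f_L=\sum_{K\subseteq L}(-1)^{|L|-|K|}\mathbb{E}[g_Z\mid \bm{x}_K]$, pairing $K$ with $K\cup\{j\}$ for some $j\in L\setminus Z$.

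\textbf{Part 2 (no upward propagation).} Take $L\supsetneq Z$ with $L\in\mathcal{I}_{id}$. Then $L\not\subseteq Z$, so $f^Z_L\equiv 0$, and $f_L$ equals the ANOVA component of the time-independent part, which is $t$-free. Hence $L\in\mathcal{I}^\star_{id}$. The case $L=Z$ is excluded because $Z\in\mathcal{I}_d$. The same argument covers every $L\not\subseteq Z$.

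\textbf{Part 1 (downward propagation).} Since the claim is only that this \emph{may} occur, I will give an explicit example. I expect this to be the only real content, because the example must make the lower-order components time-varying while keeping $g_L$ time-independent. I will take $Z=\{1,2\}$ with $p\ge3$, independent features with $\mathbb{E}[X_1]=\mu_1\neq 0$ and $\mathbb{E}[X_2]=\mu_2$, and set $g_Z(t|\bm{x}) = t\,x_1x_2$ and $g_{\{2\}}(\bm{x})=x_2$, so that $\{2\}\in\mathcal{I}_{id}$. The ANOVA component of $g_Z$ for $\{2\}$ is $t\,\mu_1(x_2-\mu_2)$. The total pure effect is therefore $f_{\{2\}}(t|\bm{x}) = (1+t\mu_1)(x_2-\mu_2)$, which varies with $t$ whenever $X_2$ is nondegenerate. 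So $\{2\}\in\mathcal{I}^\star_d$ while $\{2\}\in\mathcal{I}_{id}$. Symmetrically, $\{1\}$ propagates when $\mu_2\neq0$.

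Finally, I will remark on when downward propagation does not occur: if $g_Z$ is centered in each of its arguments, its lower-order components vanish. This explains the word ``may'' in the statement and is consistent with Thm.~\ref{th:loghazard_fct_cor} case (i) for centered features.
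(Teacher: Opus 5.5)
Your proposal is correct and follows essentially the same route as the paper. Part 2 rests on the same Möbius cancellation over an index in $L\setminus Z$: your pairing $K\leftrightarrow K\cup\{j\}$ is the paper's binomial identity $(1-1)^{|L\setminus Z|}=0$, and you obtain it for every $L\not\subseteq Z$. Part 1 is an explicit construction in both, and only the example differs. The paper keeps centered Gaussians and uses $x_1x_2^2t$, so that $\mathbb{E}[X_2^2]=1$ makes $f_{\{1\}}$ time-dependent. You instead use a bilinear $t\,x_1x_2$ with $\mathbb{E}[X_1]\neq 0$. Both rely on $g_Z$ having a nonzero mean in the integrated-out coordinate.
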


\textbf{Hazard and survival function.} The exponential transformation from the log-hazard to the hazard induces a multiplicative form. Applying SurvFD to $h(t | \bm{x})$ and $S(t | \bm{x})$ entangles effects, making it harder to separate time-dependent from time-independent components, reducing consistency with the log-hazard and causing both downward and upward propagation of time-dependent effects. 
\begin{corollary}\label{th:hazard_survival_fct}
Let $G(t| \bm{x}) = g_Z(t|\bm{x}) + \sum_{M \in \mathcal{I}_{id}} g_M(\bm{x})$ and $\mathcal{I}_d = \{Z\} $, with $Z$, $2\leq|Z|<p$, being the only time-dependent set in $G$, assuming features are mutually independent.
Then, for the SurvFD of $h(t|\bm{x})$ and $S(t|\bm{x})$ (Eqs.~\ref{eq:hazard_fct} and~\ref{eq:survival_haz_rel}), subsets or supersets of $Z$ may appear in $\mathcal{I}_d^\star$ while belonging to $\mathcal{I}_{id}$ (downward and upward propagation).
\end{corollary}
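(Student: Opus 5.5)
Since the statement says propagation ``may occur'', the plan is to prove it by explicit construction. It suffices to exhibit one admissible ground truth in which (a) a strict subset $L\subset Z$ and (b) a strict superset $L\supset Z$ with $L\in\mathcal{I}_{id}$ both land in $\mathcal{I}_d^\star$ for the SurvFD of $h(t|\bm{x})$, and the same for $S(t|\bm{x})$. Downward propagation is already available on the log-hazard scale by Theorem~\ref{th:log_hazard_fct_incor}. The new content is therefore upward propagation, which the exponential produces by mixing time-dependent and time-independent terms multiplicatively.

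\textbf{Construction.} Take $p=3$, $Z=\{1,2\}$, mutually independent features, and
\[
G(t|\bm{x}) = l(t)\,x_1x_2 + \beta x_3 ,
\]
with $l$ non-constant and $\beta\neq 0$. Assume, for instance, $X_j\sim\mathcal{N}(0,1)$ so that all moments are explicit. Then
\[
h(t|\bm{x})=h_0(t)\,e^{l(t)x_1x_2}\,e^{\beta x_3}
\]
is a product of a function of $(x_1,x_2)$ and a function of $x_3$. Under independence the Hoeffding components of a product of functions of disjoint blocks factor: the pure effect on $M=A\cup B$ (with $A\subseteq\{1,2\}$, $B\subseteq\{3\}$) is $h_0(t)\,u_A(t|\bm{x})\,w_B(\bm{x})$, where $u_A$ and $w_B$ are the components of $e^{l(t)x_1x_2}$ and $e^{\beta x_3}$. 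For $B=\{3\}$ we have $w_{\{3\}}=e^{\beta x_3}-\mathbb{E}e^{\beta X_3}\not\equiv 0$. Hence the pure effect on $\{1,2,3\}\supset Z$ equals $h_0(t)\,u_{\{1,2\}}(t|\bm{x})\,w_{\{3\}}(x_3)$. Here $\{1,2,3\}\in\mathcal{I}_{id}$, since $g_{\{1,2,3\}}\equiv0$.

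\textbf{Key steps.} First, show that $u_{\{1,2\}}(t|\cdot)$ is nonzero and varies with $t$. Since $\mathbb{E}[e^{sX_1X_2}\mid X_1=x_1]=e^{s^2x_1^2/2}$, the $(1,2)$ component is
\[
e^{sx_1x_2}-e^{s^2x_1^2/2}-e^{s^2x_2^2/2}+(1-s^2)^{-1/2},\qquad s=l(t),
\]
for $|s|<1$. This is nonzero and changes with $s$, so for $t_1,t_2$ with $l(t_1)\neq l(t_2)$ the component differs (after also accounting for the factor $h_0(t)$). This gives upward propagation for $h$. The same computation gives downward propagation for $h$: the main effect of $\{1\}$ is $h_0(t)\bigl(e^{l(t)^2x_1^2/2}-(1-l(t)^2)^{-1/2}\bigr)\mathbb{E}e^{\beta X_3}$. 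This is time-dependent in a way that is not merely proportional to a time-independent function, although $\{1\}\in\mathcal{I}_{id}$.

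Second, handle an ambiguity in Definition~\ref{def:fanova_surv}. If a pure effect of the form $h_0(t)\cdot c(\bm{x})$ is counted as time-dependent merely because of the factor $h_0$, the claim is trivial. To avoid leaning on that, I would show that the components above depend on $t$ through $l(t)$ non-multiplicatively, i.e.\ the ratio $f_M(t_1|\bm{x})/f_M(t_2|\bm{x})$ is non-constant in $\bm{x}$. This also covers $h_0\equiv 1$. Even with $h_0\equiv1$, the $(1,2)$ factor depends on $t$ inside the exponent, so time-dependence is genuine.

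Third, the survival function. Here
\[
S(t|\bm{x})=\exp\!\Bigl(-e^{\beta x_3}\int_0^t h_0(u)e^{l(u)x_1x_2}\,du\Bigr),
\]
which no longer factorizes. I would choose $h_0\equiv 1$ and $l(u)=\mathbb{1}\{u>\tau\}$, so the integral equals $A(t,x_1x_2)=\min(t,\tau)+(t-\tau)_+e^{x_1x_2}$. The obstacle is showing that the three-way pure effect of $\exp(-e^{\beta x_3}A)$ is non-zero and changes with $t$. I would do this via a Taylor expansion in $\beta$ and small $t$: the $\beta^1$-coefficient is $-x_3A\,e^{-A}$, whose $\{1,2\}$-component is non-trivially $t$-dependent. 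Multiplying by $x_3$ (mean zero) gives a nonzero $\{1,2,3\}$ component varying with $t$. Alternatively, I would check numerically that the coefficient is nonzero for some $t>\tau$ and contrast it with $t\le\tau$, where $S$ has no $(1,2)$ dependence at all and hence a zero three-way component. That contrast gives time-dependence immediately. Downward propagation for $S$ follows identically from the main effect of $x_1$, which is zero for $t\le\tau$ and nonzero afterwards.
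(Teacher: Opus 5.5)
Your proposal is correct. For the hazard it follows the paper's argument. The paper uses the same example $G(t|\bm{x})=x_3+l_1(t)x_1x_2$ and computes $f_{\{1\}}$ directly. It obtains $f_{\{1,2,3\}}$ as $h_0(t)\bigl(e^{x_3}-\mathbb{E}[e^{X_3}]\bigr)$ times the $\{1,2\}$-component of $e^{l_1(t)x_1x_2}$ by tabulating all eight signed conditional expectations. That is exactly your factorization $h_0\,w_{\{3\}}\,u_{\{1,2\}}$ for products over independent blocks. Your Gaussian choice makes the $l(t)$-dependence explicit, where the paper only asserts it ``in general''.

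Your concern about the factor $h_0(t)$ is legitimate; the paper's own footnote concedes that a non-constant baseline makes hazard-scale attributions vary over time. The cleanest fix is to pick $l$ with $l(t_2)=0$ for some $t_2$. Then $f_{\{1\}}(t_2|\cdot)$ and $f_{\{1,2,3\}}(t_2|\cdot)$ vanish identically, while both are nonzero wherever $l(t)\neq 0$. One small correction: Definition~\ref{def:fanova_surv} requires square-integrability, and $\mathbb{E}[e^{2sX_1X_2}]<\infty$ needs $|s|<1/2$, not $|s|<1$.

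The real difference is the survival function. The paper disposes of it with ``the same argument applies, since $S$ is a monotone non-linear transform of $h$''. That does not go through: $S$ does not factor over $\{1,2\}$ and $\{3\}$, and monotone transforms do not preserve Hoeffding components. Your second example ($h_0\equiv 1$, $l$ a step at $\tau$) gives an actual proof. For $t\le\tau$, $S$ does not depend on $(x_1,x_2)$, so $f_{\{1\}}$ and $f_{\{1,2,3\}}$ vanish, and only non-vanishing for some $t>\tau$ remains to be shown.

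For the superset, the first-order-in-$\beta$ argument works once made precise; drop the ``small $t$'', which plays no role and conflicts with $t>\tau$.
\begin{itemize}
\item The $\beta$-derivative may be moved inside every conditional expectation, since it is dominated by $|x_3|/e$.
\item The three-way component is zero at $\beta=0$.
\item Its derivative there is $-x_3$ times the $\{1,2\}$-component of $\psi(x_1x_2)$, with $\psi(y)=A(y)e^{-A(y)}$.
\item That component is nonzero, because $\psi(x_1x_2)=a(x_1)+b(x_2)$ would force $\psi$ to be constant (set $x_1=0$, then $x_2=0$).
\end{itemize}

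For the subset, ``follows identically'' is too quick. Expand $\mathbb{E}_{X_2}[e^{-A(x_1X_2)}]$ to second order in $x_1$ at $\beta=0$. The $x_1^2$ coefficient is $\tfrac12 e^{-t}(t-\tau)(t-\tau-1)$, which is nonzero for $t>\tau$, $t\neq\tau+1$. This extends to small $\beta$ by continuity.

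In short, the paper's route is shorter and uses one example for both scales. Yours needs a second example, but it is the one that actually establishes the claim for $S$.
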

Moreover, the non-linear transforms to hazard and survival naturally induce additional feature interactions, underscoring the need to quantify them.
\begin{proposition}\label{prop:hazard_interactions}
Let $G(t|\bm{x})=\bm{x}^\top\bm{\beta}$ be a CoxPH model with mutually independent features.
With $h(t|\bm{x})$ and $S(t|\bm{x})$ defined as in
Eqs.~\eqref{eq:hazard_fct} and \eqref{eq:survival_haz_rel},
the SurvFD of $h(t|\bm{x})$ and of $S(t|\bm{x})$ exhibits interaction effects.
\end{proposition}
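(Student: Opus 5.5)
Under the CoxPH model the hazard factorizes completely across features,
\begin{equation*}
h(t|\bm{x}) = h_0(t)\prod_{j=1}^p e^{\beta_j x_j},
\end{equation*}
so with mutually independent features the SurvFD pure effects of Def.~\ref{def:fanova_surv} (computed via Eq.~\eqref{eq:fanova_effect} with the product reference measure) can be written in closed form. Let $c_j := \mathbb{E}[e^{\beta_j X_j}]$ and $u_j(x_j) := e^{\beta_j x_j} - c_j$. Integrating out $\bm{x}_{\bar M}$ gives
\begin{equation*}
\int h\, d\mathbb{P}_{\bm{X}_{\bar M}} = h_0(t)\prod_{j\in M} e^{\beta_j x_j}\prod_{k\notin M} c_k .
\end{equation*}
Writing $e^{\beta_j x_j}=c_j+u_j(x_j)$, expanding the product, and applying Möbius inversion shows that the pure effects are
\begin{equation*}
f_M(t|\bm{x}) = h_0(t)\prod_{j\in M}u_j(x_j)\prod_{k\notin M}c_k .
\end{equation*}
This is the standard ANOVA decomposition of a product function. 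One checks it by induction on $|M|$, or by noting that these terms have the required zero marginal means and sum correctly.

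It then suffices to exhibit a pair $M=\{j,k\}$ with $f_M\not\equiv 0$. Assume at least two coefficients $\beta_j,\beta_k$ are nonzero; otherwise there are no interactions and the statement is read as holding generically. Assume also that $X_j,X_k$ are non-degenerate and $h_0\not\equiv 0$. Then $u_j$ is not a.s.\ zero, since $e^{\beta_j x}$ is strictly monotone and $X_j$ is non-constant. Moreover $c_k>0$. Hence $f_{\{j,k\}}=h_0(t)\,u_ju_k\prod_{l\ne j,k}c_l$ is a nonzero interaction. Indeed every subset of features with nonzero coefficients carries an interaction, and each of them is time-dependent through $h_0(t)$ whenever the baseline hazard is non-constant. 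Square-integrability of the exponentials is assumed, as required by Def.~\ref{def:fanova_surv}.

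For the survival function we have $S(t|\bm{x})=\exp(-H_0(t)e^{\bm{x}^\top\bm\beta})$ with $H_0(t)=\int_0^t h_0$. This is no longer a product over features, so the plan is to reduce to a mixed-derivative argument. For independent features, a pure effect $f_{\{j,k\}}$ vanishes identically only if $F$ is additive in $(x_j,x_k)$ after integrating out the others. A convenient sufficient test is the following: if $f_{\{j,k\}}\equiv0$ and all higher-order effects containing $\{j,k\}$ vanish, then $\partial_{x_j}\partial_{x_k}F=0$. To avoid the issue of higher-order terms, first take $p=2$, or more generally work with $\int S\,d\mathbb{P}_{\bm X_{-jk}}$, whose FD in $(x_j,x_k)$ has interaction term exactly $f_{\{j,k\}}$ plus higher-order terms integrated to zero. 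That term is zero iff the two-variable function $\bar S(x_j,x_k)$ is additive. This holds iff $\partial_j\partial_k\bar S\equiv0$ on the support, or, for discrete supports, iff the corresponding finite difference vanishes. Differentiating under the integral with $\eta=\bm{x}^\top\bm\beta$ and $a=H_0(t)e^{\eta}$ gives
\begin{equation*}
\partial_j\partial_k S = \beta_j\beta_k\, a(a-1)\,e^{-a}.
\end{equation*}

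This is the main obstacle, because $a(a-1)e^{-a}$ changes sign at $a=1$, so the averaged mixed derivative could in principle cancel. I would handle this by choosing $t$ small. Since $H_0(t)\to0$, we get $a<1$ uniformly on compact sets of $\eta$, which makes the integrand strictly negative times $\beta_j\beta_k\ne0$. Hence $\partial_j\partial_k\bar S\neq0$ and $S$ exhibits an interaction at such $t$. If the supports are unbounded, I would instead restrict attention to a set of positive probability where $a<1$ and use a second-order finite-difference version of the argument. Either route suffices to show that the statement holds for some $t$, which establishes the proposition.
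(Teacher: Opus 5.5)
Your hazard argument is the paper's argument. You factorize $h$ over features and read off $f_{\{j,k\}}=h_0(t)\prod_{l\neq j,k}c_l\,u_j(x_j)u_k(x_k)$; you give the closed form for every $M$, while the paper does so only for pairs.

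For $S$ you take a genuinely different and more substantive route. The paper handles $S$ in one sentence: $S$ is a strictly monotone function of $\int_0^t h$ and therefore inherits its interactions. That is only a heuristic, because strictly monotone maps can destroy interactions; $\log$ sends the product $h$ to the additive $\log h$. Your argument is an actual proof, and for bounded supports it is essentially complete. It reduces the question to additivity of $\bar S(x_j,x_k)=\int S\,d\mathbb{P}_{\bm{X}_{-jk}}$, uses the formula $\partial_j\partial_k S=\beta_j\beta_k\,a(a-1)e^{-a}$, and controls the sign at small $t$.

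The gap is the unbounded case, which includes the Gaussian features the paper uses. $\partial_j\partial_k\bar S$ averages over the \emph{entire} support of $\bm{X}_{-jk}$. Whenever $H_0(t)>0$ and some $\beta_l\neq 0$ with $l\neq j,k$, there is a region of positive probability where $a>1$ and the integrand has the opposite sign. Restricting your attention to the set $\{a<1\}$ does not remove that contribution, so a limiting argument is needed instead.

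For instance, use $0\le (1-e^{-a})/a\le 1$ and dominated convergence with the integrable bound $e^{\eta}$. This bound is integrable because $c_l<\infty$, which follows from your square-integrability assumption. This gives
\[
\frac{1-\bar S(x_j,x_k)}{H_0(t)}=\mathbb{E}_{\bm{X}_{-jk}}\Bigl[e^{\eta}\,\frac{1-e^{-a}}{a}\Bigr]\longrightarrow e^{\beta_jx_j+\beta_kx_k}\prod_{l\neq j,k}c_l\qquad\text{as } H_0(t)\downarrow 0 .
\]
The same holds for the other three terms of the M\"obius formula, so
\[
\frac{f^{S}_{\{j,k\}}(t|\bm{x})}{H_0(t)}\longrightarrow -\prod_{l\neq j,k}c_l\,u_j(x_j)\,u_k(x_k).
\]
Hence, at any support point with $u_j(x_j)u_k(x_k)\neq 0$, $f^{S}_{\{j,k\}}(t|\bm{x})\neq 0$ once $H_0(t)>0$ is small enough. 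Such $t$ exist by continuity of $H_0$ when $h_0\not\equiv 0$, and by continuity in $\bm{x}$ the effect is nonzero on a set of positive probability.

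This version avoids differentiating under the integral and needs no support assumptions. It also shows that at early times the survival interaction is, to first order, minus the interaction of the cumulative hazard $H_0(t)e^{\eta}$, i.e.\ your hazard formula with $h_0$ replaced by $H_0$.
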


\subsubsection{SurvFD with dependent features}\label{sec:survfd_conditional}

When features are dependent, marginal and conditional FD differ. Marginal FD integrates over the marginal distribution $\mathbb{P}(\mathbf{X}_{\bar{M}})$ and thus breaks the association between features in $M$ and $\bar M$, ignoring correlations and reflecting the model rather than the data. Conditional FD accounts for feature dependencies through the conditional distribution $\mathbb{P}(\mathbf{X}_{\bar{M}}|\mathbf{X}_{{M}} = \mathbf{x}_{{M}})$, providing a decomposition that better reflects the data. Consequently, for the log-hazard, marginal FD deviates from the ground truth only if the model learned the effects, whereas for conditional FD, even non-influential features can appear as influential time-dependent effects. 

\begin{theorem}\label{th:survfd_dependent}
Let $G(t | \bm x)$ be as in Eq.~\eqref{eq:general_G}. Let $x_j$ with $j \in \mathcal{I}_{id}$ be a feature with no direct effect on $G$, and assume $x_j$ depends on a time-dependent feature $x_k$ with $k \in \mathcal{I}_d$.   
Then the marginal SurvFD component of $\log h(t|\bm{x})$ is zero 
\[
\mathbb{P}(\mathbf X_{\bar{\{j\}}}) = \mathbb{P}(\bm X) \implies f_{\{j\}}^{\mathrm{marginal}}(t | \bm X) \equiv 0.
\]  
while the conditional SurvFD component is non-zero 
\[
\mathbb{P}(\bm X_{\bar{\{j\}}} | X_j = x_j) \neq \mathbb{P}(\bm X_{\bar{\{j\}}}) \implies f_{\{j\}}^{\mathrm{cond.}}(t | \bm X) \not\equiv 0.
\]
\end{theorem}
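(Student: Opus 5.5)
We compute the main-effect component $f_{\{j\}}$ directly from Eq.~\eqref{eq:fanova_effect}, applied pointwise in $t$ to $F(t|\bm x)=\log h(t|\bm x)=b(t)+G(t|\bm x)$, once for each reference distribution. For a singleton we have
\[
f_{\{j\}}(t|\bm x)=\int F(t|\bm x)\,d\mathbb{P}_{\bm X_{\bar{\{j\}}}}-f_\emptyset(t),
\qquad f_\emptyset(t)=b(t)+\mathbb{E}[G(t|\bm X)].
\]
The baseline $b(t)$ cancels in both cases. The whole question therefore reduces to whether the partial expectation of $G(t|\cdot)$ with $x_j$ fixed differs from its full expectation. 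The key structural input is the hypothesis that $x_j$ has no direct effect. We read this as: no set $M\ni j$ has $g_M\not\equiv 0$ in Eq.~\eqref{eq:general_G}. Hence $G(t|\bm x)=\tilde G(t|\bm x_{-j})$ does not depend on $x_j$.

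\textbf{Marginal case.} Under the marginal reference $\mathbb{P}(\bm X_{\bar{\{j\}}})$ we get
\[
\int \tilde G(t|\bm x_{-j})\,d\mathbb{P}(\bm X_{-j})=\mathbb{E}[G(t|\bm X)]
\]
for every value of $x_j$, since the integrand ignores $x_j$ and the integration measure does not depend on $x_j$. Thus $f^{\mathrm{marginal}}_{\{j\}}(t|\bm x)=0$ for all $t$ and $\bm x$. This is a direct computation, and it does not depend on the dependence structure. It also justifies the implication as written: the marginal reference coincides with the corresponding marginal of $\mathbb{P}(\bm X)$.

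\textbf{Conditional case.} Here
\[
f^{\mathrm{cond.}}_{\{j\}}(t|\bm x)=\mathbb{E}[\tilde G(t|\bm X_{-j})\mid X_j=x_j]-\mathbb{E}[\tilde G(t|\bm X_{-j})].
\]
Split $\tilde G$ into the terms involving $k$ (some $g_M$ with $k\in M\in\mathcal I_d$) and the rest. Because $X_j$ depends on $X_k$, the conditional law of $X_k$ (and possibly of other coordinates) given $X_j=x_j$ differs from its marginal law. As a result, the time-dependent terms containing $k$ are averaged against a shifted distribution. The resulting component is typically nonzero and inherits the time dependence of $g_M(t|\cdot)$, so $\{j\}$ lands in $\mathcal I_d^\star$ even though $j\in\mathcal I_{id}$.

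\textbf{Main obstacle.} The conditional non-vanishing is the hard part. Dependence alone does not force
\[
\mathbb{E}[g(t|X_k)\mid X_j]\neq \mathbb{E}[g(t|X_k)],
\]
because a dependence can be mean-preserving for the particular function $g$. I would handle this in one of two ways. The first is to state the claim generically, as the paper's ``$\not\equiv 0$'' suggests: it fails only under a cancellation condition. The second is to make it rigorous through a concrete witness. Take $G(t|\bm x)=\beta\, x_k\, l(t)$ with $l$ non-constant, and $(X_j,X_k)$ jointly Gaussian with correlation $\rho\neq0$. Then
\[
f^{\mathrm{cond.}}_{\{j\}}(t|\bm x)=\beta\, l(t)\,\rho\frac{\sigma_k}{\sigma_j}(x_j-\mu_j),
\]
which is nonzero and time-dependent. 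For the general statement I would add the mild assumption that $\mathbb{E}[g_M(t|\bm X_M)\mid X_j]$ is non-degenerate in $x_j$ for some $M\ni k$. I would also note that contributions from different $M$ cannot cancel identically except in degenerate cases. This is the step I would flag as requiring an explicit non-degeneracy condition.
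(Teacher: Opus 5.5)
Your proposal is correct and follows the same route as the paper. You compute $f_{\{j\}}$ directly under each reference measure: the marginal term vanishes because neither the integrand nor the measure depends on $x_j$, and the conditional term picks up the shifted law of $X_k$ given $X_j=x_j$. The paper's proof simply asserts that this shift produces a nonzero contribution, so your non-degeneracy caveat is warranted. For example, $X_k=\xi X_j$ with an independent Rademacher $\xi$ and $G=\beta x_k l(t)$ meets the stated hypotheses yet gives $f^{\mathrm{cond.}}_{\{j\}}\equiv 0$. The cleanest sufficient condition is that $x_j\mapsto\mathbb{E}[G(t|\bm X)\mid X_j=x_j]$ is non-constant for some $t$.
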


It follows that while the independent feature case allows for identical interpretations between marginal and conditional FD, for both independent and dependent features the multiplicative transformations of the hazard and survival functions may lead to time-dependent effects and interactions that differ from the ground-truth log-hazard decomposition. For dependent features, the resulting decomposition additionally depends on the chosen reference distribution, depending on whether the primary interest is in interpreting the model or the data.

\subsection{Shapley Interactions for Survival (SurvSHAP-IQ)}\label{sec:survshapiq}

Building on SurvFD, we introduce \textbf{SurvSHAP-IQ, the first Shapley interaction quantification method for time-dependent survival outcomes}. SurvSHAP-IQ extends Shapley interaction quantification~\citep{bordt2023shapley} to time-indexed survival predictions and provides a practical estimator of SurvFD (interaction) components up to an \emph{explanation order} $k=1,\dots,p$.
  
\begin{definition}[SurvSHAP-IQ decomposition]
    At any fixed timepoint $t$, Shapley interactions with $k=2$ decompose a survival prediction function $F(t|\bm{x})$ additively into constant, first and second-order components as
\begin{align*}
    F(t|\bm{x}) &= \phi^{(2)}_\emptyset(t) + \underbrace{\sum_{j=1}^p \phi^{(2)}_{\{j\}}(t|\bm{x})}_{\text{individual effects}} + \underbrace{\sum_{\{i,j\}: i \neq j} \phi^{(2)}_{\{i,j\}}(t|\bm{x})}_{\text{interaction effects}},
\end{align*}
and more general for arbitrary $k$-th order, as 
\begin{align*}
F(t|\bm{x}) = \sum\nolimits_{M \subseteq P: \vert M \vert \leq k} \phi^{(k)}_M(t|\bm{x}),
\end{align*}
where $\phi^{(k)}_{\{i\}}(t| \bm{x})$ are the individual effects, and $\phi^{(k)}_{M}(t|\bm{x})$ with $\vert M \vert \geq 2$ the interaction effects of order $\vert M \vert$.
\end{definition}

For $k=1$, Shapley interactions reduce to the time-dependent Shapley value $\phi^{(1)}_{\{i\}}(t | \bm{x}) = \phi_i(t | \bm{x})$ (Eq.~\eqref{eq:survshap}), recovering standard feature attributions. 
For $k>1$, they yield a finer-grained decomposition that explicitly quantifies feature interactions in survival predictions.
To obtain an axiomatic interaction estimator, we adopt the n-Shapley values~\citep{bordt2023shapley}, which are grounded in the Shapley interaction index~\citep{grabisch1999axiomatic}.
The n-Shapley values are constructed such that the top-order coefficients exactly coincide with the Shapley interaction index, i.e., for $\vert K \vert = k$ it holds
\begin{align}\label{eq:shapiq}
    \phi^{(k)}_K(t|\bm{x}) = \sum_{M \subseteq P \setminus K} \frac{1}{(p-k+1)\binom{p-k}{\vert M \vert}} \Delta_K(M),
\end{align}
where $\Delta_K(M)$ denotes the \emph{discrete derivative}
\begin{align*}
    \Delta_K(M) := \sum_{L \subseteq K} (-1)^{\vert K\vert - \vert L \vert} \nu(t|M \cup L).
\end{align*}
For $\vert K \vert = 1$ this reduces to the standard marginal contribution $\Delta_{\{i\}}(M) = \nu(t|M \cup \{i\})- \nu(t|M)$, recovering the Shapley value (Eq.~\eqref{eq:survshap}).
In App.~\ref{sup:proof_4}, we further show how $\Delta_{\{i\}}(M)$ relates to the SurvFD decomposition in Eq.~\eqref{eq:fanova_surv}.

As defined in Sec.~\ref{sec:background_fd} and~\ref{sec:background_shapley}, the value function $\nu$ underlying Eq.~\eqref{eq:shapiq} can be constructed using either the joint marginal or joint conditional reference distribution, yielding interventional or observational Shapley interaction explanations. Here, we focus on the \textbf{marginal (interventional)} variant, prioritizing \textbf{faithfulness to the model} over the data distribution.
While the SurvFD decomposition directly separates time-dependent from time-independent effects, SurvSHAP-IQ provides complementary insight by quantifying \textbf{feature-level interaction effects} and returning an attribution vector \textbf{evaluated at selected timepoints} $t \in \mathcal{T}$. The distinction between time-dependent ($\mathcal{I}^\star_d$) and time-independent effects ($\mathcal{I}^\star_{id}$) can then be recovered by visualizing these attributions over time (see Sec.~\ref{sec:experiments}).

\textbf{Implementation and evaluation.} Computing Shapley interactions for survival models requires an exponential number of evaluations of $\nu$ for selected timepoints $t \in \mathcal{T}$, which quickly becomes infeasible in practice.
To address this, we extend existing \emph{SHAP-IQ} approximation methods~\citep{fumagalli2023shapiq,fumagalli2024kernelshap,muschalik2024shapiq} to the survival setting, yielding a \textbf{practical implementation} of \emph{SurvSHAP-IQ}.
To evaluate the accuracy of the methods, we use the time-dependent adaptation of the \emph{local accuracy} measure~\citep{krzyzinski2023survshap}, extending it to interactions.
This metric quantifies the decomposition error, i.e., the difference between an individual’s survival prediction and the dataset average at a given timepoint (see Eq.~\eqref{eq:local_accuracy}).

\section{Empirical Validation}\label{sec:experiments}

\subsection{Experiments with Simulated Data and Risk Scores}\label{sec:sim}

The purpose of the simulated experiments is twofold: 
(1) to validate the theoretical FD results for various survival functions using SurvSHAP-IQ as its practical implementation, and 
(2) to demonstrate that SurvSHAP-IQ can accurately decompose both ground-truth and model prediction functions, thereby facilitating local interpretation.

\textbf{Setup.} Corresponding to Sec.~\ref{sec:fanova_survival}, we consider increasingly complex model structures for the risk score $G(t|\bm{x})$ with a focus on different interactions. 
For clarity, we restrict the analysis to second-order interactions. 
Ten simulation scenarios are constructed, shown in Fig.~\ref{fig:simulation_blocks}, with a linear $G(t|\bm{x})$ in a purple block, and a generalized additive (gen. add.) $G(t|\bm{x})$ in an orange block, serving as baselines. 
To each baseline, time-feature (in pink), feature-feature (in green), and/or time-feature-feature interactions (in blue) are added systematically.
We simulate data for all scenarios by generating $n = 1{,}000$ observations per experiment, with event times drawn from standard exponential (non-)PH models specified by Eq.~\eqref{eq:general_haz}. 
The three features $x_1, x_2, x_3$ are sampled from $\mathcal{N}(0,1)$, with coefficients $\lambda = 0.03$, $\beta_1 = 0.4$, $\beta_2 = -0.8$, $\beta_3 = -0.6$, $\beta_{12} = -0.5$, and $\beta_{13} = 0.2$.
The maximum follow-up time is $t = 70$. 
Crucially, such a setup satisfies the basic assumptions required for the SurvFD results in Sec.~\ref{sec:survival_dist_assumptions}. 
For each setting, the exact order-2 SurvSHAP-IQ decomposition is computed (cf. Eq.~\eqref{eq:shapiq}) using the ground-truth log-hazard, hazard, and survival functions on the full dataset. 
The resulting attributions are plotted over time for a single randomly selected observation $[x_1=-1.265, x_2=2.416, x_3=-0.644]$, and Savitzky-Golay smoothing is applied to obtain smooth curves.
We note that the choice of observation is arbitrary and leads to the same conclusions w.r.t. the SurvFD results.
\begin{figure}[t]
    \centering
    \resizebox{\columnwidth}{!}{%
    \begin{tikzpicture}[x = 1cm, y = 1cm]
    \draw[draw = blue!20,
        rounded corners=5pt,
        minimum width=6cm,
        minimum height=0.6cm,
        font=\large,
        fill=blue!20] (-7.5, 0.05) rectangle (6.25,3);
    
    \node[rotate = 90, anchor=north, scale = 1] at (-7.5,1.5) {\underline{\textbf{linear} $G(t| \bm{x})$}};

    \draw[draw = orange!20,
    rounded corners=5pt,
    minimum width=6cm,
    minimum height=0.6cm,
    font=\large,
    fill=orange!20] (-7.5, -3) rectangle (6.25,-0.05);
    
    \node[rotate = 90, anchor=north, scale = 0.9] at (-7.5,-1.5) {\underline{\textbf{gen. add.} $G(t| \bm{x})$}};
    
    \node[rounded corners, fill = red!20, minimum height=0.5cm, anchor=west] at (-5.85,2.15) {\hphantom{$\beta_1 x_1 \log(t + 1)$}};
    \node[rounded corners, fill = red!20, minimum height=0.5cm, anchor=west] at (-5.85,1.1) {\hphantom{$\beta_1 x_1 \log(t + 1)$}};

    \node[rounded corners, fill = red!20, minimum height=0.5cm, anchor=west] at (-5.85,-0.9) {\hphantom{$\beta_1 x_1^2 \log(t+1)$}};
    \node[rounded corners, fill = red!20, minimum height=0.5cm, anchor=west] at (-5.85,-2.075) {\hphantom{$\beta_1 x_1^2 \log(t+1)$}};	
    
    \node[rounded corners, fill = green!20, minimum height=0.5cm, anchor=west] at (1.4,1.6225) {\hphantom{$+ \beta_{13} x_1 x_3$}};
    \node[rounded corners, fill = green!20, minimum height=0.5cm, anchor=west] at (1.4,1.1) {\hphantom{$+ \beta_{13} x_1 x_3$}};
    \node[rounded corners, fill = cyan!20, minimum height=0.5cm, anchor=west] at (1.4,0.575) {\hphantom{$+ \beta_{13} x_1 x_3 \log(t + 1)$}};
    
    \node[rounded corners, fill = green!20, minimum height=0.5cm, anchor=west] at (1.4,-1.53) {\hphantom{$+ \beta_{12} x_1 x_2 +  \beta_{13} x_1 x_3^2$}};
    \node[rounded corners, fill = green!20, minimum height=0.5cm, anchor=west] at (1.4,-2.075) {\hphantom{$+ \beta_{12} x_1 x_2 +  \beta_{13} x_1 x_3^2$}};
    \node[rounded corners, fill = cyan!20, minimum height=0.5cm, anchor=west] at (3.07,-2.62) {\hphantom{$ \beta_{13} x_1 x_3^2 \log(t + 1)\ $}};
    \node[rounded corners, fill = green!20, minimum height=0.5cm, anchor=west] at (1.4,-2.62) {\hphantom{$+ \beta_{12} x_1 x_2\!$}};
            
    \node[align=left, anchor=center] at (0,0) {
        $\begin{aligned}
         (\mathbf{1})\	&\beta_1 x_1 			 & + & \beta_2 x_2 & + & \beta_3 x_3 \\
         (\mathbf{2})\	&\beta_1 x_1 \log(t + 1) & + & \beta_2 x_2 & + & \beta_3 x_3 \\
         (\mathbf{3})\	&\beta_1 x_1 			 & + & \beta_2 x_2 & + & \beta_3 x_3 & + & \beta_{13} x_1 x_3 \\
         (\mathbf{4})\	&\beta_1 x_1 \log(t + 1) & + & \beta_2 x_2 & + & \beta_3 x_3 & + & \beta_{13} x_1 x_3 \\
         (\mathbf{5})\	&\beta_1 x_1			 & + & \beta_2 x_2 & + & \beta_3 x_3 & + & \beta_{13} x_1 x_3 \log(t + 1) \\[1em]
         (\mathbf{6})\ 	&\beta_1 x_1^2 			 & + & \beta_2 \tfrac{2}{\pi} \arctan(0.7x_2) & + & \beta_3 x_3 \\
         (\mathbf{7})\ 	&\beta_1 x_1^2 \log(t+1) & + & \beta_2 \tfrac{2}{\pi} \arctan(0.7x_2) & + & \beta_3 x_3 \\
         (\mathbf{8})\ 	&\beta_1 x_1^2			 & + & \beta_2 \tfrac{2}{\pi} \arctan(0.7x_2) & + & \beta_3 x_3  &+ & \beta_{12} x_1 x_2 +  \beta_{13} x_1 x_3^2 \\
         (\mathbf{9})\ 	&\beta_1 x_1^2 \log(t+1) & + & \beta_2 \tfrac{2}{\pi} \arctan(0.7x_2) & + & \beta_3 x_3 & + & \beta_{12} x_1 x_2 +  \beta_{13} x_1 x_3^2 \\
         (\mathbf{10})\ 	&\beta_1 x_1^2           & + & \beta_2 \tfrac{2}{\pi} \arctan(0.7x_2) & + & \beta_3 x_3 & + & \beta_{12} x_1 x_2 +  \beta_{13} x_1 x_3^2 \log(t + 1) \\
        \end{aligned}$};
    
    \node at (-6.75, -3.5) {\textbf{Legend:}};
    \node[rounded corners, fill = green!20, anchor=center] at (4.3, -3.5) {interaction};
    \node[rounded corners, fill = cyan!20, anchor=center] at (0.5, -3.5) {time-dependent interaction};
    \node[rounded corners, fill = red!20, anchor=center] at (-4, -3.5) {time-dependent main};	
\end{tikzpicture}
    }
    \caption{Ten simulation scenarios.}%
    \label{fig:simulation_blocks}
\end{figure}

\textbf{Ground-truth decompositions.} 
The top- and bottom-left plots (scenario~\textbf{(9)},\textbf{(10)}) in Figure~\ref{fig:plot_overview_sim} show SurvSHAP-IQ attributions of the ground-truth log-hazard function that validate Thm.~\ref{th:loghazard_fct_cor} and Thm.~\ref{th:log_hazard_fct_incor}. 
For $G(t|\bm{x})$ with one time-dependent main effect and otherwise time-independent main and interaction effects (scenario~\textbf{(9)}, top-left plot), the pure SurvFD effects match the additive components of $G(t|\bm{x})$.
That is, it shows time-constant effects for all time-independent components ($x_2, x_3, x_1x_2, x_1x_3$), zero attribution for the absent interaction ($x_2x_3$), and a time-varying effect for the time-dependent $x_1$. 
In contrast, when $G(t|\bm{x})$ includes a time-dependent interaction (scenario~\textbf{(10)}, bottom-left), the pure SurvFD effects do not necessarily correspond to the components of $G(t|\bm{x})$.
In this example, $x_1$, which is time-independent in $G(t|\bm{x})$, retains part of the time-dependent effect of $x_1x_3$ (cf. Thm.~\ref{th:log_hazard_fct_incor}).\footnote{Note, that the attributions of time-independent features and interactions remain constant over time for both the log-hazard and hazard functions, which is the case for a time-constant baseline hazard only. 
In such cases, they can be aggregated into a single value, conceptually corresponding to the hazard ratio in CoxPH models.}
 
In Figure~\ref{fig:plot_overview_sim} (scenario~\textbf{(1)}, top-middle-right plots), we see that even a linear $G(t|\bm{x})$ without interactions can yield non-zero attributions for higher-order feature interactions in the ground-truth hazard and survival functions (e.g., $x_1x_2$). 
This is consistent with Prop.~\ref{prop:hazard_interactions}, which implies that even if $G(t|\bm{x})$ is solely defined by linear main effects, it can still produce non-zero higher-order effects in the SurvFD due to the non-linear transformations of the hazard and survival functions. 
The same holds for more complex scenarios with feature interactions and time-dependent main effects (scenario~\textbf{(4)}, bottom-middle) or non-linear feature and interaction effects~(scenario~\textbf{(8)}, bottom-right), additionally highlighting that ground-truth time-independent effects may appear time-dependent in the SurvFD (corrsp. Cor.~\ref{th:hazard_survival_fct}).
Further experiments in App.~\ref{supp:exp_dep} show the influence of feature dependencies on the SurvFD decomposition and how the decomposition varies depending on the chosen reference distribution including an empirical validation of Thm. ~\ref{th:survfd_dependent}.
\begin{figure}[t]
    \centering
    \includegraphics[width=\columnwidth]{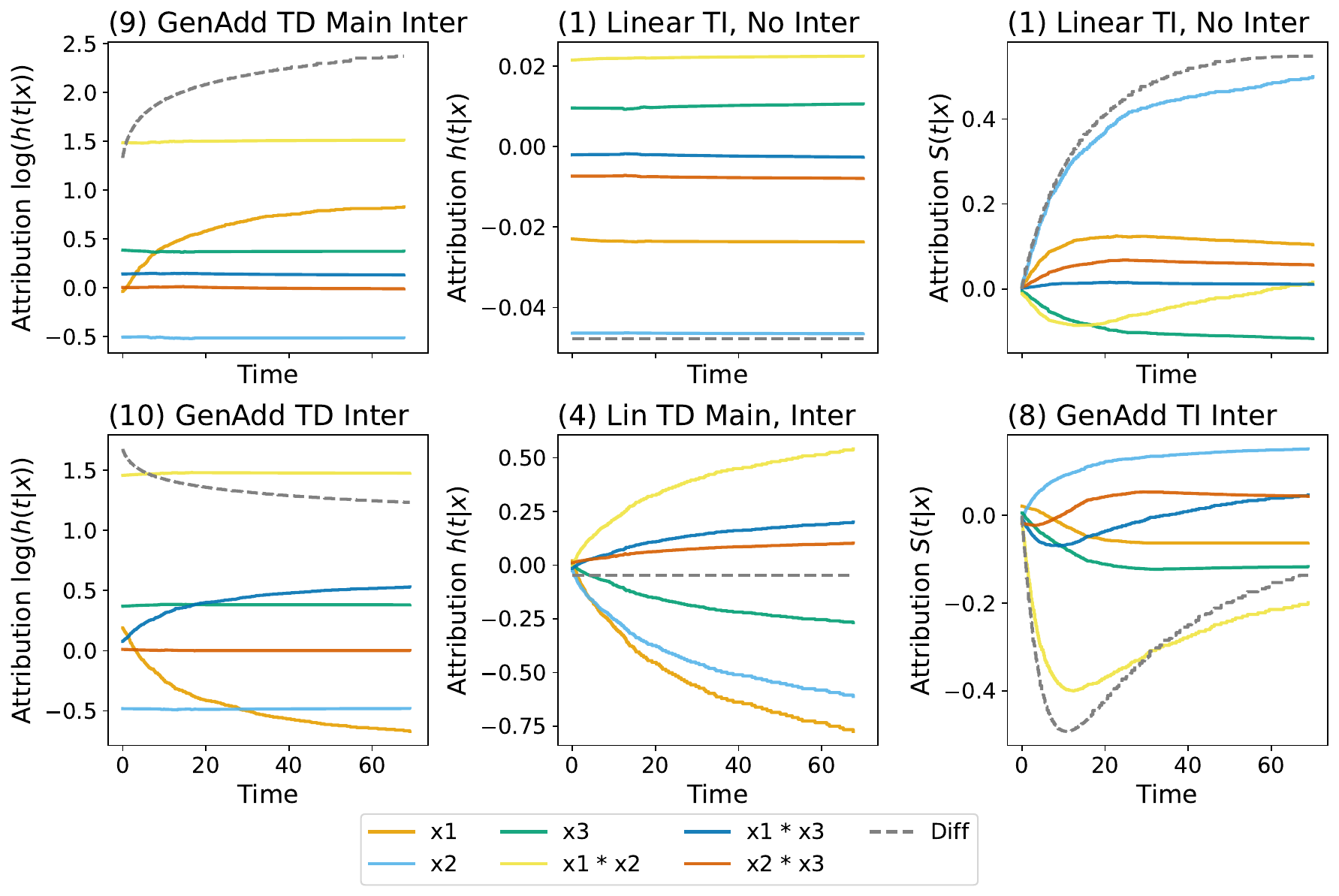}
    \caption{Exact SurvSHAP-IQ attribution curves for a selected observation computed on the ground-truth log-hazard (\textbf{left}), hazard (\textbf{middle}), and survival function (\textbf{right}). The difference between individual ground-truth values and the dataset average is plotted as a grey dashed line. The full results are shown in Figures~\ref{fig:sim_loghazard_gt}-\ref{fig:sim_survival_cox}.}
    \label{fig:plot_overview_sim}
\end{figure}

\textbf{Decomposing model predictions.} Next, we fit a gradient boosting survival analysis~(GBSA) model and a CoxPH model to the simulated training data, and compute SurvSHAP-IQ attributions for the predicted survival functions on test data.
Figure~\ref{fig:plots_models_sim} shows an example for scenario~\textbf{(8)}, where we defer the remaining results for all scenarios to App.~\ref{supp:experiments_sim}. 
The more flexible GBSA model fits the ground-truth generalized additive $G(t|\bm{x})$ with interactions better than the CoxPH without pre-specified interactions (C-index $0.70$ vs. $0.66$; IBS $0.15$ vs. $0.16$). 
This is further highlighted by the GBSA attribution curves being similar to those of the ground-truth survival attributions for scenario~\textbf{(8)} (see Fig.~\ref{fig:plot_overview_sim}). 

\textbf{Local accuracy.} We compute the average local accuracy over time~(Eq.~\eqref{eq:local_accuracy}) for the full datasets across all ten scenarios. 
For the ground-truth decompositions, \textbf{local accuracy is consistently below 0.001 for survival, and below 0.00001 for hazard and log-hazard} -- it is slightly higher for survival due to approximation using the hazard. 
For the predicted survival functions from the CoxPH and GBSA models, the value remains below 0.015, \textbf{indicating near-perfect decomposition quality} (see Appendix Tab.~\ref{tab:local_accuracy_10scenarios} and Fig.~\ref{fig:sim_la} for complete results). 

\subsection{Experiments with Real-world Applications}\label{sec:real}

We evaluate our methodology on real-world survival models without multiplicative hazard assumptions, using datasets from SurvSet~\citep{drysdale2022} as well as an eye cancer study~\citep{donizy2022machine} and the multi-modal TCGA-BRCA dataset~\citep{tcga_brca}. Additional results are provided in Appendix~\ref{sup:experiments_real} and \ref{supp:experiments_real_tcga}.

\textbf{GBSA.}
We first analyze a GBSA model that predicts the survival time of patients diagnosed with uveal melanoma.
The GBSA model trained with 100 trees of depth four on nine numerical features achieves a validation C-index of $0.758$ \citep{donizy2022machine}. 
Figure~\ref{fig:experiments_um} shows first- and second-order explanations for the validation set ($n=77$), which is extended in Fig.~\ref{fig:experiments_um_extension}. 
We observe that age, maximum tumor diameter, and mitotic rate emerge as key predictors, as indicated by high average absolute attribution values. 
SurvSHAP-IQ uncovers strong pairwise interactions, enabling a finer-grained interpretation of survival predictions. 
While low maximum tumor diameter and mitotic rate each increase survival, their interaction is negative, suggesting redundancy in the machine learning model.

\textbf{Multi-modal TCGA-BRCA.}
Second, we analyze the TCGA-BRCA dataset to predict overall survival for 990 breast cancer patients using histopathological whole-slide images (WSIs) and eight clinical features (e.g., age, surgery type, tumor stage, menopausal status). WSIs are provided as pre-extracted patches at 20$\times$magnification and encoded into 1,536-dimensional embeddings using the pre-trained vision transformer UNI2-h \citep{Chen2024_UNI2}. Due to the large and varying number of patches per patient (mean 523), we apply a gated multiple-instance learning attention mechanism \citep{pmlr-v80-ilse18a} to weight patches before aggregation. The resulting image representation is fused with clinical features and passed to a \emph{DeepHit} head to predict the probability mass function (PMF) of the event at timepoint $t$ (i.e., $\mathbb{P}(T = t | \bm{x})$) from which the discrete-time survival probabilities $S(t | \bm{x})$ \citep{Lee_deephit_2018} can be computed. For SurvSHAP-IQ, we use the top-six patches by attention weight together with the eight clinical features as players.

\begin{figure}[t]
    \centering
    \includegraphics[width=\columnwidth]{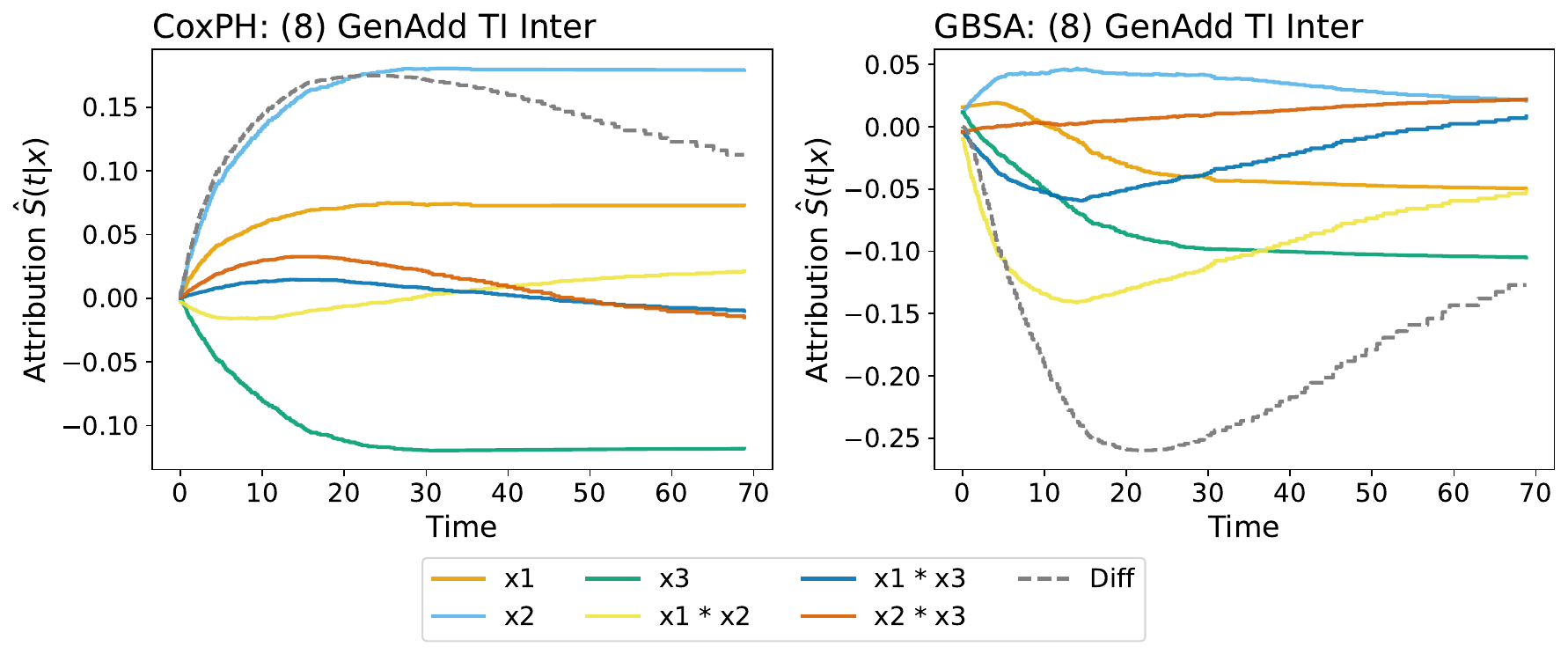}
    \caption{Exact SurvSHAP-IQ attribution curves for a selected observation computed on the predicted survival functions of CoxPH (\textbf{left}) and GBSA (\textbf{right}). For complete results see Fig.~\ref{fig:sim_survival_gbsa}~\&~\ref{fig:sim_survival_cox}.}
    \label{fig:plots_models_sim}
\end{figure}
\begin{figure}[t]
    \centering
    \includegraphics[width=0.5\linewidth]{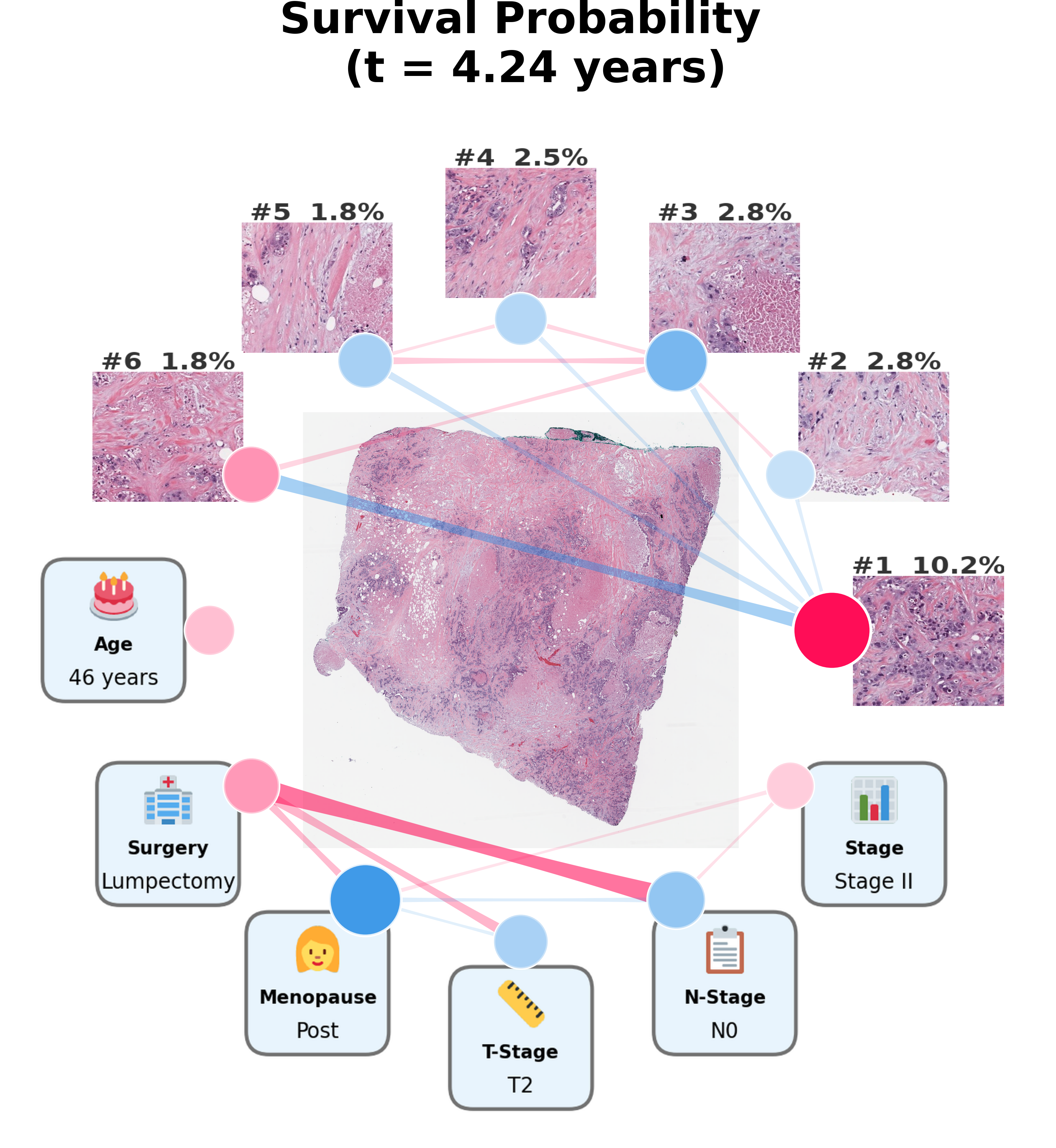}%
    \includegraphics[width=0.5\linewidth]{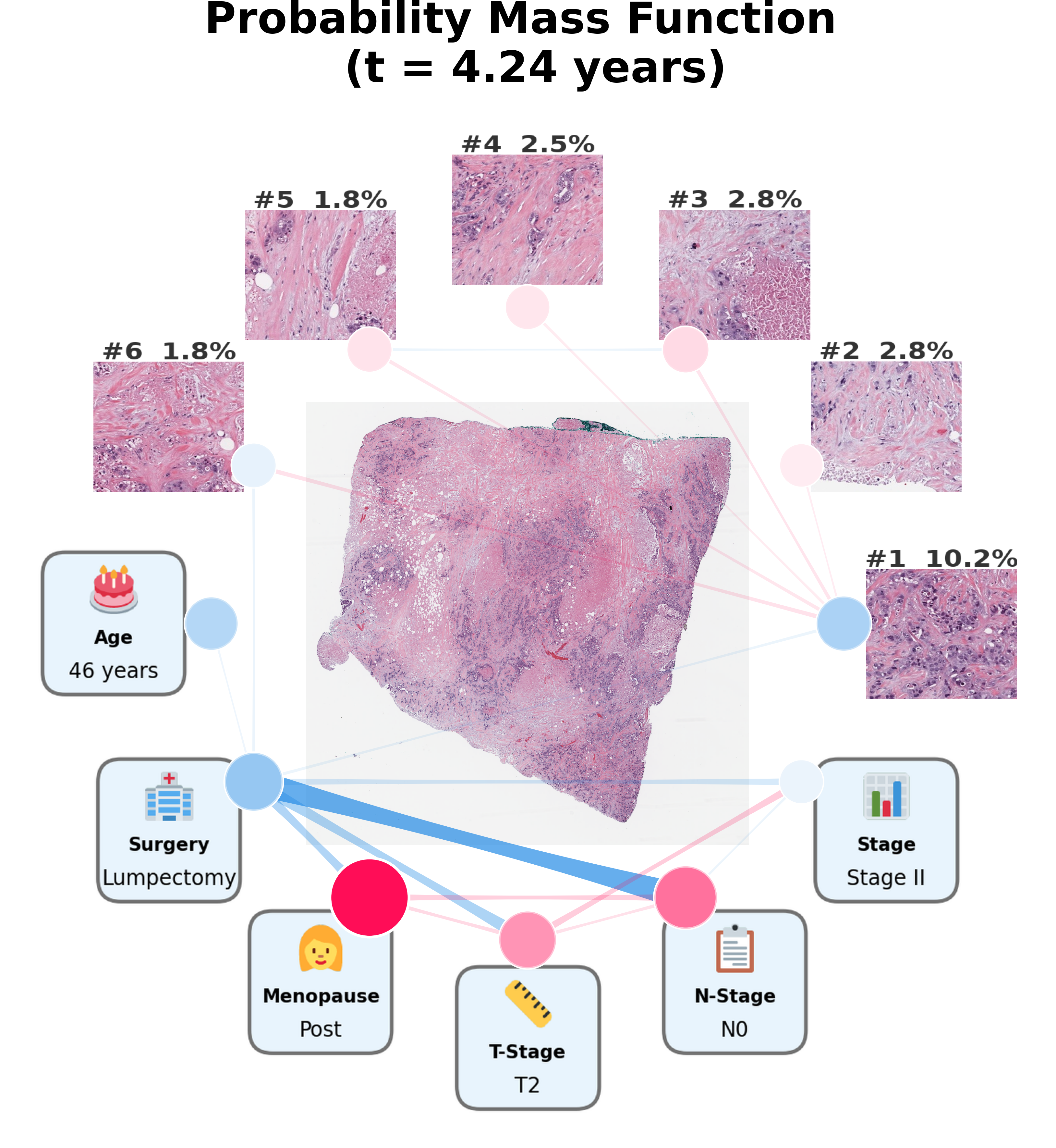}
    \caption{SurvSHAP-IQ attributions for a multi-modal survival deep learning model predicting a patient's (ID: TCGA-A7-A13D) survival probability (\textbf{left}) and probability mass function (\textbf{right}) at $t = 4.24$ years. Node size represents individual feature effects, edges indicate pairwise interaction effects between patches of histopathological WSI and clinical features.}
    \label{fig:experiments_tcga}
\end{figure}

\begin{figure*}[t]
    \centering
    \includegraphics[width=0.24\linewidth]{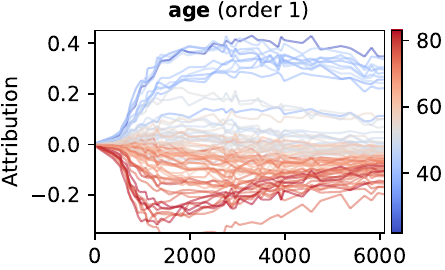}
    \includegraphics[width=0.22\linewidth]{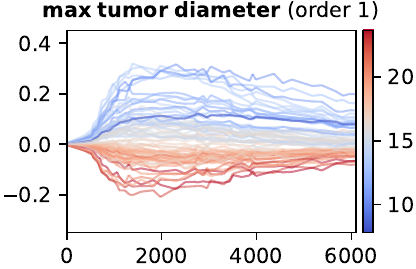}
    \includegraphics[width=0.22\linewidth]{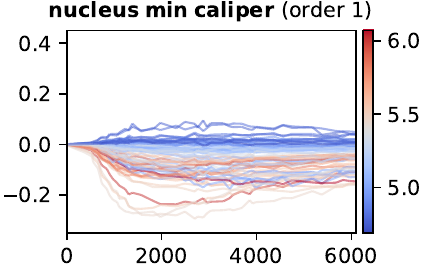}
    \includegraphics[width=0.22\linewidth]{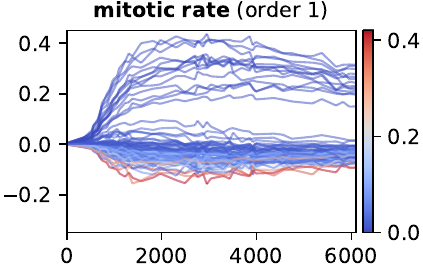}
    
    \includegraphics[width=0.24\linewidth]{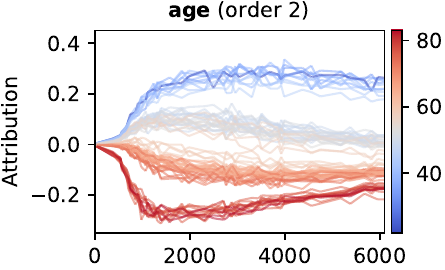}
    \includegraphics[width=0.22\linewidth]{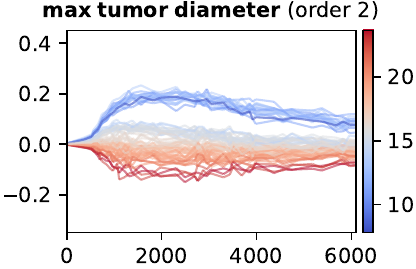}
    \includegraphics[width=0.22\linewidth]{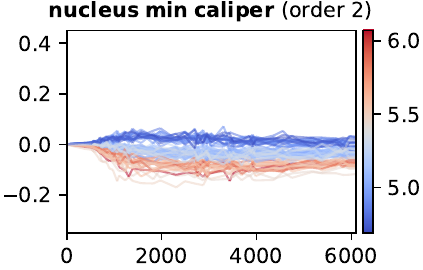}
    \includegraphics[width=0.22\linewidth]{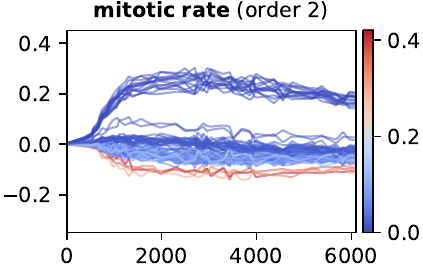}

    \includegraphics[width=0.24\linewidth]{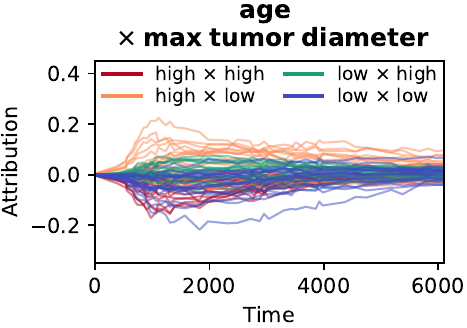}
    \includegraphics[width=0.22\linewidth]{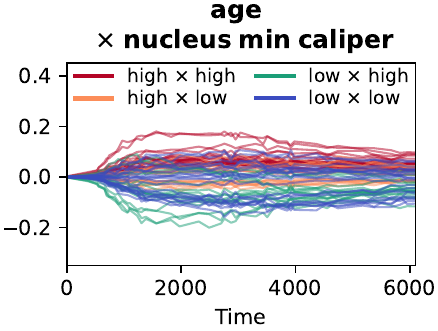}
    \includegraphics[width=0.22\linewidth]{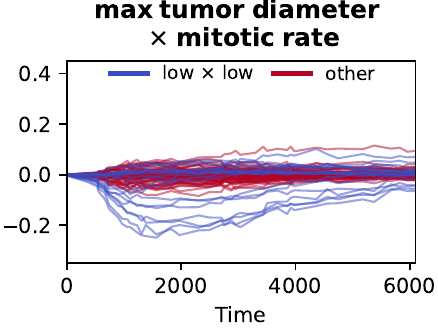}
    \includegraphics[width=0.22\linewidth]{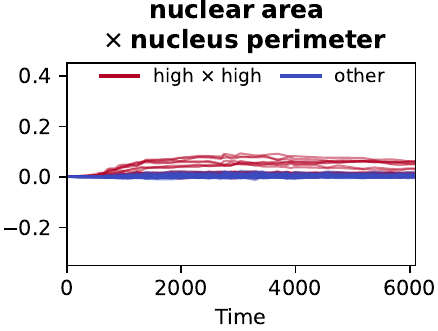}
    \caption{
    Explanation of a GBSA model predicting the survival of patients diagnosed with uveal melanoma. 
    \textbf{Top:} Shapley attributions (order 1) for multiple observations (curves) with different feature values (in color).
    \textbf{Middle:} SurvSHAP-IQ individual effects (order 2). 
    We observe how adding interaction terms decreases the in-group variance of attributions. 
    \textbf{Bottom:} SurvSHAP-IQ interaction effects (order 2), where color denotes four groups of observations.
    The remaining attribution and interaction effects are shown in Figure~\ref{fig:experiments_um_extension}.
     }
    \label{fig:experiments_um}
\end{figure*}

Figure~\ref{fig:experiments_tcga} shows SurvSHAP-IQ network explanations for a single patient (ID: TCGA-A7-A31D) at 
$t = 4.24$ years, for survival probability (left) and PMF (right). Node sizes indicate individual attribution strength, and edges represent pairwise interactions; red denotes positive effects (increased survival or risk-accelerating PMF), while blue denotes negative effects (decreasing survival or risk-delaying PMF). For the survival probability, the top-ranked patches by attention weights receive high attributions, including within-modality interactions. Among clinical features, a strong interaction between node-negative status, meaning the absence of cancer in the lymph nodes (N0) and lumpectomy (an early-stage surgery) reflects a favorable prognosis. Interactions occur exclusively within, rather than across, modalities. The PMF exhibits opposite effect directions, consistent with its negative relationship to survival, and shows partly different interaction patterns, particularly among image features. This suggests that feature interactions can vary across scales of survival prediction functions, extending our earlier results (Prop.~\ref{prop:hazard_interactions}), derived under the multiplicative hazards assumption (Eq.~\eqref{eq:general_haz}, \eqref{eq:general_G}), to more general models such as DeepHit. Further details can be found in the Appendix~\ref{supp:experiments_real_tcga}.

\textbf{Scalability.}
While computing the exact interaction-based explanations is feasible for low-dimensional models ($p \leq 10$), higher dimensions often require efficient approximations. 
We benchmark common algorithms on four datasets with $10$–$16$ features~\citep{knaus1995support,simons1999second,bergamaschi2006distinct,director2008gene}. 
Figures~\ref{fig:approximators_benchmark} and \ref{fig:approximators_benchmark_extension} show that the regression-based (kernel) approximator is the most faithful across survival tasks, which is consistent with prior work~\citep{fumagalli2024kernelshap,muschalik2024shapiq}. 
However, it requires the sampling budget to exceed the number of interaction effects, with instability evident at budget $2^6$ in Fig.~\ref{fig:approximators_benchmark}.
To address scalability concerns, we further experiment with a machine learning model that predicts survival for patients with breast cancer using 70 genomic features in Appendix~\ref{sup:experiments_real}.

\begin{figure}[h]
    \centering\includegraphics[width=0.68\columnwidth]{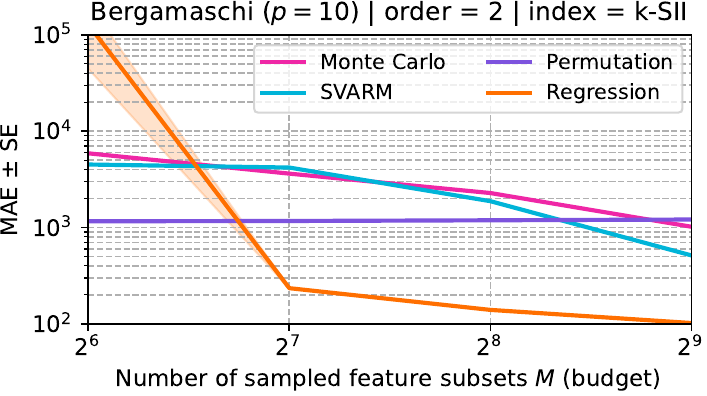}
    \caption{
    Error of different SurvSHAP-IQ approximators as a function of budget for the \texttt{Bergamaschi} task. 
    Extended results for all tasks are in Figure~\ref{fig:approximators_benchmark_extension}.}
    \label{fig:approximators_benchmark}
\end{figure}

\section{Conclusion and Limitations}

As machine learning models gain traction in survival analysis, ensuring understanding of feature interactions is essential. 
We introduced the combination of SurvFD and SurvSHAP-IQ as a theoretically grounded approach to explain interactions in survival models, emphasizing the challenges of interpreting them across log-hazard, hazard, and survival scales. 
While the hazard and survival scales are clinically relevant, the additive log-hazard scale uniquely reflects the ground-truth interaction structure without transformation-induced artifacts. 
Practically, we recommend to include second-order explanations when extending analyses beyond additive effects, particularly for models capturing nonlinearities, since such interactions are intrinsic to the prediction function. 

Our study is theoretically confined to multiplicative hazards, and empirically limited to second-order effects and medium-dimensional datasets (up to 70 features), all of which merit further investigation. While recent work has begun scaling feature interaction methods to larger models and datasets \citep{witter2025regressionadjusted,baniecki2025explaining,butler2025proxyspex}, estimating higher-order interactions remains an open research question due to exponential growth with interaction order $k$. Finally, the visualization and interpretability of interaction explanations remain open challenges for human-computer interaction research~\citep{rong2024towards}.

\subsubsection*{Acknowledgements}
The results shown here are in part based upon data generated by the TCGA Research Network: \hyperlink{https://www.cancer.gov/tcga}{https://www.cancer.gov/tcga}. Sophie Hanna Langbein and Niklas Koenen have been funded by the German Research Foundation (DFG) as part of the Research Unit ``Lifespan AI: From Longitudinal Data to Lifespan Inference in Health" (DFG FOR 5347), Grant 459360854. Hubert Baniecki was supported by the Foundation for Polish Science (FNP), and the Polish Ministry of Education and Science within the ``Pearls of Science'' program, project number PN/01/0087/2022. Julia Herbinger and Marvin N. Wright gratefully acknowledge funding by the German Research Foundation (DFG), Emmy Noether Grant 437611051.

\section*{Impact Statement}
This paper presents work whose goal is to advance the field of machine learning. There are many potential societal consequences of our work, none of which we feel must be specifically highlighted here.

\bibliography{icml2026/references}
\bibliographystyle{icml2026}



\appendix
\onecolumn
\renewcommand{\thefigure}{\thesection\arabic{figure}}
\renewcommand{\thetable}{\thesection\arabic{table}}
\renewcommand{\theequation}{\thesection\arabic{equation}}
\setcounter{figure}{0}
\setcounter{table}{0}
\numberwithin{figure}{section}
\numberwithin{table}{section}

\icmltitle{Supplementary Materials}

\section{Proofs}

\subsection{Proof of Theorem~\ref{th:loghazard_fct_cor}}\label{sup:proof_1}

In Theorem~\ref{th:loghazard_fct_cor}, for the log-hazard $\log h(t|\bm{x}) = \log h_0(t) +  G(t|\bm{x})$ and mutually independent features, we derive from each assumption (i) $G(t| \bm{x})$ is linear in $\bm{x}$ and (ii) $G(t|\bm{x})$ is additive in the features that $\mathcal I_d = \mathcal I_d^\star$ and $\mathcal I_{id} = \mathcal I_{id}^\star$. Additionally, without loss of generality, we can assume zero-centered features, i.e., $\mathbb{E}[X_i] = 0$. For both proofs, we consider the log-hazard function defined with ground-truth sets $\mathcal{I}_d$ and $\mathcal{I}_{id}$ given by (see Eq.~\eqref{eq:log_hazard} in the main text)
\begin{align}\label{eq:app_log_hazard}
    \log h(t| \bm{x}) = \log h_0(t) +  G(t|\bm{x}) = \log h_0(t) + \sum_{M \in \mathcal{I}_{d}} g_{M}(t|\bm{x}) + \sum_{L \in \mathcal{I}_{id}} g_L(\bm{x}),
\end{align}
where we use the separate feature sets $M$ and $L$ for notational convenience.

The pure FD effects (as defined in Eq.~\eqref{eq:fanova_effect}) for some non-empty subset of features $T \in \mathcal{P}(P)$ for timepoint $t \in \mathcal{T}$ is recursively defined as: 
\begin{align}\label{eq:surv_fanova_effect}
    f_T(t|\bm{x}) = \int F(t|\bm{x}) \, d\mathbb{P}_{\bm{X}_{\bar{T}}} \;-\; \sum_{S \subset T} f_S(t|\bm{x}) = \mathbb E_{X_{\bar{T}}}\left[ F(t|\bm{X}) | \bm{X}_{T} = \bm{x}_T \right] \;-\; \sum_{S \subset T} f_S(t|\bm{x}).
\end{align}
The recursion above has the closed form (\emph{Möbius Transformation}):
\begin{align}\label{eq:Moebius_trans}
  f_T(t|\bm x)
  &= \sum_{S\subseteq T} (-1)^{|T|-|S|}\,
     \mathbb E_{\bm{X_{\bar{S}}}}\!\big[F(t|\bm X)\,\big|\,\bm X_S=\bm x_S\big].
\end{align}
In both cases (i) and (ii), we compute the pure effects to show that they are equivalent to the ground-truth components $g_T$ and therefore $\mathcal I_d^\star = \mathcal I_d$ and $\mathcal I_{id}^\star = \mathcal I_{id}$. 

\begin{proof}[Proof using assumption (i)]
Due to the linearity assumption (i), we can write the function $G$ in a simple form using time-dependent functions $\beta_M(t)$ for time-dependent effects in $\mathcal{I}_d$ and time-independent coefficients $\beta_L$ for effects in $\mathcal{I}_{id}$, i.e.,
\begin{align}\label{eq:thm1_i_G}
    G(t|\bm{x})
    &= \sum_{M\in\mathcal I_d} \underbrace{\beta_M(t)\prod_{i\in M} x_i}_{=g_M(t| \bm{x})}
    + \sum_{L\in\mathcal I_{id}} \underbrace{\beta_L \prod_{j\in L} x_j}_{= g_L(\bm{x})} .
\end{align}

\textbf{1. The $\emptyset$-component.}
Using expression \eqref{eq:thm1_i_G}, we can derive the baseline as
\begin{align*}
    f_{\emptyset}(t)
    &= \mathbb{E}_{\bm X}\!\big[\log h(t|\bm X)\big]
    = \log h_0(t) + \mathbb{E}_{\bm X}\!\big[G(t|\bm X)\big] \\[1mm]
    &= \log h_0(t)
    + \sum_{M\in\mathcal I_d} \beta_M(t)\ \mathbb{E}_{\bm X}\!\left[\prod_{i\in M}X_i\right]
    + \sum_{L\in\mathcal I_{id}} \beta_L\ \mathbb{E}_{\bm X}\!\left[\prod_{j\in L}X_j\right] \\[1mm]
    &= \log h_0(t).
\end{align*}
The last step uses the feature independence and $\mathbb E[X_i]=0$, implying that every non-empty product has mean zero, so only the log baseline hazard remains.

\textbf{2. Conditional expectations for the Möbius formula.}
For any non-empty subset of features $T \in \mathcal{P}(P)$, we can calculate the joint marginal effect of the FD (which is required for the individual summands of the Möbius Transformation in  Eq.~\eqref{eq:Moebius_trans}) for timepoint $t \in \mathcal{T}$ as
\begin{align}
    &\mathbb E_{\bm{X_{\bar{T}}}}\!\big[\log h(t|\bm X)\,\big|\,\bm X_T=\bm x_T\big]\notag \\[1mm]
    =\ &\log h_0(t) + \mathbb E_{\bm{X_{\bar{T}}}}\!\big[G(t|\bm X)\,\big|\,\bm X_T=\bm x_T\big]\notag \\[1mm]
    =\ &\log h_0(t)
    + \sum_{M\in\mathcal I_d} \beta_M(t)\,
    \mathbb E_{\bm{X_{\bar{T}}}}\!\Big[\prod_{i\in M}X_i \,\Big|\,\bm X_T=\bm x_T\Big] + \sum_{L\in\mathcal I_{id}} \beta_L\
    \mathbb E_{\bm{X_{\bar{T}}}}\!\Big[\prod_{j\in L}X_j \,\Big|\,\bm X_T=\bm x_T\Big]. \label{eq:thm1_i_jointeff}
\end{align}
By the assumed independence and zero means, the conditional product reduces to the fixed product, if all its indices lie in $T$, and to zero otherwise:
\begin{align*}
    &\mathbb E_{\bm{X_{\bar{T}}}}\!\Big[\prod_{i\in M}X_i \,\Big|\,\bm X_T=\bm x_T\Big]
    = \begin{cases}
        \prod_{i\in M}x_i, & M\subseteq T, \\
        0, & M\nsubseteq T,
    \end{cases}\\
    &\mathbb E_{\bm{X_{\bar{T}}}}\!\Big[\prod_{j\in L}X_j \,\Big|\,\bm X_T=\bm x_T\Big]
    = \begin{cases}
        \prod_{j\in L}x_j, & L\subseteq T, \\
        0, & L\nsubseteq T.
    \end{cases}
\end{align*}
Hence together with Eq.~\eqref{eq:thm1_i_jointeff}, it follows for the joint marginal effect
\begin{align}
    \mathbb E_{\bm{X_{\bar{T}}}}\!\big[\log h(t|\bm X)\,\big|\,\bm X_T=\bm x_T\big]
    &= \log h_0(t)
    + \sum_{\substack{M\in\mathcal I_d\\ M\subseteq T}} \beta_M(t)\prod_{i\in M}x_i
    + \sum_{\substack{L\in\mathcal I_{id}\\ L\subseteq T}} \beta_L \prod_{j\in L}x_j. \label{eq:thm1_i_cond}
\end{align}

\textbf{3. Calculate pure FD effects.}
For each non-empty subset of features $S \subseteq T$, the pure effect is given by the Möbius Transformation (see Eq.~\eqref{eq:Moebius_trans})
\begin{align}\label{eq:möbius_S}
  f_S(t|\bm x)
  &= \sum_{T\subseteq S} (-1)^{|S|-|T|}\,
     \mathbb E_{\bm{X_{\bar{T}}}}\!\big[\log h(t|\bm X)\,\big|\,\bm X_T=\bm x_T\big].
\end{align}
We now substitute the conditional representation from Eq.~\eqref{eq:thm1_i_cond}:
\begin{align}
  f_S(t|\bm x)
  = \sum_{T\subseteq S} (-1)^{|S|-|T|}\,\log h_0(t)
  \quad + \sum_{T\subseteq S} (-1)^{|S|-|T|}
     \left(
       \sum_{\substack{M\in\mathcal I_d\\ M\subseteq T}} \beta_M(t)\prod_{i\in M} x_i
       \;+\;
       \sum_{\substack{L\in\mathcal I_{id}\\ L\subseteq T}} \beta_L \prod_{j\in L} x_j
     \right).\label{eq:thm1_i_moebius}
\end{align}
The baseline term vanishes, which follows from using the binomial expansion and $S \neq \emptyset$ (i.e., $|S|>0$): 
\begin{align*}
    \sum_{T\subseteq S} (-1)^{|S|-|T|}\,\log h_0(t) &= \log h_0(t) \sum_{T\subseteq S} (-1)^{|S|-|T|}\\[1mm]
    &= \log h_0(t) \sum_{k = 0}^{|S|} \binom{|S|}{k} (-1)^{|S| - k} \\[1mm]
    &= \log h_0(t) (-1)^{|S|}(-1 + 1)^{|S|} = 0.
\end{align*}

Additionally, for the time-dependent term in Eq.~\eqref{eq:thm1_i_moebius}, it holds
\begin{align*}
  \sum_{T\subseteq S} (-1)^{|S|-|T|} \sum_{\substack{M\in\mathcal I_d\\ M\subseteq T}} \, \beta_M(t)\prod_{i\in M} x_i 
  &= \sum_{T\subseteq S} (-1)^{|S|-|T|} \sum_{\substack{M\in\mathcal I_d\\ M\subseteq S}} \mathds{1}_T(M) \beta_M(t)\prod_{i\in M} x_i\\[1mm]
  &= \sum_{\substack{M\in\mathcal I_d\\ M\subseteq S}} \beta_M(t)\prod_{i\in M} x_i \left(\mathds{1}_T(M) \sum_{T\subseteq S} (-1)^{|S|-|T|} \right)\\[1mm]
  &= \sum_{\substack{M\in\mathcal I_d\\ M\subseteq S}} 
     \beta_M(t)\prod_{i\in M} x_i
     \sum_{T:\,M\subseteq T\subseteq S} (-1)^{|S|-|T|},
\end{align*}
where $\mathds{1}_T(A)$ is $0$ if $A\not \subseteq T$ else $1$. Again, it follows from the binomial expansion
\begin{align}\label{eq:binomial_exp}
  \sum_{T:\,M\subseteq T\subseteq S} (-1)^{|S|-|T|}
  = \sum_{k=0}^{|S|-|M|} \binom{|S|-|M|}{k}(-1)^{\,|S|-|M|-k}
   = (-1)^{|S|-|M|}(1-1)^{|S|-|M|}
  = \begin{cases}
       1, & M=S, \\
       0, & M\subsetneq S.
     \end{cases}
\end{align}
The same holds for the time-independent term in Eq.~\eqref{eq:thm1_i_moebius}. Since $\mathcal{I}_d$ and $\mathcal{I}_{id}$ are disjoint and $\mathcal{I}_d \cup \mathcal{I}_{id} = \mathcal{P}(P)$ (i.e., $S$ has to be in one of them), the pure effect is given by
\begin{align}
  f_S(t|\bm x)= \begin{cases}
       \beta_S(t)\prod_{i\in S}x_i, & S\in\mathcal I_d, \\
       \beta_S\prod_{i\in S}x_i,   & S\in\mathcal I_{id},
     \end{cases} = 
     \begin{cases}
       g_S(t|\bm{x}), & S\in\mathcal I_d, \\
       g_S(\bm{x}),   & S\in\mathcal I_{id}.
     \end{cases}\label{eq:thm1_i_pureeffect}
\end{align}

\textbf{4. Identification of the sets.}
The last expression (Eq.~\eqref{eq:thm1_i_pureeffect}) of the pure effect $f_S$ for a subset $S \neq \emptyset$ of features depends on $t$ if and only if the corresponding ground-truth component is time-dependent. Therefore
\begin{align*}
    \mathcal I_d^\star = \mathcal I_d
    \quad\text{and}\quad
    \mathcal I_{id}^\star = \mathcal I_{id}.
\end{align*}
Since $\log h_0(t)$ contributes only to $f_\emptyset$, it does not affect these sets. This completes the proof under assumption (i).
\end{proof}

\begin{proof}[Proof using assumption (ii)]
Due to the assumption of additivity in the main effects in (ii), we can write the function $G$ as sums over the individual time-dependent features in $M \in \mathcal{I}_d$ and time-independent features in  $L \in \mathcal{I}_{id}$.
\begin{align}\label{eq:thm1_ii_G}
   G(t|\bm{x}) &= \sum_{M \in \mathcal{I}_{d}} g_M(t|\bm{x})  + \sum_{L \in \mathcal{I}_{id}} g_L(\bm{x}) = \sum_{\{i\} \in \mathcal{I}_{d}} g_{\{i\}}(t|\bm{x})  + \sum_{\{j\} \in \mathcal{I}_{id}} g_{\{j\}}(\bm{x}).
\end{align}
Analogous to the proof using assumption (i), we want to show that under the assumption of mutually independent features, the time-dependent and time-independent functional components of the log-hazard function correspond exactly to the respective pure effects of the functional decomposition:

\textbf{1. The $\emptyset$-component.}
Using expression in Eq.~\eqref{eq:thm1_ii_G}, we can derive the baseline as
\begin{align*}
    f_\emptyset(t) 
    &= \mathbb{E}_{\bm{X}}\bigl[\log h(t | \bm{X})\bigr] \\[1mm]
    &= \mathbb{E}_{\bm{X}}\Biggl[ \log h_0(t) + \sum_{M \in \mathcal{I}_d} g_M(t | \bm{X}) + \sum_{L \in \mathcal{I}_{id}} g_L(\bm{X}) \Biggr] \\[1mm]
    &= \log h_0(t) + \sum_{\{i\} \in \mathcal{I}_d} \mathbb{E}_{X_i}\bigl[g_{\{i\}}(t | X_i)\bigr] + \sum_{\{j\} \in \mathcal{I}_{id}} \mathbb{E}_{X_j}\bigl[g_{\{j\}}( X_j)\bigr].
\end{align*}
In this case, the expectation terms do not cancel out as we do not impose the centering property of the functional decomposition. This means the component $g_\emptyset(t) = \log h_0(t)$ at this stage does not necessarily align with the pure FD component $f_\emptyset(t)$. Without explicitly requiring each first-order ground-truth component function to have zero mean, their expectations generally remain nonzero, so these attributions persist in this expression $f_\emptyset(t)$.

\textbf{2. Compute single-feature pure effect.} Let $M \subset P$ with $|M| = 1$ denote a single time-dependent feature index. By the recursive definition of FD components in Eq.~\eqref{eq:fanova_effect}, it holds
\begin{align}\label{eq:fM_definition}
    f_M(t|\bm{x})
    = \mathbb{E}_{\bm{X}_{\bar M}}\!\big[\log h(t|\bm{X}) | X_M = x_M\big]
      - f_\emptyset(t).
\end{align}

We first compute the conditional expectation term. Substituting Eq.~\eqref{eq:thm1_ii_G} into Eq.~\eqref{eq:fM_definition}, we get (we omit the conditioning in the expectations for better clarity)
\begin{align}\label{eq:p1_cond_exp}
\mathbb{E}_{\bm{X}_{\bar M}}\!\big[\log h(t|\bm{X})\big] &= \mathbb{E}_{\bm{X}_{\bar M}}\!\Big[
    \log h_0(t) + \sum_{\{i\} \in \mathcal{I}_{d}} g_{\{i\}}(t|X_i) + \sum_{\{j\} \in \mathcal{I}_{id}} g_{\{j\}}(X_j) \Big] \notag \\
&= \log h_0(t)
    + \mathbb{E}_{\bm{X}_{\bar M}}\!\big[g_M(t|X_M)\big]
    + \sum_{\substack{\{i\} \in \mathcal{I}_{d}\\ i \not \in M}} \mathbb{E}_{\bm{X}_{\bar M}}\!\big[g_{\{i\}}(t|X_i)\big]
    + \sum_{\{j\} \in \mathcal{I}_{id}} \mathbb{E}_{\bm{X}_{\bar M}}\!\big[g_j(X_j)\big].
\end{align}

By independence of features, conditioning on $X_M = x_M$ does not affect the distribution of $X_i$ for $i \not \in M$. Hence
\begin{align*}
\mathbb{E}_{\bm{X}_{\bar M}}\!\big[g_{\{i\}}(t|X_i)| X_M = x_M\big] = \mathbb{E}\!\big[g_{\{i\}}(t|X_i)\big]\quad \text{and}\quad
\mathbb{E}_{\bm{X}_{\bar M}}\!\big[g_{\{j\}}(X_j)| X_M = x_M\big] = \mathbb{E}\!\big[g_{\{j\}}(X_j)\big],
\end{align*}
and since $g_M(t|X_M)$ depends only on $X_M$, we have
\begin{align*}
\mathbb{E}_{\bm{X}_{\bar M}}\!\big[g_M(t|X_M)| X_M = x_M\big] = g_M(t|x_M).
\end{align*}
Substituting these results in Eq.~\eqref{eq:p1_cond_exp} gives
\begin{align*}
\mathbb{E}_{\bm{X}_{\bar M}}\!\big[\log h(t|\bm{X}) | X_M = x_M\big]
    &= \log h_0(t)
      + g_M(t|x_M)
      + \sum_{\substack{\{i\} \in \mathcal{I}_{d}\\ i \not \in M}} \mathbb{E}\!\big[g_{\{i\}}(t|X_i)\big]
      + \sum_{\{j\} \in \mathcal{I}_{id}} \mathbb{E}\!\big[g_{\{j\}}(X_j)\big].
\end{align*}
Subtracting $f_\emptyset(t)$ yields
\begin{align*}
f_M(t|\bm{x})
&= \Big[\log h_0(t)
      + g_M(t|x_M)
      + \sum_{\substack{\{i\} \in \mathcal{I}_{d}\\ i \not \in M}} \mathbb{E}\!\big[g_{\{i\}}(t|X_i)\big]
      + \sum_{\{j\} \in \mathcal{I}_{id}} \mathbb{E}\!\big[g_{\{j\}}(X_j)\big] \\[1mm]
&\quad - \Big[\log h_0(t) + \sum_{\{i\} \in \mathcal{I}_d} \mathbb{E}_{X_i}\bigl[g_{\{i\}}(t | X_i)\bigr] + \sum_{\{j\} \in \mathcal{I}_{id}} \mathbb{E}_{X_j}\bigl[g_{\{j\}}( X_j)\bigr]\Big] \\
&= g_M(t|x_M) - \mathbb{E}_{X_M}[g_M(t|X_M)].
\end{align*}

All other terms cancel exactly. The same argument applies for a single time-independent feature $L$ with $|L|=1$, giving $f_L(\bm{x}) = g_L(x_L) - \mathbb{E}_{X_L}[g_L(X_L)]$.

\textbf{3. Compute higher-order-feature pure effect.}
We show now that all higher-order FD pure effects are $0$. Therefore, we first define the joint pure effect of a time-dependent feature set $M$ with $|M| > 1$:
\begin{align*}
\mathbb{E}_{\bm{X}_{\bar M}}\!\big[\log h(t|\bm{X}) | X_M = x_M\big]
    &= \log h_0(t)
      + g_M(t|x_M)
      + \sum_{\substack{\{i\} \in \mathcal{I}_{d}\\ i \not \in M}} \mathbb{E}\!\big[g_{\{i\}}(t|X_i)\big]
      + \sum_{\{j\} \in \mathcal{I}_{id}} \mathbb{E}\!\big[g_{\{j\}}(X_j)\big].
\end{align*}

The pure effect of $M$ is defined by 
\begin{align}\label{eq:fM_definition_mul}
f_M(t|\bm{x})
= \mathbb{E}_{\mathbf X_{\bar M}}[\log h(t|\mathbf X) | \bm X_M=\mathbf x_M]
  - \sum_{\emptyset \neq S\subset M} f_S(t|\bm x) - f_{\emptyset}(t).
\end{align}

We show now that the joint effect can be reformulated based on the emptyset and first-order FD components as follows:

\begin{align*}
    f_\emptyset(t) + \sum_{\substack{\{i\} \in \mathcal{I}_{d}\\ i \in M}} f_{\{i\}}(t|x) &= \log h_0(t) + \sum_{\{i\} \in \mathcal{I}_d} \mathbb{E}_{X_i}\bigl[g_{\{i\}}(t | X_i)\bigr] + \sum_{\{j\} \in \mathcal{I}_{id}} \mathbb{E}_{X_j}\bigl[g_{\{j\}}( X_j)\bigr] +\\
    & \qquad  \sum_{\substack{\{i\} \in \mathcal{I}_{d}\\ i \in M}} (g_{\{i\}}(t | x_i) - \mathbb{E}_{X_i}\bigl[g_{\{i\}}(t | X_i)\bigr])\\
    &= \log h_0(t)
      + \sum_{\substack{\{i\} \in \mathcal{I}_{d}\\ i \in M}} g_{\{i\}}(t | x_i)
      + \sum_{\substack{\{i\} \in \mathcal{I}_{d}\\ i \not \in M}} \mathbb{E}\!\big[g_{\{i\}}(t|X_i)\big]
      + \sum_{\{j\} \in \mathcal{I}_{id}} \mathbb{E}\!\big[g_{\{j\}}(X_j)\big]\\
    &= \log h_0(t)
      + g_M(t|x_M)
      + \sum_{\substack{\{i\} \in \mathcal{I}_{d}\\ i \not \in M}} \mathbb{E}\!\big[g_{\{i\}}(t|X_i)\big]
      + \sum_{\{j\} \in \mathcal{I}_{id}} \mathbb{E}\!\big[g_{\{j\}}(X_j)\big]\\
      &= \mathbb{E}_{\bm{X}_{\bar M}}\!\big[\log h(t|\bm{X}) | X_M = x_M\big]
\end{align*}

Based on the recursive formula of the Möbius Transfom, this leads to 

\begin{align}\label{eq:fM_definition_mul}
f_M(t|\bm{x})
= \mathbb{E}_{\mathbf X_{\bar M}}[\log h(t|\mathbf X) | \bm X_M=\mathbf x_M]
  - \sum_{\emptyset \neq S\subset M} f_S(t|\bm x) - f_{\emptyset}(t) = 0
\end{align}

for all $|M| > 1$. The same also holds for a time-independent set $|L| > 1$, analogously.

\textbf{4. Identification of the sets.}
We have shown that higher-order pure effects vanish $f_M(t|\bm{x})=0$, if $|M| > 1$ and first-order pure FD effects are equal to the additive components in $G(t|\bm x)$, $f_M(t|\bm{x}) = g_M(t|\bm x)$ for time-dependent as well as time-independent features. As a result the pure effect $f_S$ for a subset $S \neq \emptyset$ of features depends on $t$ if and only if the corresponding ground-truth component is time-dependent. Therefore
\begin{align*}
    \mathcal I_d^\star = \mathcal I_d
    \quad\text{and}\quad
    \mathcal I_{id}^\star = \mathcal I_{id}.
\end{align*}
This completes the proof under assumption (ii).
\end{proof}

\subsection{Proof of Theorem~\ref{th:log_hazard_fct_incor}}\label{sup:proof_2}

In Theorem~\ref{th:log_hazard_fct_incor}, we assume $G(t|\bm{x}) = g_Z(t|\bm{x}) + \sum_{M \in \mathcal{I}_{id}} g_M(\bm{x})$ and $\mathcal{I}_d = \{Z\}$, with $Z$, $2\leq |Z| <p$, being the only time-dependent set in $G$ and assume feature independence. Then:
\begin{enumerate}
    \item[(i)] For any subset of the time-dependent set $L \subset Z$, $L$ may appear as time-dependent in the functional decomposition even when $L$ is time-independent in G (downward propagation).
    \item[(ii)] Any superset of the time-dependent set $L \supset Z$ that is time-independent in $G$ remains time-independent in the functional decomposition (no upward propagation).
\end{enumerate}

\begin{proof}[Proof of (i) by Construction]
In Theorem~\ref{th:log_hazard_fct_incor}, we assume $G(t|\bm{x}) = g_Z(t|\bm{x}) + \sum_{M \in \mathcal{I}_{id}} g_M(\bm{x})$, such that the log-hazard function can be written as
\begin{equation}\label{eq:sup_general_log_hazard}
    \log h(t|\bm{x}) = \log h_0(t) + g_Z(t|\bm{x}) + \sum_{M \in \mathcal{I}_{id}} g_M(\bm{x}),
\end{equation}
with $\{Z\} = \mathcal{I}_d$, $2 \leq |Z| < p$, the only time-dependent index set in $G$ and $M$ is the index set of time-independent features. We first show by construction that the pure effect $f_L$ of some subset $L \subset Z$ may appear as time-dependent in the functional decomposition of $\log h(t|\bm{x})$ even though $g_L$ is time-independent in $G$.

\textbf{Constructed example:}
Consider the three-dimensional case $\bm{x} = (x_1, x_2, x_3)$ sampled from univariate standard Gaussians (i.e., $X_1, X_2, X_3 \overset{iid}{\sim} \mathcal{N}(0,1)$) and define
$$
G(t|\bm{x}) = x_1^2 + x_2 + x_3 + x_1 x_2^2 t.
$$
Here, the interaction term $g_{\{12\}}(t|\bm{x}) = x_1 x_2^2 t$ is time-dependent, while the first-order terms $g_{\{1\}}(\bm{x}) = x_1^2$, $ g_{\{2\}}(\bm{x}) = x_2$ and $g_{\{3\}}(\bm{x}) = x_3$ are time-independent, such that $Z = \{1,2\}$ and $\mathcal{I}_{id} = \{\{1\},\{2\},\{3\}\}$. We define set $L = \{1\} \in \mathcal{I}_{id}$, with $L \subset Z$, i.e., $g_L(\bm{x}) = g_{\{1\}}(\bm{x}) = x_1^2$ is time-independent.

\textbf{Calculate $f_L$:}
The functional first-order effect of $x_1$ is defined as (see Eq.~\eqref{eq:fanova_effect} and \eqref{eq:surv_fanova_effect})
\begin{align}\label{eq:p2_f1_eff}
    f_{\{1\}}(t|\bm{x}) &= \mathbb{E}_{(X_2, X_3)}[\log h(t|\bm{X})] - \mathbb{E}_{\bm{X}}[\log h(t|\bm{X})]\notag \\
    &= \mathbb{E}_{(X_2, X_3)}[\log h_0(t)] + \mathbb{E}_{(X_2, X_3)}[G(t|(x_1, X_2, X_3))] - \mathbb{E}_{\bm{X}}[\log h_0(t)] - \mathbb{E}_{\bm{X}}[G(t|\bm{X})] \notag \\
    &=\mathbb{E}_{(X_2, X_3)}[G(t|(x_1, X_2, X_3))] - \mathbb{E}_{\bm{X}}[G(t|\bm{X})],
\end{align}
where the last step follows from the identity $\mathbb{E}_{(X_2, X_3)}[\log h_0(t)] = \log h_0(t) = \mathbb{E}_{\bm{X}}[\log h_0(t)]$ due to the feature independence of the baseline hazard. Now, we calculate the left summand of Eq.~\eqref{eq:p2_f1_eff}:
\begin{align}\label{eq:f1_1}
    \mathbb{E}_{(X_2X_3)}[G(t|(x_1, X_2, X_3))] 
    &= \mathbb{E}_{(X_2X_3)}[x_1^2 + X_2 + X_3 + x_1 X_2^2 t] \notag \\[1mm]
    &= x_1^2 + \mathbb{E}_{X_2}[X_2] + \mathbb{E}_{X_3}[X_3] + t \cdot x_1 \, \mathbb{E}_{X_2}[X_2^2]\notag \\[1mm]
    &= x_1^2 + t \cdot x_1,
\end{align}
where we used the zero-centered features and $\mathbb{E}_{X_2}[X_2^2] = 1$. Now we compute the second component of Eq.~\eqref{eq:p2_f1_eff} as
\begin{align}\label{eq:f1_2}
    \mathbb{E}_{\bm{X}}[G(t|\bm{X})] 
    &= \mathbb{E}_{X_1}[X_1^2] + \mathbb{E}_{X_2}[X_2] + \mathbb{E}_{X_3}[X_3] + t \cdot \mathbb{E}_{(X_1,X_2)}[X_1 X_2^2] = 1,
\end{align}
where the last step uses the zero-centered features and $\mathbb{E}_{(X_1, X_2)}[X_1 X_2^2] = \mathbb{E}_{X_1}[X_1]\, \mathbb{E}_{X_2}[X_2^2] = 0$. Finally, we plug Eq.~\eqref{eq:f1_1} and Eq.~\eqref{eq:f1_2} in Eq.~\eqref{eq:p2_f1_eff}.
\begin{align*}
    f_{\{1\}}(t|\bm{x}) = x_1^2 + t \cdot x_1 - 1
\end{align*}
Clearly, $f_{\{1\}}(t|\bm{x})$ depends on $t$, even though the first-order term in $g_{\{1\}}(\bm{x})$ in $G(t|\bm{x})$ is time-independent. This demonstrates that for the subset $L = \{1\} \subset Z = \{1,2\}$, $L$ appears as time-dependent in the functional decomposition despite being time-independent in $G$.

\end{proof}

\begin{proof}[Proof of (ii)] 
Let
$$
G(t|\bm{x})=g_Z(t | \bm{x})+\sum_{M\in\mathcal I_{id}} g_M(\bm{x}),
$$
where $Z$, with $2 \leq |Z| \leq p$ is the only time-dependent index set and every $g_M$ with $M \in \mathcal{I}_{id}$ is time-independent. Assume all features are mutually independent and let $f_M$ denote the functional pure effect of subset $M$ obtained from $G$ via the usual marginal projection (see Eq.~\eqref{eq:fanova}). Under feature independence, the functional decomposition is unique and yields for $\log h(t | \bm{x}) = \log h_0(t) + G(t| \bm{x})$:
\begin{align*}
    \log h(t|\bm{x}) = f_\emptyset(t)+\sum_{\emptyset \neq S\subseteq P} f_S(t | \bm{x}),
\end{align*}
where each $f_S$ is given by the recursive definition and explicit variant, the Möbius Transformation as in Eq.~\eqref{eq:Moebius_trans}, as
\begin{align}\label{eq:thm2_ii_moebius}
    f_S(t | \bm{x}) &= \mathbb{E}_{\bm{X}_{\bar{S}}}\!\bigl[\,\log h(t | \bm{x})\, \big| \bm{X}_S = \bm{x}_S\bigr]-\sum_{N\subset S} f_N(t | \bm{x})\notag \\[1mm] 
    &= \sum_{N \subseteq S} (-1)^{\vert S\vert - \vert N \vert} \mathbb{E}_{\bm{X}_{\bar{N}}}\!\bigl[\,\log h(t | \bm{x})\,\big| \bm{X}_N = \bm{x}_N\bigr]\notag \\[1mm]
    &= \sum_{N \subseteq S} (-1)^{\vert S\vert - \vert N \vert} \mathbb{E}_{\bm{X}_{\bar{N}}}\!\bigl[\,\log h_0(t)\,\big| \bm{X}_N = \bm{x}_N\bigr] + \sum_{N \subseteq S} (-1)^{\vert S\vert - \vert N \vert} \mathbb{E}_{\bm{X}_{\bar{N}}}\!\bigl[\,G(t | \bm{x})\,\big| \bm{X}_N = \bm{x}_N\bigr]\notag\\[1mm]
    &= \sum_{N \subseteq S} (-1)^{\vert S\vert - \vert N \vert} \mathbb{E}_{\bm{X}_{\bar{N}}}\!\bigl[\,G(t | \bm{x})\,\big| \bm{X}_N = \bm{x}_N\bigr],
    \end{align}
where the last step follows similarly as in Step 3 in the proof of Theorem~\ref{th:loghazard_fct_cor} due to the feature independence of the baseline hazard and the binomial expansion. We now aim to show that for $L$ with $\vert L\vert > \vert Z \vert$ the pure FD effect $f_{L}$ is time-independent. Using the definition of $G(t|\bm x)$, we compute for $N \subseteq L$
\begin{align*}
    \mathbb{E}_{\bm{X}_{\bar{N}}}\!\bigl[\,G(t | \bm{x})\, \big| \bm{X}_N = x_N\bigr] &= \mathbb{E}_{\bm{X}_{\bar{N}}}\!\Bigl[\,g_Z(t | \bm{x})+\sum_{M\in\mathcal I_{id}} g_{M}(\bm{x})\,\Big| \bm{X}_N= \bm{x}_N\Bigr]\notag\\ 
    &= \mathbb{E}_{\bm{X}_{\bar{N} \cap Z}}\!\bigl[\,g_Z(t | \bm{x}_{Z})\, \big| \bm{X}_{N \cup \bar{Z}} = \bm{x}_{N \cup \bar{Z}}\bigr] + \sum_{M\in \mathcal I_{id}}  \mathbb{E}_{\bm{X}_{\bar{N} \cap M}}\!\bigl[\,g_{M}(\bm{x})\,\big| \bm{X}_{N \cup \bar{M}} = \bm{x}_{N \cup \bar{M}}\bigr],
\end{align*}
since the expectation only needs to be taken over the variables used by the respective function $g$. Combining this with the non-recursive definition of the pure FD effect in Eq.~\eqref{eq:thm2_ii_moebius}, we obtain the pure effect $f_L$ as (for convenience reasons, we omit the condition for the expectations)
\begin{align*}
    f_L(t | \bm{x}) = \sum_{N \subseteq L} (-1)^{\vert L\vert - \vert N \vert} \mathbb{E}_{\bm{X}_{\bar{N} \cap Z}}\!\bigl[\,g_{Z}(t | \bm{x})\,\bigr] + C(\bm{x}),
\end{align*}
where $C(\bm x)$ is the remainder of aggregated time-independent effects.
We now show that this time-dependent part of $f_L$ is zero by re-arranging the sum as
\begin{align*}
    \sum_{N \subseteq L} (-1)^{\vert L\vert - \vert N \vert} \mathbb{E}_{\bm{X}_{\bar{N} \cap Z}}\!\bigl[\,g_{Z}(t | \bm{x})\,\bigr] = \sum_{ S \subseteq Z} \mathbb{E}_{\bm{X}_{\bar{S} }}\!\bigl[\,g_{Z}(t | \bm{x})\,\bigr] \sum_{N \subseteq L: S = N \cap Z} (-1)^{\vert L\vert - \vert N \vert}.
\end{align*}
For the inner sum, we can disjointly decompose $N = S\, \dot{\cup}\, R$ with $R \subseteq L \setminus Z$ and $S \cap R = \emptyset$, and thus
\begin{align*}
    \sum_{N \subseteq L: S = N \cap Z} (-1)^{\vert L\vert - \vert N \vert} &= 
    \sum_{R \subseteq L \setminus Z} (-1)^{\vert L\vert - \vert S \cup R \vert} = \sum_{r=0}^{\vert L \setminus Z \vert} (-1)^{\vert L \vert - \vert S \vert - r} \binom{\vert L \setminus Z \vert}{r} 
    \\[1mm]
    &= (-1)^{\vert L \vert - \vert S \vert} \sum_{r=0}^{\vert L\setminus Z\vert}(-1)^r \binom{\vert L \setminus Z \vert}{r} = (-1)^{\vert L \vert - \vert S \vert}(-1+1)^{\vert L \setminus Z\vert} = 0,
\end{align*}
since $\vert L \setminus Z \vert >0$ due to $\vert L \vert > \vert Z \vert$.
In conclusion, all time-dependent effects vanish in $f_L$, which finishes the proof.
\end{proof}

\subsection{Proof of Corollary~\ref{th:hazard_survival_fct}}\label{sup:proof_3}

In Corollary~\ref{th:hazard_survival_fct}, we assume 
\begin{equation*}
    G(t|\bm{x}) = g_Z(t|\bm{x}) + \sum_{M \in \mathcal{I}_{id}} g_M(\bm{x}),
\end{equation*}
and $\{Z\} = \mathcal{I}_d$, with $Z$, $2 \leq |Z| < p$, is the only time-dependent index set in $G$ and $\mathcal{I}_{id}$ is the index set of time-independent features.  Then, for both the hazard $h(t|\bm{x}) = h_0(t) \exp(G(t|\bm{x}))$ and survival function $S(t|\bm{x}) = \exp\Bigl(-\int_0^t h_0(u) \exp(G(u|\bm{x})) du \Bigr)$, there exists 
\begin{itemize}
    \item[(i)] a subset $L \subset Z$, which appears as time-dependent in the respective functional decomposition (i.e., $f_L(t|\bm{x})$ is time-dependent) even when $g_L(\bm{x})$ is time-independent in $G$ and
    \item[(ii)] superset $L \supset Z$ such that the corresponding FD component $f_L(\cdot)$ becomes time-dependent even though $g_L(\cdot)$ is time-independent in $G$. 
\end{itemize}

\begin{proof}[Proof of (i) by Construction] 

Consider the three-dimensional case $\bm{x} = (x_1, x_2, x_3)$ and define
$$
    G(t|\bm{x}) = x_3 + l_1(t)x_1x_2,
$$
where the interaction term $g_{\{1, 2\}}(t|\bm{x}) = l_1(t)x_1x_2$ is time-dependent on some non-constant function of time $l_1(t)$, while the first-order terms $g_{\{1\}}(\bm{x}) = 0$, $g_{\{2\}}(\bm{x}) = 0$ and $g_{\{3\}}(\bm{x}) = x_3$ are time-independent ($g_{\{1\}}(\bm{x}) = g_{\{2\}}(\bm{x}) = 0$ assumed for simplicity, but implies time-independence as stated in Sec.~\ref{sec:survival_dist_assumptions}). Features are assumed to be mutually independent.  

We define set $L = \{1\}$, with $L \subset Z = \{1,2\}$. The first-order FD effect for the hazard function $h$ of $x_1$ is defined by (see Eq.~\eqref{eq:fanova_effect} and \eqref{eq:surv_fanova_effect})
$$
    f_L(t|\bm{x}) 
    = \mathbb{E}_{(X_2,X_3)}[h(t|(x_1, X_2, X_3))] 
    - \mathbb{E}_{\bm{X}}[h(t|\bm{X})].
$$
Now, we calculate both components of the formula above, separately. For the first part, it holds
\begin{align*}
    \mathbb{E}_{(X_2,X_3)}[h(t|(x_1, X_2, X_3))] &= \mathbb{E}_{(X_2,X_3)}\bigl[h_0(t) \exp(X_3 + l_1(t)X_1X_2)\bigr]\\[1mm] &= \mathbb{E}_{(X_2,X_3)}\bigl[h_0(t) \exp(X_3) \exp(l_1(t)X_1X_2)\bigr] \\[1mm] &=h_0(t)\,\mathbb{E}_{X_3}\!\bigl[\exp(X_3)\bigr]\,
   \mathbb{E}_{X_2}\bigl[\exp(l_1(t)\,x_1 X_2)\bigr],
\end{align*}
where the last step follows from the feature independence. For the second part, we derive
\begin{align}\label{eq:p3_exp}
    \mathbb{E}_{\bm X}\bigl[h(t | \bm X)\bigr]
    &= \mathbb{E}_{\bm X}\bigl[h_0(t) \exp(X_3 + l_1(t)X_1X_2)\bigr] \notag \\[1mm] &= \mathbb{E}_{\bm X}\bigl[h_0(t) \exp(X_3) \exp(l_1(t)X_1X_2)\bigr] \notag \\[1mm] &= h_0(t)\,\mathbb{E}_{X_3}\!\bigl[\exp(X_3)\bigr]\,
   \mathbb{E}_{(X_1,X_2)}\bigl[\exp(l_1(t)\,X_1 X_2)\bigr],
\end{align}
where, again, the last step follows from the feature independence. Together, we get the first-order pure FD effect of $x_1$ as
\begin{align}\label{eq:f1_haz}
    f_L(t | \bm{x}) &= h_0(t)\,\mathbb{E}_{X_3}\!\bigl[\exp(X_3)\bigr]\,
   \mathbb{E}_{X_2}\bigl[\exp(l_1(t)\,x_1 X_2)\bigr] - h_0(t)\,\mathbb{E}_{X_3}\!\bigl[\exp(X_3)\bigr]\,
   \mathbb{E}_{(X_1,X_2)}\bigl[\exp(l_1(t)\,X_1 X_2)\bigr] \notag \\[1mm]
    &= h_0(t)\,\mathbb{E}_{X_3}\!\bigl[\exp(X_3)\bigr]\,
    \Big\{\mathbb{E}_{X_2}\bigl[\exp(l_1(t)\,x_1 X_2)\bigr]
       - \mathbb{E}_{(X_1,X_2)}\bigl[\exp(l_1(t)\,X_1 X_2)\bigr]\Big\}.
\end{align}

The braced term in Eq.~\eqref{eq:f1_haz} depends on $l_1(t)$ and therefore on $t$, unless $l_1(t)$ is constant (or the distributions of $X_1,X_2$ are degenerate in a way that makes both expectations equal and independent of $l_1(t)$). Thus, in general $f_L(t | \bm{x})$ is time-dependent. This demonstrates that for the subset $L = \{1\} \subset Z = \{1,2\}$, $L$ appears as time-dependent in the FD decomposition despite $g_{\{1\}}(\bm{x})$ being time-independent in $G$.  

The same argument applies to the survival function $S(t|\bm{x})$, since it is a further monotone non-linear transform of the hazard function $h(t|\bm{x})$.
\end{proof}

\begin{proof}[Proof of (ii) by Construction]
Consider the three-dimensional case $\bm{x}=(x_1,x_2,x_3)$ and define
\begin{align}
    G\bigl(t| \bm{x}\bigr) = x_3 + l_1(t)\,x_1x_2.
\end{align}
Note, that the interaction term $g_{\{1,2, 3\}}(\bm{x}) = 0$ and thus time-independent as stated in the Definition in Sec.~\ref{sec:survival_dist_assumptions}. Now we define a set $L = \{1,2,3\}$ and it holds $L \supset Z = \{1,2\}$. We again assume mutually independent features. The third-order pure FD effect $f_L$ is defined using the Möbius Transformation (see Eq.~\eqref{eq:Moebius_trans}) as  
\begin{align*}
  f_{L}\bigl(t| \bm{x}\bigr) &= \sum_{S\subseteq L}(-1)^{|L|-\lvert S\rvert}\, \mathbb{E}_{\bm{X}_{\bar{S}}}
  \Bigl[h\bigl(t| \bm{X}\Bigr)\big|\bm{X}_S=\bm{x}_S\Bigr] \notag \\[1mm] &= \sum_{S\subseteq\{1,2,3\}}(-1)^{3-\lvert S\rvert}\, \mathbb{E}_{\bm{X}_{\bar{S}}}
  \Bigl[h\bigl(t| \bm{X}\Bigr)\big|\bm{X}_S=\bm{x}_S\Bigr].
\end{align*}

\noindent
We define the following shorthand functions:
\begin{align*}
  &A(t):=\mathbb{E}_{(X_1,X_2)}\!\bigl[\exp(l_1(t)X_1X_2)\bigr],\quad
  &B_1(t,x_1):=\mathbb{E}_{X_2}\!\bigl[\exp(l_1(t)\,x_1X_2)\bigr],\\
  &B_2(t,x_2):=\mathbb{E}_{X_1}\!\bigl[\exp(l_1(t)\,X_1x_2)\bigr],\quad
  &C:=\mathbb{E}_{X_3}\!\bigl[\exp(X_3)\bigr].
\end{align*}
Using these and the feature independence, we get
\begin{align*}
    \mathbb{E}_{\bm{X}}[h(t|\bm{X})]
      &= h_0(t) & C & &A(t) &&(\text{sign: -1}),\\
    \mathbb{E}_{(X_2, X_3)}\!\bigl[h(t|\bm{X}) | X_1=x_1\bigr]
      &= h_0(t) & C & & B_1(t,x_1) &&(\text{sign: 1}),\\
    \mathbb{E}_{(X_1, X_3)}\!\bigl[h(t|\bm{X}) | X_2=x_2\bigr]
      &= h_0(t) & C & &B_2(t,x_2) &&(\text{sign: 1}),\\
    \mathbb{E}_{(X_1, X_2)}\!\bigl[h(t|\bm{X}) | X_3=x_3\bigr]
      &= h_0(t) &&\quad\exp(x_3)&A(t) &&(\text{sign: 1}),\\
    \mathbb{E}_{X_{3}}\!\bigl[h(t|\bm{X}) | \bm{X}_{(1,2)}=\bm{x}_{(1,2)}\bigr]
      &= h_0(t)& C &&\exp\bigl(l_1(t)\,x_1x_2\bigr) &&(\text{sign: -1}),\\
    \mathbb{E}_{X_{2}}\!\bigl[h(t|\bm{X}) | \bm{X}_{(1,3)}=\bm{x}_{(1,3)}\bigr]
      &= h_0(t)&&\quad \exp(x_3)&B_1(t,x_1) &&(\text{sign: -1}),\\
    \mathbb{E}_{X_{1}}\!\bigl[h(t|\bm{X}) | \bm{X}_{(2,3)}=\bm{x}_{(2,3)}\bigr]
      &= h_0(t)&&\quad\exp(x_3)&B_2(t,x_2) &&(\text{sign: -1}),\\
    \mathbb{E}_{\varnothing}\!\bigl[h(t|\bm{X}) | \bm{X} = \bm{x}\bigr]
      &= h_0(t)&&\quad\exp\bigl(x_3) &\exp\bigl(l_1(t)\,x_1x_2\bigr) &&(\text{sign: 1}).
\end{align*}

\noindent
With the signs $(-1)^{3-\lvert S\rvert}\in\{-1,+1\}$, the terms factorize as
\begin{align*}
  f_{\{1,2,3\}}\bigl(t| \bm{x}\bigr) = h_0(t)\,\bigl(\exp(x_3)-C\bigr)\,
  \bigl[\,A(t)-B_1(t,x_1)-B_2(t,x_2) + \exp(l_1(t)\,x_1x_2)\,\bigr].
\end{align*}
The factor $\exp(x_3)-C$ is time-independent, while the bracket carries the entire $t$-dependence via $l_1(t)$. For non-degenerate $(X_1,X_2)$ and non-constant $l_1(t)$, the bracket is not identically zero; hence $f_{\{1,2,3\}}\bigl(t| \bm{x}\bigr)$ is (in general) time-dependent although $g_{\{1,2,3\}}\equiv 0$ in $G$.

The same argument applies to the survival function $S(t | \bm{x})$, since it is a monotone non-linear transformation of the hazard $h(t | \bm{x})$.
\end{proof}

\subsection{Proof of Proposition~\ref{prop:hazard_interactions}}\label{sup:proof_4}
\begin{proof}
Let $G(t|\bm{x}) = \sum_{j=1}^p \beta_j x_j$ be a linear CoxPH model with mutually independent features and $\beta_1, \ldots, \beta_p \neq 0$ , and define
$h(t|\bm{x}) = h_0(t)\exp(G(t|\bm{x}))$ (Eq.~\eqref{eq:hazard_fct}) and $S(t|\bm{x}) = \exp\big(-\int_0^t h(u|\bm{x})\, du\big)$ (Eq.~\eqref{eq:survival_haz_rel}).  

For any pair of features $i\neq j$, the hazard satisfies
\[
h(t|\bm{x}) = h_0(t) \exp(\beta_i x_i)\exp(\beta_j x_j)\prod_{k\neq i,j}\exp(\beta_k x_k),
\]
so the functional component corresponding to $\{i,j\}$ is
\begin{align*}
    f_{\{i,j\}}(t |x_i,x_j) &= \mathbb{E}[h(t|\bm{X}) | X_i = x_i,X_j = x_j] - \mathbb{E}[h(t|\bm{X}) | X_i = x_i] - \mathbb{E}[h(t|\bm{X}) | X_j = x_j] + \mathbb{E}[h(t|\bm{X})]\\
    &= \underbrace{h_0(t) \prod_{k\neq i,j}\mathbb{E}[\exp(\beta_k X_k)]}_{\neq 0} \Big( \exp(\beta_i x_i) - \mathbb{E}[\exp(\beta_i X_i)]\Big) \Big( \exp(\beta_j x_j) - \mathbb{E}[\exp(\beta_j X_j)]\Big).
\end{align*}

Since $\beta_i, \beta_j \neq 0$ and the exponential function is strictly monotone, there exist values $x_i$ and $x_j$ for which $\exp(\beta_i x_i) \neq \mathbb{E}[\exp(\beta_i X_i)]$ and $\exp(\beta_j x_j) \neq \mathbb{E}[\exp(\beta_j X_j)]$, implying $f_{\{i,j\}}(t |x_i,x_j) \neq 0$. 

By the same argument, higher-order products yield nonzero SurvFD components, and since $S(t|\bm{x})$ is a strictly monotone function of the integral of $h(t|\bm{x})$, its decomposition also contains nonzero interactions.
\end{proof}

\subsection{Proof of Theorem~\ref{th:survfd_dependent}}\label{sup:proof_4}

\begin{proof}
Let $G(t | \bm x)$ be as in Eq.~\eqref{eq:general_G}, and let $j$ be a feature with no direct effect on $G$, i.e., $\{j\} \in \mathcal{I}_{id}$ and $G$ does not depend on $X_j$. 
Thus, we have
\begin{align*}
    G(t | \bm x) = G(t | \bm x_{\bar{\{j\}}}),
\end{align*}
which implies the joint marginal feature distribution of $\bm X_{\bar{\{j\}}}$ to be $\mathbb{P}(\bm X_{\bar{\{j\}}}) = \mathbb{P}(\bm X)$.

\textbf{Marginal FD:} By definition,
\begin{align*}
    f_{\{j\}}^{\mathrm{marginal}}(t | \bm x) & = \mathbb{E}[G(t| \bm X_{\bar{\{j\}}}) | X_j = x_j] - \mathbb{E}[G(t| \mathbf X)]\\
    &= \int G(t | \mathbf X_{\bar{\{j\}}}) \, d\mathbb{P}(\mathbf X_{\bar{\{j\}}}) - \int G(t | \mathbf X) \, d\mathbb{P}(\mathbf X)\\
    &= \int G(t | \mathbf X) \, d\mathbb{P}(\mathbf X) - \int G(t | \mathbf X) \, d\mathbb{P}(\mathbf X)\\
    &= 0,
\end{align*}

so both integrals are identical, yielding
$$
f_{\{j\}}^{\mathrm{marginal}}(t | \bm x) \equiv 0.
$$

\textbf{Conditional FD:} By definition,
\begin{align*}
    f_{\{j\}}^{\mathrm{conditional}}(t | \bm x) 
&= \mathbb{E}[G(t| \mathbf X_{\bar{\{j\}}}) | X_j = x_j] - \mathbb{E}[G(t| \mathbf X)]\\ 
& = \int G(t | \mathbf X) \, d\mathbb{P}(\mathbf X_{\bar{\{j\}}} | X_j=x_j) - \int G(t | \mathbf X) \, d\mathbb{P}(\mathbf X).
\end{align*}

If $X_j$ is dependent on a time-dependent feature $X_k$ (i.e., $\{k\} \in \mathcal{I}_d$), then
\begin{align*}
\mathbb{P}(\mathbf X_{\bar{\{j\}}} | X_j=x_j) \neq \mathbb{P}(\mathbf X_{\bar{\{j\}}}),
\end{align*}
so conditioning on $X_j$ changes the distribution of $X_k$ in the integral. Since $G$ depends on $X_k$, this produces a nonzero contribution:
\begin{align*}
    f_{\{j\}}^{\mathrm{conditional}}(t | \bm x) = \int G(t | \mathbf X) \, d\mathbb{P}(\mathbf X_{\bar{\{j\}}} | X_j=x_j) - \int G(t | \mathbf X) \, d\mathbb{P}(\mathbf X)\not\equiv 0.
\end{align*}

Hence, the conditional FD component of $X_j$ is nonzero and if $X_k$ is time-dependent, the effect can vary with $t$, while the marginal FD is always zero.
\end{proof}

\subsection{Relationship between Surv-FD and Shapley Interactions}\label{sup:proof_4}

Here, we show how the definition of Shapley interactions relates to our introduced Surv-FD definition.

Shapley interactions, defined as in Eq.~\eqref{eq:shapiq}, are a weighted sum over all subsets $M$ of the discrete derivative defined by
\begin{align*}
    \Delta_K(M) := \sum_{L \subseteq K} (-1)^{\vert K\vert - \vert L \vert} \nu(t|M \cup L).
\end{align*}

The definition of the discrete derivative relates to another game theoretic concept, which is the \emph{Möbius Transform} \citep{grabisch1999axiomatic}. The Möbius Transform at some timepoint $t \in \mathcal{T}$ is defined by the discrete derivative for $M = \emptyset$:
\begin{align*}
     \Delta_K(\emptyset) := \sum_{L \subseteq K} (-1)^{\vert K\vert - \vert L \vert} \, \nu(t| L).
\end{align*}
It has been shown that both Shapley values and Shapley interaction indices can be represented by the Möbius Transform \citep{grabisch1999axiomatic}.

The Möbius Transform also directly relates to the functional decomposition as shown by \cite{fumagalli2025unifying}. In our time-dependent setting, for $\nu(t| K) = \mathbb{E}_{\bm{X}_{\bar K}}[F(t| X) | \bm{X}_{K} = \bm{x}_K]$, i.e., where the value function is defined by the joint marginal effect of features in $K$ for some $t$, the Möbius Transform corresponds to the pure Surv-FD effects:
\begin{align*}
     f_K(t|\bm{x}) := \sum_{L \subseteq K} (-1)^{\vert K\vert - \vert L \vert} \; \mathbb{E}_{\bm{X}_{\bar L}}[F(t| X)| \bm{X}_{L} = \bm{x}_{L}].
\end{align*}

It has been shown that Shapley interaction indices recover the pure effects of the Möbius Transform when they are computed up to the full feature order, i.e., when all subsets \(K \subseteq \{1, \dots, p\}\) are considered \citep{bordt2023shapley}. In this case, the Shapley interaction indices coincide with the pure effects \(f_K(t | \mathbf{x})\) from the functional decomposition when the value function \(\nu(t | K)\) is defined as above. When the computation is restricted to interactions up to a predefined order \(n\), higher-order Möbius terms (\(|K| > n\)) are redistributed among lower-order Shapley effects. Nevertheless, even in this truncated setting, the resulting Shapley interaction indices remain expressible as linear combinations of the underlying Möbius components.

\subsection{Time and Memory Complexity of SurvSHAP-IQ}

The exact method has exponential time complexity
$O\bigl(|T| \cdot 2^{p}\bigr)$, where \(p\) is the number of features and \(|T|\) is the number of timepoints of interest.  
The memory complexity is $O(2^{p})$, because explanations can be computed sequentially and therefore do not require storing results for all timepoints simultaneously. The regression-based SurvSHAP-IQ approximation reduces the computation to polynomial time:
$O\bigl(|T| \cdot (|B| \cdot p^{2k} + p^{3k})\bigr)$,
where $k$ denotes the interaction order and $B$ is the number of basis samples.  
The corresponding memory complexity is $O(|B| \cdot p^{k} + p^{2k})$, which again does not depend on \(|T|\) due to sequential computation across time.

\clearpage
\section{Empirical Validation}
\subsection{Experiments with Simulated Data and Risk Scores under Independence Assumption}\label{supp:experiments_sim}

\subsubsection{SurvSHAP-IQ Attributions}\label{sup:survshapiq_attributions}

\textbf{Simulation Setting.}
For each simulation scenario, the hazard function for an individual observation is given by
\begin{align}\label{eq:app_haz}
    h(t|\bm{x}) = \lambda \exp (G(t|\bm{x})),
\end{align}
where $G(t|\bm{x})$ specifies the log-risk function. The exact forms of $G(t|\bm{x})$ for the ten scenarios are listed in Tab.~\ref{tab:sim_scenarios}. The features are independently drawn as $x_1, x_2, x_3 \sim \mathcal{N}(0, 1)$, and the model parameters are fixed to $\lambda = 0.03$, $\beta_1 = 0.4$, $\beta_2 = -0.8$, $\beta_3 = -0.6$, $\beta_{12} = -0.5$, and $\beta_{13} = 0.2$. Event times $t^{(i)}$ are simulated using the approach of \citet{bender2005generating} for proportional hazards models and \citet{crowther2013simulating} for non-proportional hazards models, implemented via the \texttt{simsurv} package~\citep{brilleman2021simulating}. Right-censoring is applied by setting $t^{(i)} = 70$ for all observations with $t^{(i)} \geq 70$.

\begin{table}[h]
\centering
\begin{threeparttable}
\caption{Overview of the ten simulation scenarios.}
\begin{tabular}{cl}
\toprule
\# & $G(t|\bm{x})$ \\
\midrule
(\textbf{1}) & $x_1 \beta_1 + x_2 \beta_2 + x_3 \beta_3$ (linear $G(t|\bm{x})$, TI) \\
(\textbf{2}) & $x_1 \beta_1 \log(t+1) + x_2 \beta_2 + x_3 \beta_3$ (linear $G(t|\bm{x})$, TD main) \\
(\textbf{3}) & $x_1 \beta_1 + x_2 \beta_2 + x_3 \beta_3 + x_1 x_3 \beta_{13}$ (linear $G(t|\bm{x})$, TI, interaction) \\
(\textbf{4}) & $x_1 \beta_1 \log(t+1) + x_2 \beta_2 + x_3 \beta_3 + x_1 x_3 \beta_{13}$ (linear $G(t|\bm{x})$, TD main, interaction) \\
(\textbf{5}) & $x_1 \beta_1 + x_2 \beta_2 + x_3 \beta_3 + x_1 x_3 \beta_{13} \log(t+1)$ (linear $G(t|\bm{x})$, TD interaction) \\
(\textbf{6}) & $\beta_1 x_1^2 + \beta_2 \frac{2}{\pi}\arctan(0.7 x_2) + \beta_3 x_3$ (generalized additive $G(t|\bm{x})$, TI) \\
(\textbf{7}) & $\beta_1 x_1^2 \log(t+1) + \beta_2 \frac{2}{\pi}\arctan(0.7 x_2) + \beta_3 x_3$ (generalized additive $G(t|\bm{x})$, TD main) \\
(\textbf{8}) & $\beta_1 x_1^2 + \beta_2 \frac{2}{\pi}\arctan(0.7 x_2) + \beta_3 x_3 + \beta_{12} x_1 x_2 + \beta_{13} x_1 x_3^2$ (generalized additive  $G(t|\bm{x})$, TI, interaction) \\
\multirow{2}{*}{(\textbf{9})} & $\beta_1 x_1^2 \log(t+1) + \beta_2 \frac{2}{\pi}\arctan(0.7 x_2) + \beta_3 x_3 + \beta_{12} x_1 x_2 + \beta_{13} x_1 x_3^2$ \\
    & (generalized additive $G(t|\bm{x})$, TD main, interaction) \\
\multirow{2}{*}{(\textbf{10})} & $\beta_1 x_1^2 + \beta_2 \frac{2}{\pi}\arctan(0.7 x_2) + \beta_3 x_3 + \beta_{12} x_1 x_2 + \beta_{13} x_1 x_3^2 \log(t+1)$ \\
    & (generalized additive $G(t|\bm{x})$, TD interaction) \\
\bottomrule
\end{tabular}
\begin{tablenotes}
\footnotesize
\item TI = time-independent, TD = time-dependent
\end{tablenotes}
\label{tab:sim_scenarios}
\end{threeparttable}
\end{table}

In total, $n = 1{,}000$ observations are simulated and randomly split into a training set ($n = 800$) and a test set~($n = 200$). Two models are then fitted to the training data: (1) a standard Cox proportional hazards~(CoxPH) model including only main effects, and (2) a gradient-boosted survival analysis model (GBSA) using a CoxPH loss with regression trees as base learners. All hyperparameters are kept at their default values as implemented in the \texttt{scikit-survival} package~\citep{sksurv}. Additional implementation details and code are provided in the \href{https://github.com/sophhan/survshapiq}{online supplement}.

\begin{table}[h]
\centering
\begin{threeparttable}
\caption{Comparison of model performance across ten simulation scenarios for CoxPH and GBSA using Integrated Brier Score (IBS) and Concordance Index (C-Index). Lower IBS and higher C-Index indicate better performance.}
\begin{tabular}{llcc}
\toprule
Model & Scenario & IBS $\downarrow$ & C-Index $\uparrow$ \\
\midrule
\multirow{10}{*}{CoxPH} 
    & (\textbf{1}) linear $G(t|\bm{x})$ TI, no interactions & 0.143 & 0.759 \\
    & (\textbf{2}) linear $G(t|\bm{x})$ TD main, no interactions & 0.118 & 0.788 \\
    & (\textbf{3}) linear $G(t|\bm{x})$ TI, interactions & 0.173 & 0.725 \\
    & (\textbf{4}) linear $G(t|\bm{x})$ TD main, interactions & 0.172 & 0.707 \\
    & (\textbf{5}) linear $G(t|\bm{x})$ TD interactions & 0.202 & 0.673 \\
    & (\textbf{6}) generalized additive $G(t|\bm{x})$ TI, no interactions & 0.151 & 0.673 \\
    & (\textbf{7}) generalized additive $G(t|\bm{x})$ TD main, no interactions & 0.134 & 0.645 \\
    & (\textbf{8}) generalized additive $G(t|\bm{x})$ TI, interactions & 0.162 & 0.655 \\
    & (\textbf{9}) generalized additive $G(t|\bm{x})$ TD main, interactions & 0.137 & 0.655 \\
    & (\textbf{10}) generalized additive $G(t|\bm{x})$ TD interactions & 0.169 & 0.658 \\
\midrule
\multirow{10}{*}{GBSA} 
    & (\textbf{1}) linear $G(t|\bm{x})$ TI, no interactions & 0.152 & 0.742 \\
    & (\textbf{2}) linear $G(t|\bm{x})$ TD main, no interactions & 0.126 & 0.776 \\
    & (\textbf{3}) linear $G(t|\bm{x})$ TI, interactions & 0.149 & 0.780 \\
    & (\textbf{4}) linear $G(t|\bm{x})$ TD main, interactions & 0.149 & 0.743 \\
    & (\textbf{5}) linear $G(t|\bm{x})$ TD interactions & 0.145 & 0.780 \\
    & (\textbf{6}) generalized additive $G(t|\bm{x})$ TI, no interactions & 0.151 & 0.696 \\
    & (\textbf{7}) generalized additive $G(t|\bm{x})$ TD main, no interactions & 0.115 & 0.703 \\
    & (\textbf{8}) generalized additive $G(t|\bm{x})$ TI, interactions & 0.149 & 0.697 \\
    & (\textbf{9}) generalized additive $G(t|\bm{x})$ TD main, interactions & 0.111 & 0.744 \\
    & (\textbf{10}) generalized additive $G(t|\bm{x})$ TD interactions & 0.151 & 0.702 \\
\bottomrule
\end{tabular}
\begin{tablenotes}
\footnotesize
\item TI = time-independent, TD = time-dependent. 
\item IBS = Integrated Brier Score (lower is better). C-Index = Concordance Index (higher is better).
\end{tablenotes}
\label{tab:model_performance_10scenarios}
\end{threeparttable}
\end{table}

\textbf{Model performance.}
Table~\ref{tab:model_performance_10scenarios} reports the Concordance Index (C-Index) and Integrated Brier Score (IBS) as overall performance metrics. GBSA consistently outperforms the CoxPH model across all scenarios, except for the two linear specifications without interactions, where the correctly specified CoxPH model performs comparably. These results highlight the ability of GBSA to capture complex, non-linear relationships beyond the scope of the CoxPH model.

\begin{figure}[h]
    \centering
        \includegraphics[width=0.8\textwidth]{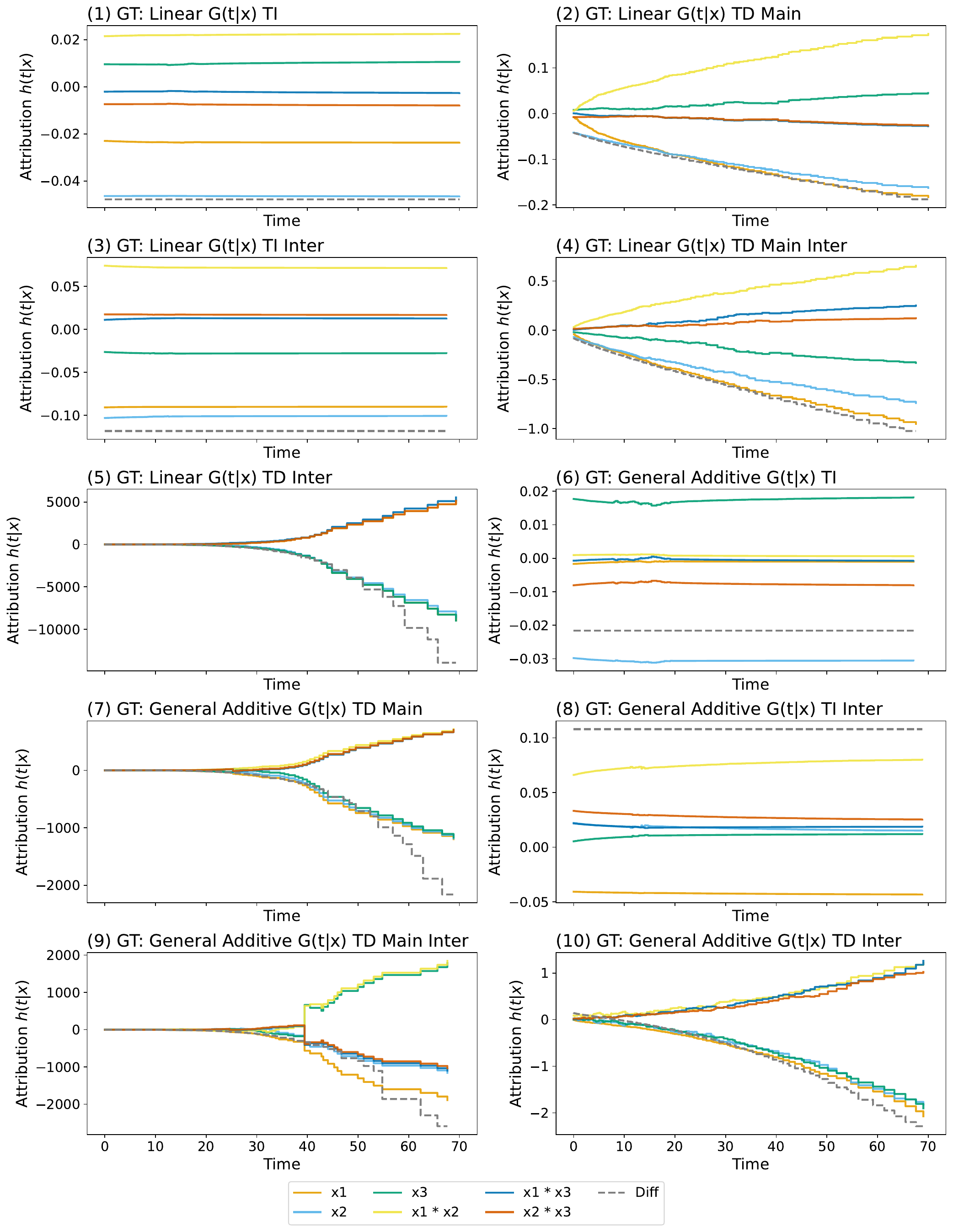}
    \caption{Exact SurvSHAP-IQ decomposition attribution curves for a selected observation $[-1.2650,  2.4162, -0.6436]$ computed on the ground-truth hazard functions of all ten simulation scenarios. The plots show feature- and interaction-specific hazard attributions (colored curves, Y-axis) over time (X-axis). The reference curve (differences in individual hazard and mean hazard) is overlaid in a dashed grey line.}
    \label{fig:sim_hazard_gt}
\end{figure}

\begin{figure}[h]
    \centering
        \includegraphics[width=0.8\textwidth]{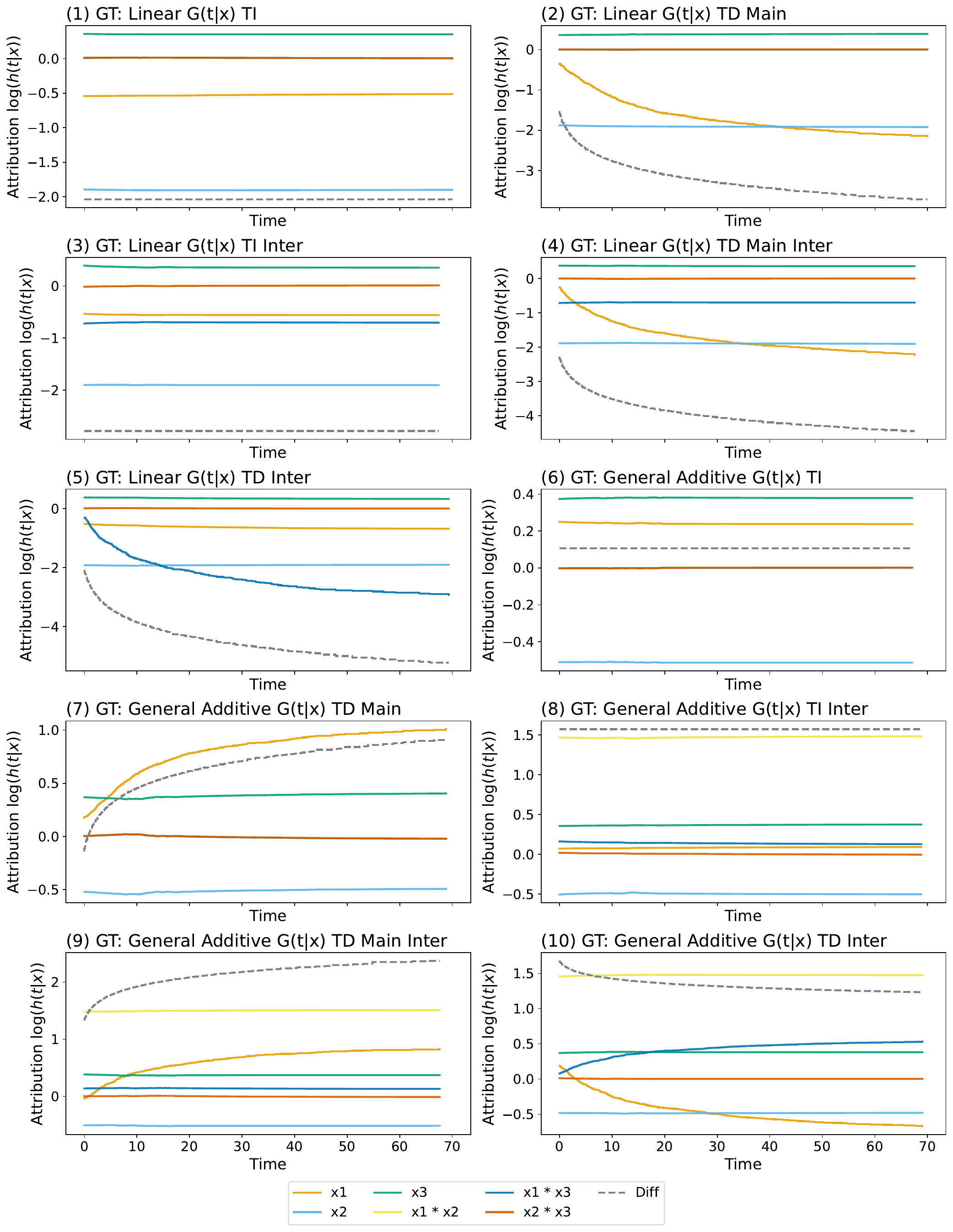}
    \caption{Exact SurvSHAP-IQ decomposition curves for a selected observation $[-1.2650,  2.4162, -0.6436]$ computed on the ground-truth log-hazard functions of all ten simulation scenarios. The plots show feature- and interaction-specific log-hazard attributions (colored curves, Y-axis) over time (X-axis). The reference curve (differences in individual log-hazard and mean log-hazard) is overlaid in a dashed grey line.}
    \label{fig:sim_loghazard_gt}
\end{figure}

\begin{figure}[h]
    \centering
        \includegraphics[width=0.8\textwidth]{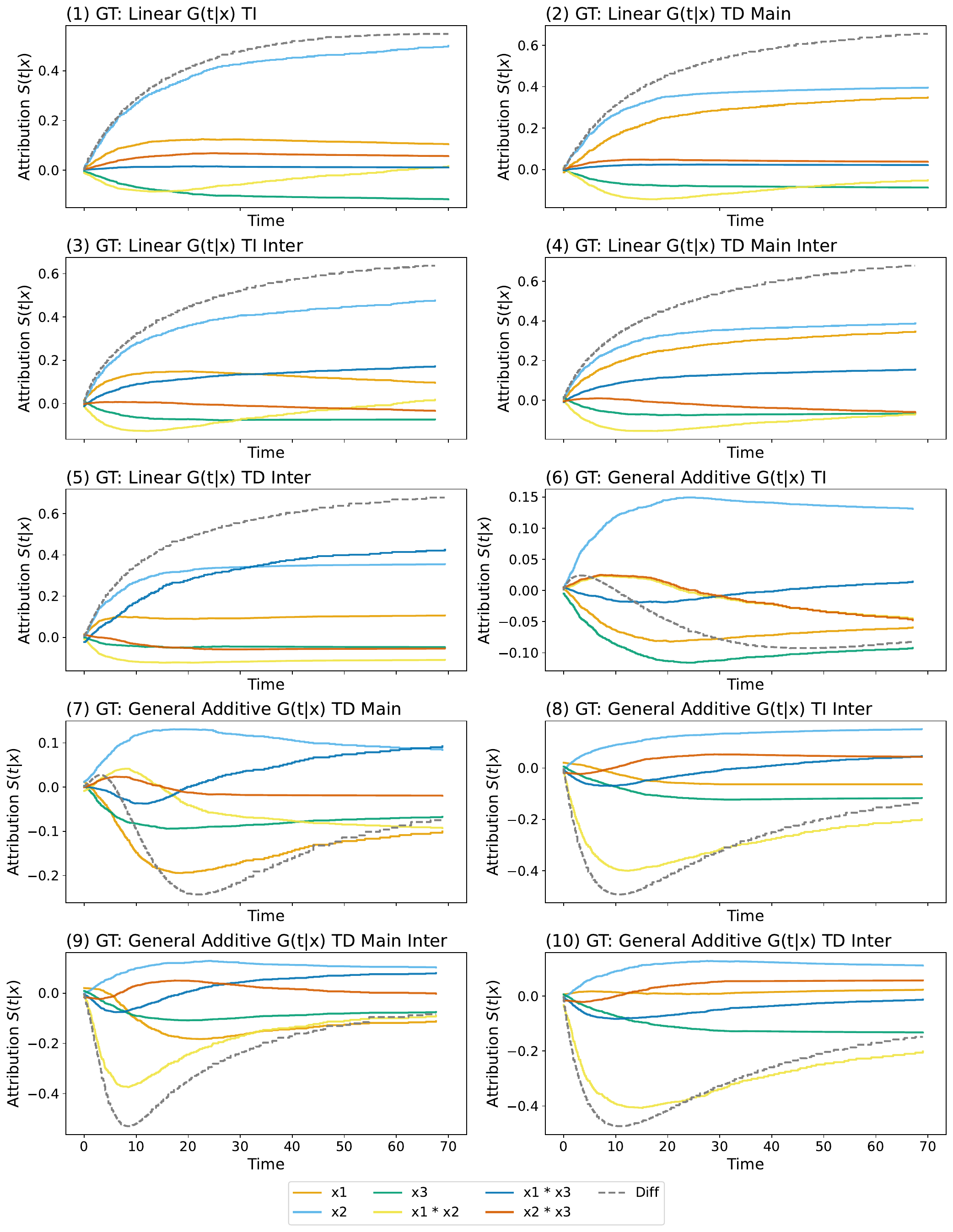}
    
    \caption{Exact SurvSHAP-IQ decomposition curves for a selected observation $[-1.2650,  2.4162, -0.6436]$ computed on the approximated ground-truth survival functions of all ten simulation scenarios. The plots show feature- and interaction-specific survival attributions (colored curves, Y-axis) over time (X-axis). The reference curve (differences in individual survival and mean survival) is overlaid in a dashed grey line.}
    \label{fig:sim_survival_gt}
\end{figure}

\begin{figure}[h]
    \centering
        \includegraphics[width=0.8\textwidth]{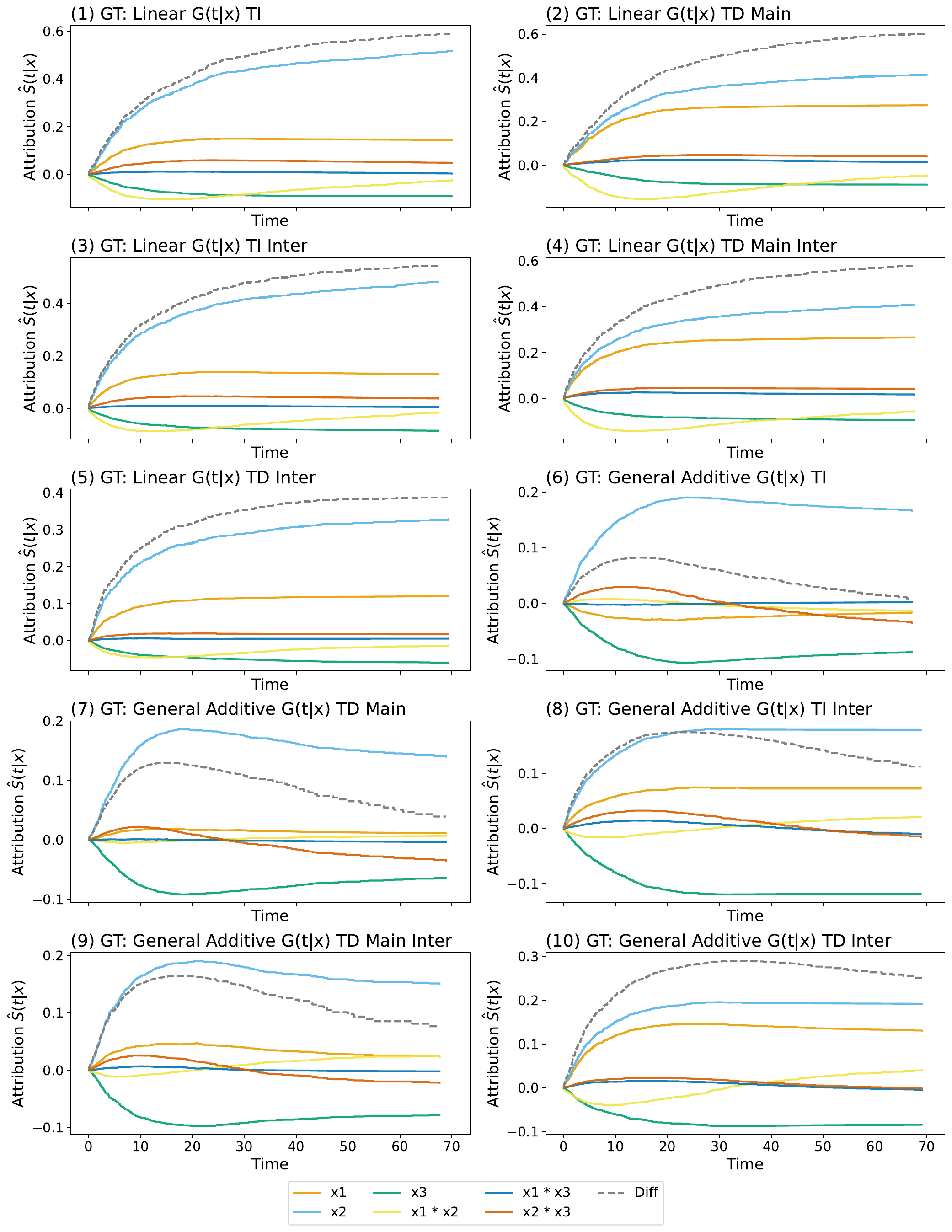}
    \caption{Exact SurvSHAP-IQ decomposition curves for a selected observation $[-1.2650,  2.4162, -0.6436]$ computed on the predicted survival functions of fitted CoxPH for all ten simulation scenarios. The plots show feature- and interaction-specific survival attributions (colored curves, Y-axis) over time (X-axis). The reference curve (differences in individual survival and mean survival) is overlaid in a dashed grey line.}
    \label{fig:sim_survival_gbsa}
\end{figure}

\begin{figure}[h]
        \centering
        \includegraphics[width=0.8\textwidth]{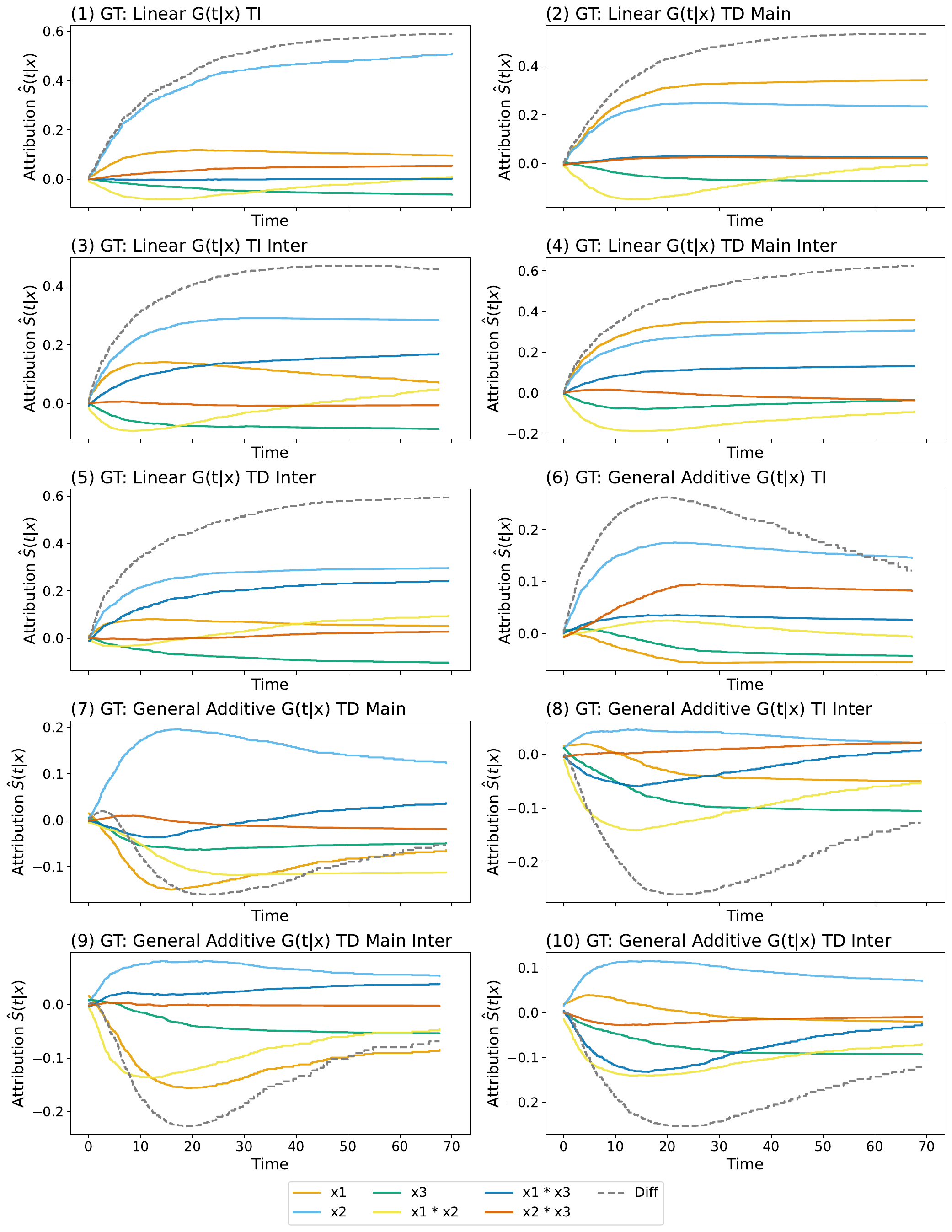}
    \caption{Exact SurvSHAP-IQ decomposition curves for a selected observation $[-1.2650,  2.4162, -0.6436]$ computed on the predicted survival functions of fitted GBSA models for all ten simulation scenarios. The plots show feature- and interaction-specific survival attributions (colored curves, Y-axis) over time (X-axis). The reference curve (differences in individual survival and mean survival) is overlaid in a dashed grey line.}
    \label{fig:sim_survival_cox}
\end{figure}

\clearpage
\subsubsection{Local Accuracy}\label{sup:local_accuracy}

The concept of \emph{local accuracy} originates from Shapley values and states that the sum of all feature attributions equals the difference between an individual prediction and the expected prediction over the data~\citep{lundberg2017unified}:
\begin{align*}
\sum_{j=1}^p \phi_j = F(\bm{x}) - \mathbb{E}_{\bm{x}}[F(\bm{x})].
\end{align*}
In survival analysis, this property must hold \emph{at each timepoint}, since survival probabilities evolve over time and decrease monotonically. To account for this, \citeauthor{krzyzinski2023survshap} introduce a \emph{time-dependent local accuracy measure}:
\begin{align}\label{eq:local_accuracy} 
\Phi(t) &= \sum_{M \subseteq P: \vert M \vert \leq k} \phi^{(k)}_M(t|\bm{x}) \\ 
\sigma(t) &= \sqrt{\frac{\mathbb{E} \left[\left(F(t | \bm{x}) - \mathbb{E} [F(t | \bm{X})] - \Phi(t)\right)^2 \right]}{\mathbb{E} \left[F(t | \bm{X})^2\right]}}.
\end{align} 
This formulation normalizes errors by the magnitude of the target function, giving proportionally more weight to discrepancies at later timepoints where survival probabilities are smaller. From Fig.~\ref{fig:sim_la}, we observe that local accuracy values are consistently very low for the ground-truth hazard and log-hazard, as these quantities can be exactly decomposed by SurvSHAP-IQ. They are slightly higher for the survival function, which is approximated from the hazard, and highest for predicted survival functions, where model bias, smoothing, and regularization contribute to approximation error.

To summarize performance over the full time horizon, the local accuracy can be averaged:
\begin{align*} 
\bar{\sigma} = \frac{1}{|T|}\sum_{t \in T} \sigma(t), 
\end{align*} 
yielding a single measure of how well the SurvSHAP-IQ decomposition reconstructs the model or ground-truth function across time (see Tab.~\ref{tab:local_accuracy_10scenarios}).

\begin{table}[h]
\centering
\begin{threeparttable}
\caption{Average local accuracy over time comparing ground-truth hazard, log-hazard, and survival function with predicted survival from CoxPH and GBSA models, evaluated on ten simulated scenarios. Values are low, indicating high accuracy of the decomposition.}
\begin{tabular}{lcccccc}
\toprule
\textbf{Scenario} & $h(t|\bm{x})$  & $\log(h(t|\bm{x}))$ & $S(t|\bm{x})$ & $\hat{S}_{CoxPH}(t|\bm{x})$ & $\hat{S}_{GBSA}(t|\bm{x})$ \\
\midrule
(\textbf{1}) linear $G(t|\bm{x})$ TI, no inter. & $<$0.00001 & $<$0.00001 & $<$0.00001 & 0.00315 & 0.00552  \\
(\textbf{2}) linear $G(t|\bm{x})$ TD main, no inter. & $<$0.00001 & $<$0.00001 & 0.00200 & 0.00416 & 0.00425 \\
(\textbf{3}) linear $G(t|\bm{x})$ TI, inter. & $<$0.00001 & $<$0.00001 & $<$0.00001 & 0.00317 & 0.00180  \\
(\textbf{4}) linear $G(t|\bm{x})$ TD main, inter. & $<$0.00001 & $<$0.00001 & 0.00160 & 0.00408 & 0.00216 \\
(\textbf{5}) linear $G(t|\bm{x})$ TD inter. & $<$0.00001 & $<$0.00001 & 0.00215 & 0.00258 & 0.00349 \\
(\textbf{6}) general additive $G(t|\bm{x})$ TI, no inter. & $<$0.00001 & $<$0.00001 & $<$0.00001 & 0.00157 & 0.01061 \\
(\textbf{7}) general additive $G(t|\bm{x})$ TD main, no inter. & $<$0.00001 & $<$0.00001 & 0.00174 & 0.00044 & 0.01295 \\
(\textbf{8}) general additive $G(t|\bm{x})$ TI, inter. & $<$0.00001 & $<$0.00001 & $<$0.00001 & 0.00039 & 0.00946 \\
(\textbf{9}) general additive $G(t|\bm{x})$ TD main, inter. & $<$0.00001 & $<$0.00001 & 0.00181 & 0.00028 & 0.01350 \\
(\textbf{10}) general additive $G(t|\bm{x})$ TD inter. & $<$0.00001 & $<$0.00001 & 0.00178 & 0.00196 & 0.00902 \\
\bottomrule
\end{tabular}
\begin{tablenotes}
\footnotesize
\item TI = time-independent, TD = time-dependent, inter. = interaction.
\item $<$0.00001 indicates negligible decomposition error.
\end{tablenotes}
\label{tab:local_accuracy_10scenarios}
\end{threeparttable}
\end{table}

\begin{figure}[h]
        \centering
        \includegraphics[width=0.8\textwidth]{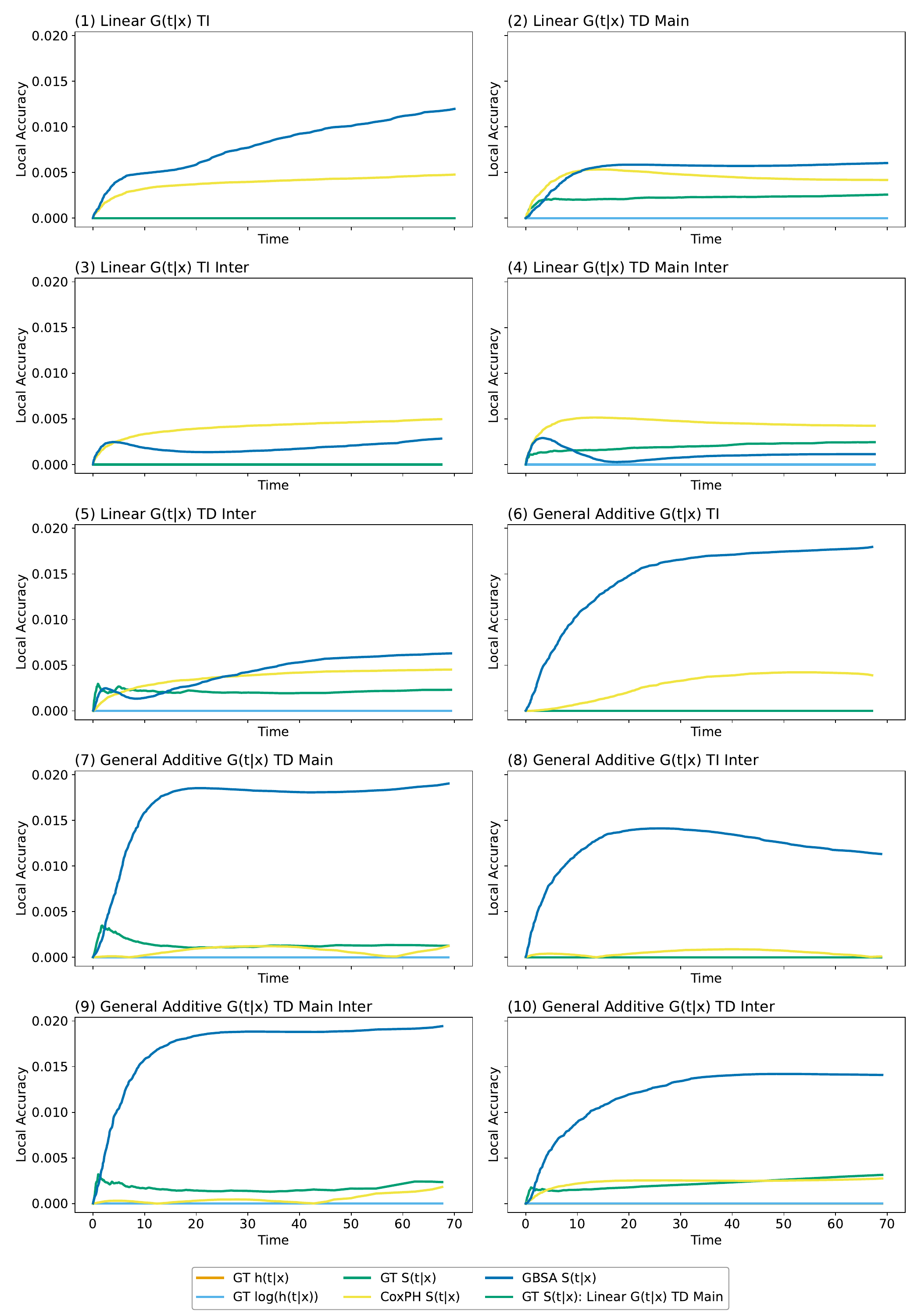}
    \caption{Local accuracy curves over time (X-axis) computed for the ground-truth (GT) hazard, log-hazard, survival and predicted survival functions of CoxPH and GBSA models (colored curves, Y-axis) for all observations from the 10 simulation scenarios. Local accuracy is lower for ground-truth functions than for model prediction functions.}
    \label{fig:sim_la}
\end{figure}

\clearpage
\subsection{Experiments with Simulated Data and Risk Scores with Dependent Features}\label{supp:exp_dep}

\textbf{Correlated features with marginal SurvFD.} We simulate the same ten scenarios as described in Sec.~\ref{sup:survshapiq_attributions} under identical conditions, except that the three features 
\(x_1, x_2, x_3\) are now generated based on a multivariate normal distribution
$
(x_1, x_2, x_3) \sim \mathcal{N}(\mathbf{0}, \Sigma)$,
where $\Sigma$ has unit variances and pairwise correlation  
\(\rho \in \{0, 0.2, 0.5, 0.9\}\). 
For each experimental setting, we compute the exact order-2 SurvSHAP-IQ decomposition (i.e., interventional SurvSHAP-IQ; cf.~Eq.~\eqref{eq:shapiq}) using log-hazard SHAP value functions evaluated on the full dataset. 
To assess how dependencies between features impact the decomposition and the separation of time-dependent and time-independent effects, the resulting attributions in Table~\ref{tab:dep_sim_results} are averaged over time and their standard deviations are computed, for the same single randomly selected observation as in Sec.~\ref{sup:survshapiq_attributions}
\([x_1=-1.265,\, x_2=2.416,\, x_3=-0.644]\). 
We observe that especially in scenarios without interaction effects (scenarios \textbf{(1 - 2)} and \textbf{(6 - 7)}), feature dependence has minimal impact on the attribution results, as indicated by similar mean and standard deviation values across correlation levels. In contrast, for scenarios with time-dependent interactions in the ground truth, the model can sometimes misattribute part of the interaction effect to lower-order main effects. For instance, in scenario \textbf{(5)}, the time-dependent interaction between $x_1$ and $x_3$ is partially attributed to the main effect of $x_1$ for higher feature correlations ($\rho=0.5$ and $\rho=0.9$), reflected in increased standard deviations and deviations in the mean values. However, this misattribution is not universal: in scenario \textbf{(10)}, the time dependence is correctly attributed solely to the interaction $x_1x_3$, even for higher correlations.
Again, we note that analyzing a different observation leads to qualitatively symmetric conclusions. In summary, feature dependencies introduce an additional layer of complexity for interpretation. Crucially, using the joint marginal distribution as the reference distribution in SurvFD yields attributions that are ``true to the model'', in that they directly reflect main and interaction effects learned by the model.

\begin{table}[h]
\centering
\small
\setlength{\tabcolsep}{4pt}
\renewcommand{\arraystretch}{0.95}
\caption{Time-wise mean and standard deviation of the exact SurvSHAP-IQ decomposition results for a selected observation $[-1.2650,  2.4162, -0.6436]$ computed on the ground-truth log-hazard functions for all ten simulation scenarios for four different correlation levels ($\rho = 0, 0.2, 0.5, 0.9$). Each cell reports the mean over time of the SurvSHAP-IQ attributions, with the standard deviation given in parentheses.}
\begin{tabular}{c c c c c c c c}
\toprule
Scenario & $\rho$ & $x_1$ & $x_2$ & $x_3$ & $x_1 * x_2$ & $x_1 * x_3$ & $x_2 * x_3$ \\
\midrule

\multirow{4}{*}{\textbf{(1)}}
 & 0.0 & $-0.5167(0.0073)$ & $-1.9000(0.0040)$ & $0.3750(0.0053)$ & $0.0015(0.0026)$ & $0.0015(0.0026)$ & $0.0015(0.0026)$ \\
 & 0.2 & $-0.5245(0.0079)$ & $-1.9449(0.0024)$ & $0.3619(0.0042)$ & $0.0040(0.0028)$ & $0.0040(0.0028)$ & $0.0040(0.0028)$ \\
 & 0.5 & $-0.5061(0.0089)$ & $-1.9581(0.0012)$ & $0.3860(0.0029)$ & $0.0069(0.0022)$ & $0.0069(0.0022)$ & $0.0069(0.0022)$ \\
 & 0.9 & $-0.4979(0.0137)$ & $-1.9305(0.0003)$ & $0.4067(0.0008)$ & $-0.0065(0.0024)$ & $-0.0065(0.0024)$ & $-0.0065(0.0024)$ \\
\midrule

\multirow{4}{*}{\textbf{(2)}}
 & 0.0 & $-1.2849(0.3242)$ & $-1.9084(0.0103)$ & $0.3709(0.0107)$ & $0.0042(0.0051)$ & $0.0042(0.0051)$ & $0.0042(0.0051)$ \\
 & 0.2 & $-1.2688(0.2982)$ & $-1.9437(0.0090)$ & $0.3580(0.0100)$ & $0.0010(0.0047)$ & $0.0010(0.0047)$ & $0.0010(0.0047)$ \\
 & 0.5 & $-1.2745(0.2850)$ & $-1.9512(0.0078)$ & $0.3902(0.0072)$ & $0.0059(0.0030)$ & $0.0059(0.0030)$ & $0.0059(0.0030)$ \\
 & 0.9 & $-1.4045(0.3241)$ & $-1.9261(0.0050)$ & $0.4021(0.0037)$ & $0.0020(0.0009)$ & $0.0020(0.0009)$ & $0.0020(0.0009)$ \\
\midrule

\multirow{4}{*}{\textbf{(3)}}
 & 0.0 & $-0.5292(0.0093)$ & $-1.9179(0.0069)$ & $0.3692(0.0118)$ & $0.0076(0.0040)$ & $-0.7246(0.0147)$ & $0.0076(0.0040)$ \\
 & 0.2 & $-0.3951(0.0058)$ & $-1.9451(0.0092)$ & $0.5337(0.0076)$ & $-0.0007(0.0050)$ & $-0.8541(0.0125)$ & $-0.0007(0.0050)$ \\
 & 0.5 & $-0.0569(0.0039)$ & $-1.9513(0.0048)$ & $0.8032(0.0065)$ & $0.0043(0.0036)$ & $-1.1692(0.0125)$ & $0.0043(0.0036)$ \\
 & 0.9 & $0.3315(0.0013)$ & $-1.9312(0.0079)$ & $1.2310(0.0031)$ & $0.0010(0.0044)$ & $-1.5902(0.0171)$ & $0.0010(0.0044)$ \\
\midrule

\multirow{4}{*}{\textbf{(4)}}
 & 0.0 & $-1.2694(0.3575)$ & $-1.8941(0.0147)$ & $0.3625(0.0230)$ & $0.0008(0.0063)$ & $-0.6957(0.0193)$ & $0.0008(0.0063)$ \\
 & 0.2 & $-1.1529(0.3533)$ & $-1.9246(0.0130)$ & $0.5527(0.0192)$ & $-0.0113(0.0058)$ & $-0.8689(0.0211)$ & $-0.0113(0.0058)$ \\
 & 0.5 & $-0.8682(0.2893)$ & $-1.9602(0.0105)$ & $0.8177(0.0161)$ & $0.0033(0.0042)$ & $-1.1774(0.0234)$ & $0.0033(0.0042)$ \\
 & 0.9 & $-0.6189(0.2559)$ & $-1.9138(0.0135)$ & $1.2438(0.0118)$ & $-0.0070(0.0050)$ & $-1.6047(0.0269)$ & $-0.0070(0.0050)$ \\
\midrule

\multirow{4}{*}{\textbf{(5)}}
 & 0.0 & $-0.6205(0.0395)$ & $-1.8934(0.0298)$ & $0.3462(0.0277)$ & $-0.0042(0.0132)$ & $-1.6098(0.6748)$ & $-0.0042(0.0132)$ \\
 & 0.2 & $-0.2137(0.0722)$ & $-1.9351(0.0361)$ & $0.7980(0.0712)$ & $-0.0109(0.0144)$ & $-2.0708(1.0903)$ & $-0.0109(0.0144)$ \\
 & 0.5 & $0.6105(0.2621)$ & $-2.0000(0.0383)$ & $1.4463(0.2373)$ & $0.0251(0.0143)$ & $-2.9425(1.6941)$ & $0.0251(0.0143)$ \\
 & 0.9 & $1.7828(0.8613)$ & $-1.8874(0.0813)$ & $2.6366(0.7617)$ & $-0.0237(0.0320)$ & $-4.2969(2.9830)$ & $-0.0237(0.0320)$ \\
\midrule

\multirow{4}{*}{\textbf{(6)}}
 & 0.0 & $0.2493(0.0028)$ & $-0.5200(0.0037)$ & $0.3690(0.0022)$ & $0.0024(0.0016)$ & $0.0024(0.0016)$ & $0.0024(0.0016)$ \\
 & 0.2 & $0.2771(0.0036)$ & $-0.5265(0.0049)$ & $0.3608(0.0029)$ & $-0.0015(0.0024)$ & $-0.0015(0.0024)$ & $-0.0015(0.0024)$ \\
 & 0.5 & $0.2528(0.0034)$ & $-0.5270(0.0035)$ & $0.3952(0.0019)$ & $-0.0058(0.0018)$ & $-0.0058(0.0018)$ & $-0.0058(0.0018)$ \\
 & 0.9 & $0.2346(0.0048)$ & $-0.5319(0.0036)$ & $0.4016(0.0020)$ & $0.0030(0.0022)$ & $0.0030(0.0022)$ & $0.0030(0.0022)$ \\
\midrule

\multirow{4}{*}{\textbf{(7)}}
 & 0.0 & $0.5912(0.0620)$ & $-0.5017(0.0132)$ & $0.3872(0.0123)$ & $-0.0069(0.0048)$ & $-0.0069(0.0048)$ & $-0.0069(0.0048)$ \\
 & 0.2 & $0.6790(0.0839)$ & $-0.5178(0.0145)$ & $0.3647(0.0110)$ & $-0.0076(0.0057)$ & $-0.0076(0.0057)$ & $-0.0076(0.0057)$ \\
 & 0.5 & $0.5864(0.0643)$ & $-0.5292(0.0160)$ & $0.3993(0.0146)$ & $-0.0023(0.0065)$ & $-0.0023(0.0065)$ & $-0.0023(0.0065)$ \\
 & 0.9 & $0.5932(0.0628)$ & $-0.5228(0.0144)$ & $0.4028(0.0137)$ & $-0.0039(0.0059)$ & $-0.0039(0.0059)$ & $-0.0039(0.0059)$ \\
\midrule

\multirow{4}{*}{\textbf{(8)}}
 & 0.0 & $0.0745(0.0046)$ & $-0.4871(0.0168)$ & $0.3809(0.0036)$ & $1.4782(0.0170)$ & $0.1313(0.0035)$ & $-0.0047(0.0027)$ \\
 & 0.2 & $0.0852(0.0039)$ & $-0.4774(0.0140)$ & $0.3684(0.0033)$ & $1.4849(0.0133)$ & $0.1462(0.0036)$ & $-0.0030(0.0020)$ \\
 & 0.5 & $0.2817(0.0033)$ & $-0.2313(0.0227)$ & $0.4091(0.0018)$ & $1.2423(0.0162)$ & $0.1055(0.0038)$ & $0.0002(0.0013)$ \\
 & 0.9 & $0.3982(0.0022)$ & $-0.1146(0.0138)$ & $0.3776(0.0005)$ & $1.1111(0.0077)$ & $0.1738(0.0059)$ & $0.0021(0.0006)$ \\
\midrule

\multirow{4}{*}{\textbf{(9)}}
 & 0.0 & $0.4128(0.0729)$ & $-0.4833(0.0214)$ & $0.3695(0.0125)$ & $1.4678(0.0179)$ & $0.1412(0.0060)$ & $0.0033(0.0049)$ \\
 & 0.2 & $0.4809(0.0817)$ & $-0.4745(0.0240)$ & $0.3592(0.0114)$ & $1.4732(0.0152)$ & $0.1542(0.0067)$ & $0.0081(0.0051)$ \\
 & 0.5 & $0.6504(0.0551)$ & $-0.2314(0.0347)$ & $0.4083(0.0113)$ & $1.2472(0.0211)$ & $0.1054(0.0068)$ & $0.0001(0.0045)$ \\
 & 0.9 & $0.7944(0.0545)$ & $-0.1038(0.0290)$ & $0.3779(0.0060)$ & $1.1042(0.0158)$ & $0.1712(0.0046)$ & $-0.0004(0.0027)$ \\
\midrule

\multirow{4}{*}{\textbf{(10)}}
 & 0.0 & $-0.2919(0.0837)$ & $-0.4879(0.0109)$ & $0.3833(0.0034)$ & $1.4744(0.0103)$ & $0.3383(0.0342)$ & $0.0004(0.0030)$ \\
 & 0.2 & $-0.2986(0.0862)$ & $-0.4734(0.0128)$ & $0.3700(0.0028)$ & $1.4826(0.0111)$ & $0.3714(0.0364)$ & $-0.0040(0.0030)$ \\
 & 0.5 & $-0.0843(0.0642)$ & $-0.2266(0.0139)$ & $0.4266(0.0024)$ & $1.2368(0.0119)$ & $0.3058(0.0295)$ & $0.0016(0.0023)$ \\
 & 0.9 & $-0.0658(0.0793)$ & $-0.1430(0.0145)$ & $0.3493(0.0010)$ & $1.1327(0.0084)$ & $0.4381(0.0445)$ & $0.0159(0.0042)$ \\
\bottomrule
\end{tabular}
\label{tab:dep_sim_results}
\end{table}

\textbf{Marginal SurvFD vs. conditional SurvFD.} 
Additionally, we simulate a simple scenario to illustrate the difference between marginal and conditional FD stated in Th.~\ref{th:survfd_dependent}. We define the log-risk function $G(t|\bm{x})$ as: 
\begin{equation*}
    G(t|\bm{x}) = x_1 \beta_1 \cdot \log(t+1) + x_2 \beta_2, 
\end{equation*}
with the hazard defined as in Eq.~\eqref{eq:app_haz}. Three features$(x_1, x_2, x_3)$ are drawn from a multivariate normal distribution with zero mean and unit variances, where $x_1$ and $x_3$ are strongly correlated with $\rho(x_1,x_3)=0.9$, while $x_2$ is independent of both. The model parameters are fixed to $\lambda = 0.03$, $\beta_1 = 0.8$, and $\beta_2 = -0.4$. Event times $t^{(i)}$ are generated using the approaches of \citet{bender2005generating} for proportional hazards and \citet{crowther2013simulating} for non-proportional hazards models, as implemented in the \texttt{simsurv} package~\citep{brilleman2021simulating}. Right-censoring is applied by setting $t^{(i)}=70$ for all observations with $t^{(i)} \ge 70$, resulting in a final sample size of $n=1{,}000$.

We compute the exact order-2 SurvSHAP-IQ decomposition using both interventional (marginal FD) and observational (conditional FD) SurvSHAP-IQ with log-hazard SHAP value functions evaluated on the full dataset (see Fig.~\ref{fig:sim_marg_cond}) for the same randomly selected observation as in the previous experiments. Consistent with Thm.~\ref{th:survfd_dependent}, interventional SurvSHAP-IQ correctly recovers the model's effects ($G(t|\bm{x})$), i.e., a time-dependent effect for $x_1$, a time-independent negative effect for $x_2$, and no contribution from $x_3$, as well as no interaction effects. In contrast, observational SurvSHAP-IQ is influenced by the correlation between $x_1$ and $x_3$ and consequently produces time-dependent main and interaction effects involving all three features $x_1, x_2,$ and $x_3$, reflecting the effect induced by conditioning on correlated features. This behavior is expected for conditional FD, as it produces explanations that are ``true to the data'', reflecting dependencies present in the data distribution.

\begin{figure}[h]
    \centering
    \begin{minipage}{0.4\textwidth}
        \centering
        \includegraphics[width=\textwidth]{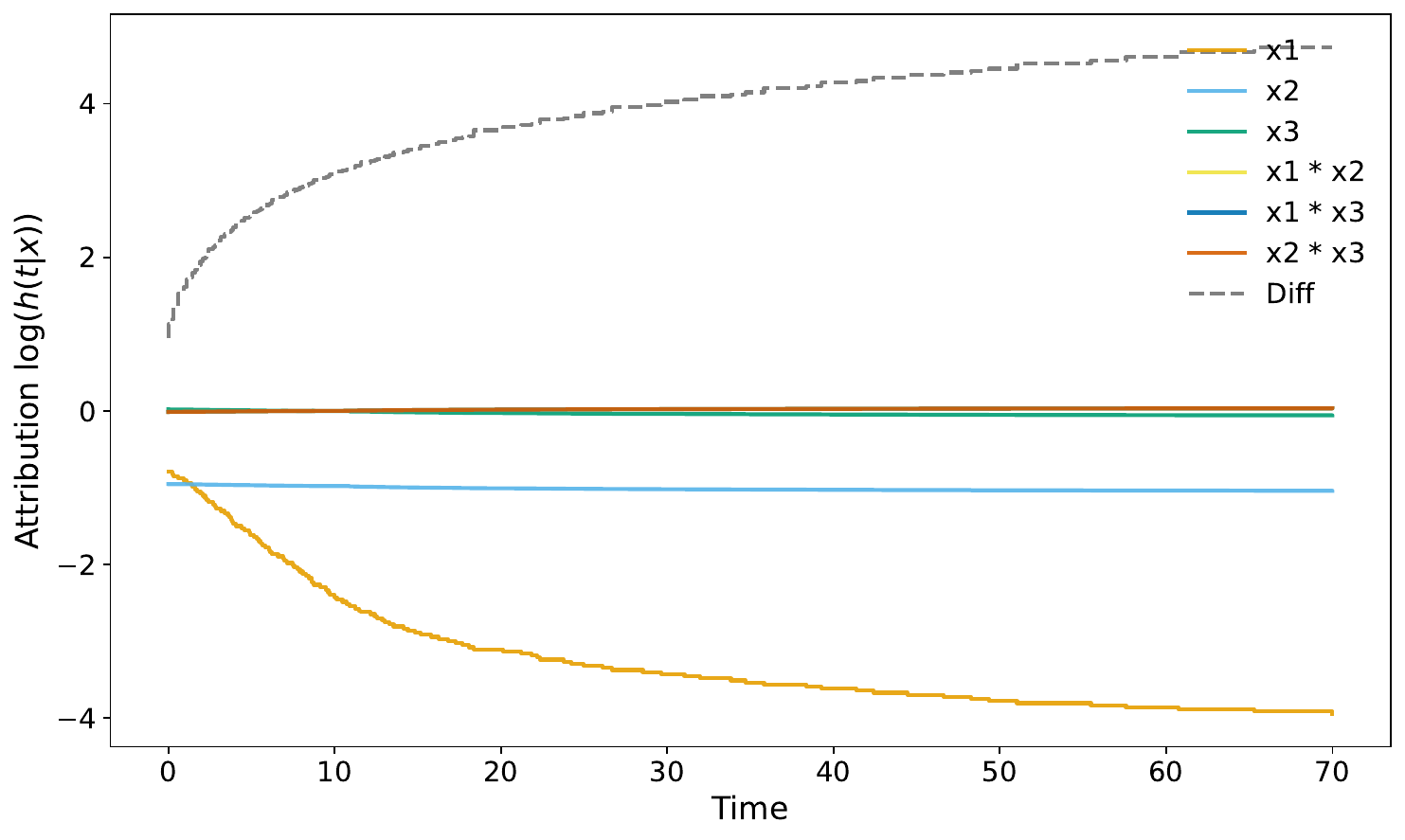}
    \end{minipage}\hfill
    \begin{minipage}{0.4\textwidth}
        \centering
        \includegraphics[width=\textwidth]{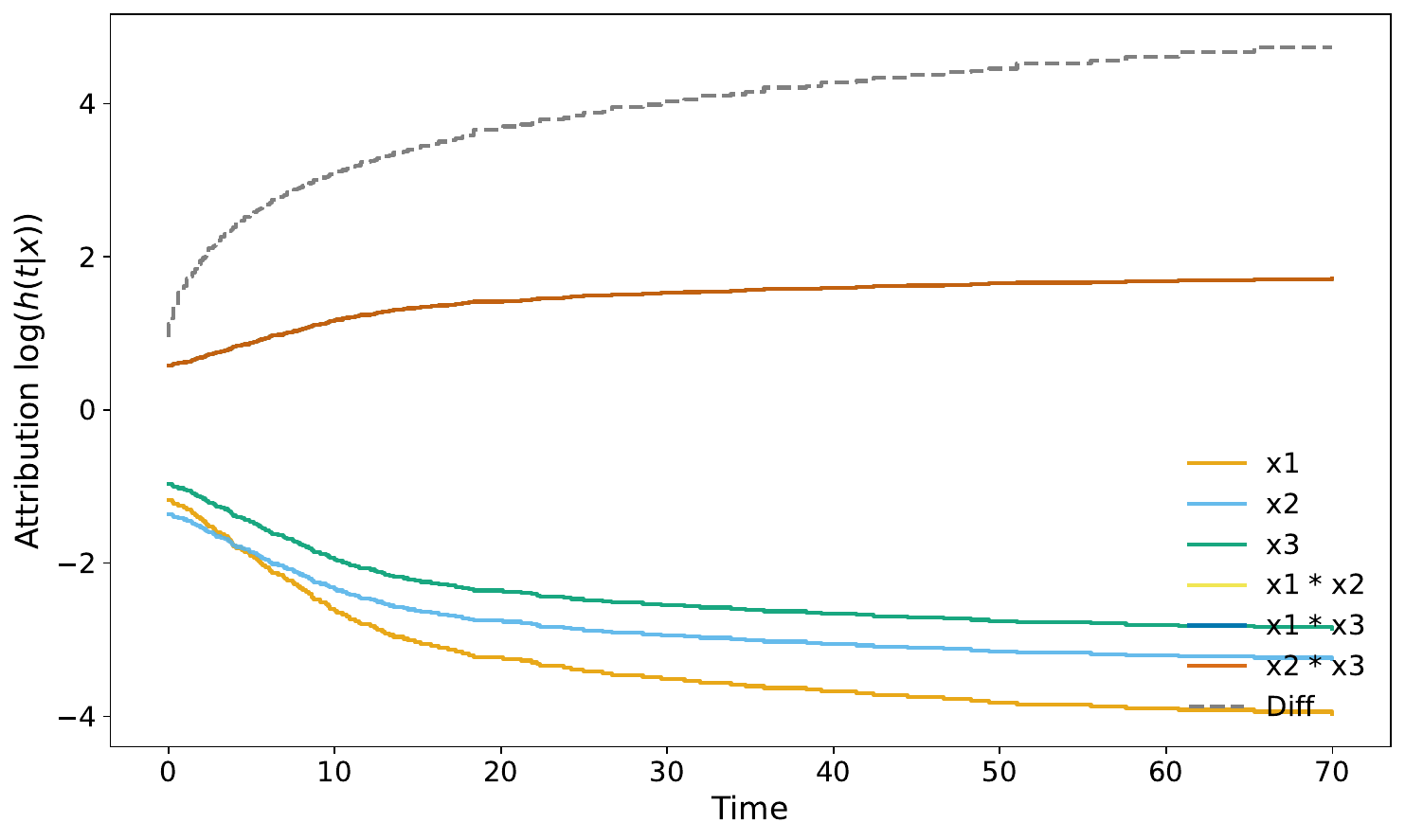}
    \end{minipage}

    \caption{Exact SurvSHAP-IQ decomposition curves for a selected observation 
    $[-1.2650,\, 2.4162,\, -0.6436]$ computed on the ground-truth log-hazard functions.
    \textbf{Left:} Interventional SurvSHAP-IQ. \textbf{Right:} Conditional SurvSHAP-IQ.
    Colored curves show feature- and interaction-specific log-hazard attributions,
    dashed grey shows the individual–mean log-hazard difference.}
    \label{fig:sim_marg_cond}
\end{figure}

\clearpage
\subsection{Experiments with Real-world Applications}\label{sup:experiments_real}

\textbf{Datasets.} 
We use the following well-established datasets used in research on survival analysis~\citep{drysdale2022}:
\begin{itemize}
    \item \texttt{actg}~\citep{hammer1997controlled} for predicting time to AIDS diagnosis or death (in days), with 1151 observations and 4 numeric features: the patient's \texttt{age} at the study enrollment (in years), baseline \texttt{cd4} cell count in blood, months of prior Zidovudine medicine use (in months), and Karnofsky performance scale (\texttt{karnof}), where: $100$ means \emph{normal, no complaint, no evidence of disease}; $90$ means \emph{normal activity possible, minor signs/symptoms of disease}; $80$ means \emph{normal activity with effort, some signs/symptoms of disease}; and $70$ means \emph{cares for self, normal activity/active work not possible}.
    \item \texttt{Bergamaschi}~\citep{bergamaschi2006distinct} for predicting breast cancer survival based on gene expression data, with 82 observations and 10 numeric features.
    \item \texttt{smarto}~\citep{simons1999second} for predicting survival in patients with either clinically manifest atherosclerotic vessel disease, or marked risk factors for atherosclerosis, based on 3873 observations and 17 numeric features.
    \item \texttt{support2}~\citep{knaus1995support} for predicting survival of seriously ill hospitalized adults based on clinical data, physiology scores, and clinically valid survival estimates, with 9105 observations and 24 numeric features.
    \item \texttt{phpl04K8a}~\citep{director2008gene} for predicting lung cancer (adenocarcinomas) survival based on gene expression data, with 442 observations and 20 numeric features.
    \item \texttt{nki70}~\citep{van2002gene} for predicting breast cancer survival based on gene expression data, with 144 observations and 76 features, incl. 70 genomic features.
\end{itemize}
Furthermore, we extend the analysis to include the recently released dataset for predicting the survival of eye cancer (uveal melanoma), based on routine histologic and clinical variables~\citep{donizy2022machine}.
In this case, several machine learning algorithms were applied, along with feature importance methods to explain their results.
There are 150 observations (patients) in the training set and 77 in the validation set.
We focus on 9 numerical features, including: the patient's \texttt{age} (in years), largest tumor diameter at its base (\texttt{max tumor diameter}), \texttt{mitotic rate} per $\mathrm{mm}^2$, \texttt{tumor thickness}, and various cell \texttt{nucleus} measurements.

\textbf{Models.} 
We train models using the \texttt{scikit-survival} package~\citep{sksurv}.
For \texttt{actg}, a random survival forest with \texttt{n\_estimators} $=300$ and \texttt{max\_depth} $=6$ achieves an out-of-bag C-index score of $0.723$ ($0.914$ on the training set).
Similarly, for \texttt{nki70}, it achieves an out-of-bag C-index score of $0.726$ ($0.954$ on the training set).
For the approximator benchmark with \texttt{Bergamaschi}, \texttt{smarto}, \texttt{support2} and \texttt{php104K8a}, we train baseline RSF models with \texttt{n\_estimators} $=100$ and \texttt{max\_depth} $=3$, achieving a training C-index score of $0.897$, $0.737$, $0.725$, $0.729$, respectively.
Note that we do not aim for high performance or tune the models, since our goal is not to interpret the explanations themselves in this case.
Following~\citet{donizy2022machine}, we train a GBSA model with \texttt{n\_estimators} $=100$, \texttt{max\_depth} $=4$, and \texttt{learning\_rate} $=0.05$, which achieves a C-index score of $0.927$ on the training set and $0.758$ on the validation set.

\textbf{Explanations.} 
To compute and approximate SurvSHAP-IQ explanations, we build upon the open-source \texttt{shapiq} software~\citep{muschalik2024shapiq}. 
For \texttt{actg}, we use the exact computer with marginal imputation to explain the survival predictions in 41 evenly-distributed timepoints for a random sample of 200 observations.
For uveal melanoma, we use the regression-based approximator with a budget of $2^9$ and marginal imputation to explain the survival predictions in 41 evenly distributed timepoints for the validation set of 77 observations.
For \texttt{nki70}, we use the regression-based approximator with a budget of $2^{15}$ and marginal imputation to explain survival in 31 timepoints for all 144 observations.
For the four benchmark datasets, we analyse four different approximators: Monte Carlo~\citep{fumagalli2023shapiq}, permutation-based~\citep{tsai2023faith}, SVARM~\citep{kolpaczki2024svarmiq}, and regression-based~\citep{fumagalli2024kernelshap}.
For \texttt{Bergamaschi} with 10 features, we use marginal imputation to explain survival predictions in 41 evenly distributed timepoints for all 82 observations.
We compute the `ground truth' explanation exactly, allowing for the measurement of approximation error for budgets $=\{2^6, 2^7, 2^8, 2^9\}$.
For the three larger datasets (\texttt{smarto}, \texttt{support2}, \texttt{phpl04K8a}), we simplify the evaluation to make computing the `ground truth' exactly feasible. 
Specifically, we restrict the set of features to 16, use baseline imputation with the mean, and explain the model at 11 evenly distributed timepoints for a random sample of 20 observations.
Consequently, we measure the approximation error for budgets $=\{2^9, 2^{10}, 2^{11}, 2^{12}, 2^{13}\}$. 
We repeat all calculations 30 times and report the average with the standard error.

\textbf{A simple illustrative example for \texttt{actg}.}
We first explain a random survival forest (RSF) predicting time to AIDS diagnosis or death in HIV-1 patients using the \texttt{actg} dataset~\citep{hammer1997controlled}. 
The RSF (300 trees, depth 6) trained on four numerical features (\texttt{karnof}, \texttt{cd4}, \texttt{priorzdv}, \texttt{age}) achieves an out-of-bag C-index of $0.723$. 
Figure~\ref{fig:experiments_actg} shows SurvSHAP-IQ explanations for 200 patients. 
The model relies most on \texttt{karnof} and \texttt{cd4}, where low values (blue) reduce survival predictions and higher values (grey–red) increase them.
Second-order effects exhibit lower in-group variance than first-order ones—particularly across the four \texttt{karnof} color groups—because the decomposition accounts for interactions (Fig.~\ref{fig:experiments_actg}, right). 
Our method recovers the strong \texttt{karnof} $\times$ \texttt{cd4} interaction, identified by the RSF model, suggesting interpretable models like CoxPH could benefit from explicitly including such terms.
We defer visualizations of the remaining, less-important interaction effects to Figure~\ref{fig:experiments_actg_extension}.

\textbf{Lack of important gene interactions in \texttt{nki70}.}
We run the SurvSHAP-IQ approximation on an RSF model with 76 (genomic) features, which effectively comprises $\binom{76}{2} = 2850$ interaction terms, using a relatively large budget of $2^{15} = 32768$ sampled coalitions.
We find that no interaction between two features is more important than their first-order effects, i.e. no interaction appears among the top-76 important terms as measured by either the mean absolute attribution value or the standard deviation.
Such a result effectively supports the use of an \emph{additive} CPH model for predicting breast cancer survival, as in the original work~\citep{van2002gene}.
Figure~\ref{fig:experiments_nki70} shows exemplary survival model explanations for the four most important features and pair-wise interactions.

\textbf{Extended results.}
Figure~\ref{fig:experiments_um_extension} extends Figure~\ref{fig:experiments_um}, showing the SurvSHAP-IQ explanation of the GBSA model for the uveal melanoma task.
Figure~\ref{fig:approximators_benchmark_extension} extends Figure~\ref{fig:approximators_benchmark}, showing results of the approximation benchmark. Additional implementation details and code are provided in the \href{https://github.com/sophhan/survshapiq}{online supplement}.

\begin{figure}[t]
    \centering
    \includegraphics[width=0.26\linewidth]{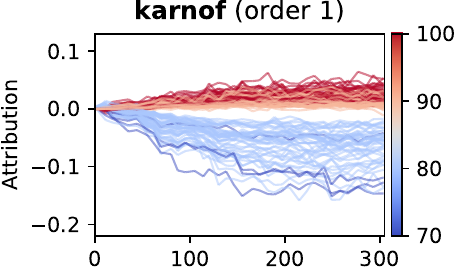}
    \includegraphics[width=0.24\linewidth]{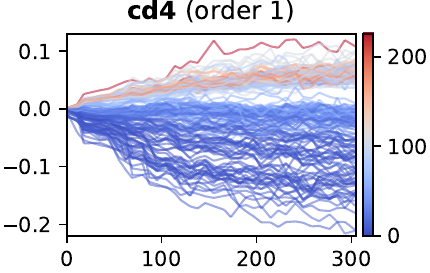}
    \includegraphics[width=0.24\linewidth]{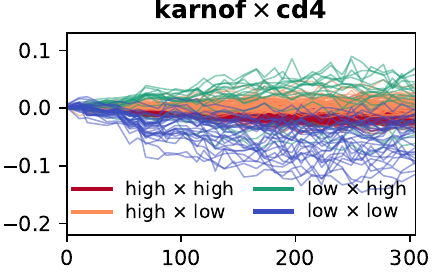}
    \includegraphics[width=0.24\linewidth]{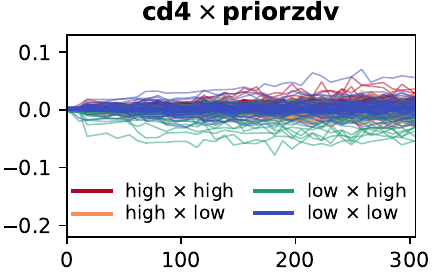}
    
    \includegraphics[width=0.26\linewidth]{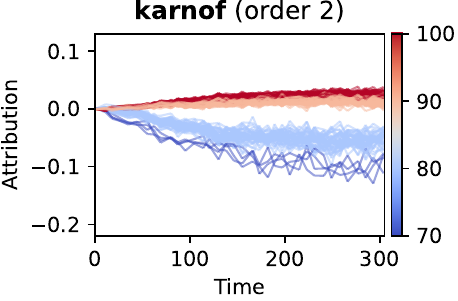}
    \includegraphics[width=0.24\linewidth]{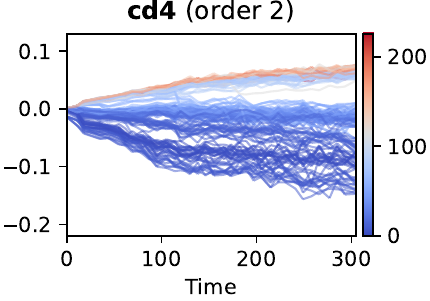}
    \includegraphics[width=0.24\linewidth]{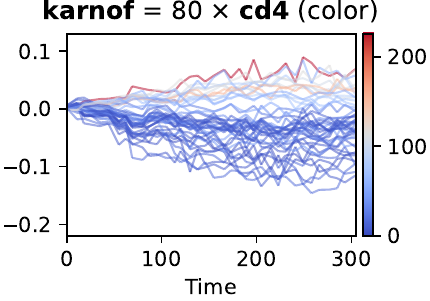}
    \includegraphics[width=0.24\linewidth]{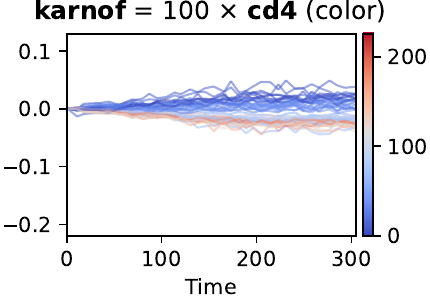}
    \caption{
    Explanation of an RSF model predicting survival in patients treated for the HIV-1 infection. 
    \textbf{Top left:} Shapley value attributions~(order~1) for multiple observations (curves) with different feature values (in color).
    \textbf{Bottom left:} SurvSHAP-IQ individual effects~(order~2). 
    We observe that these are smoother compared to attribution curves, indicating distincter relationships with feature values. 
    \textbf{Right:} SurvSHAP-IQ interaction effects (order~2).
    We observe a significant interaction between the \texttt{karnof} and \texttt{cd4} features, which is `hidden' in the Shapley value simplified explanation. 
    The remaining interaction effects are shown in Figure~\ref{fig:experiments_actg_extension}.
    }
    \label{fig:experiments_actg}
\end{figure}

\begin{figure}
    \centering

    \includegraphics[width=0.26\linewidth]{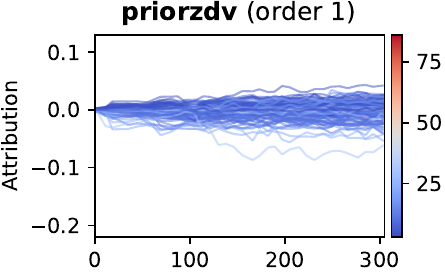}
    \includegraphics[width=0.24\linewidth]{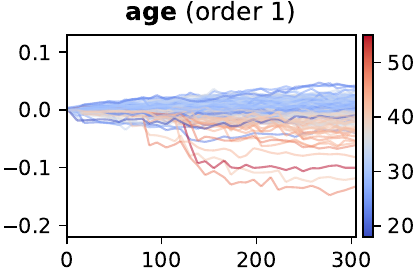}
    \includegraphics[width=0.24\linewidth]{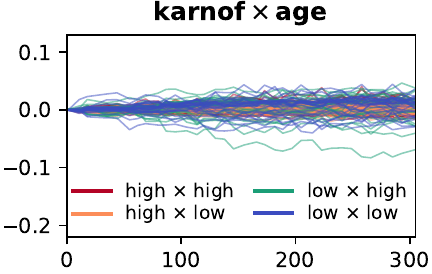}
    \includegraphics[width=0.24\linewidth]{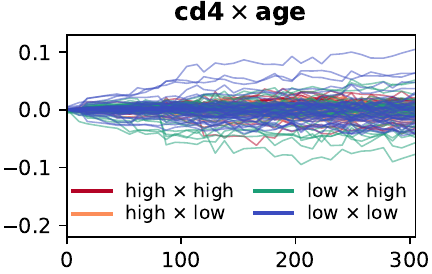}
    
    \includegraphics[width=0.26\linewidth]{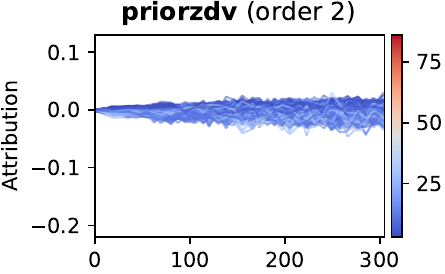}
    \includegraphics[width=0.24\linewidth]{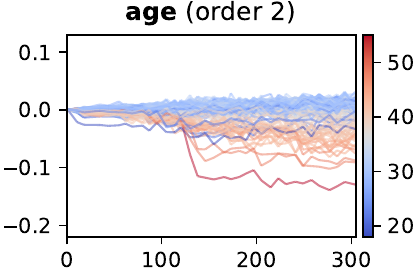}
    \includegraphics[width=0.24\linewidth]{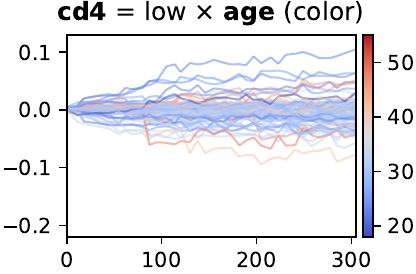}
    \includegraphics[width=0.24\linewidth]{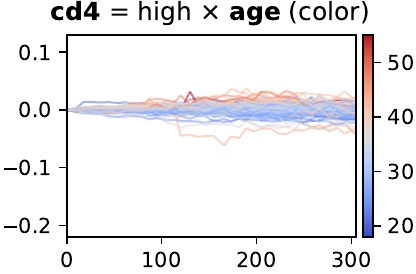}
    
    \includegraphics[width=0.26\linewidth]{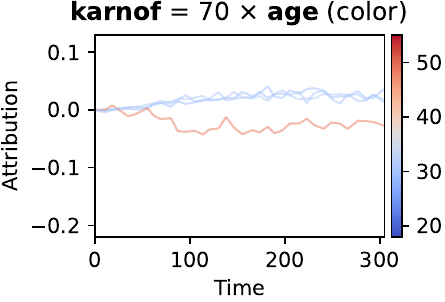}
    \includegraphics[width=0.24\linewidth]{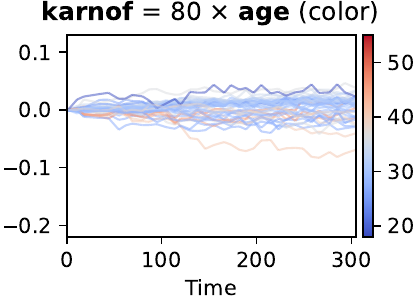}
    \includegraphics[width=0.24\linewidth]{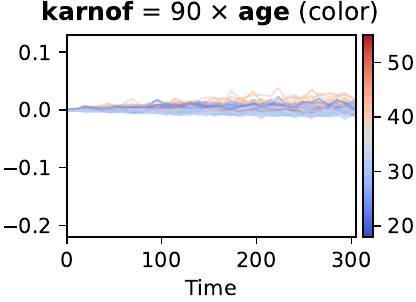}
    \includegraphics[width=0.24\linewidth]{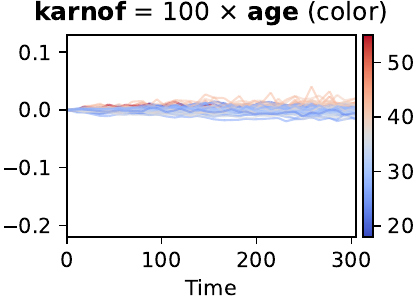}

    \caption{
    Extended Figure~\ref{fig:experiments_actg}. 
    The remaining interaction effects in an explanation of an RSF model predicting the survival in patients treated for the HIV-1 infection. 
    }
    \label{fig:experiments_actg_extension}
\end{figure}

\begin{figure}[t]
    \centering
    \includegraphics[width=0.26\linewidth]{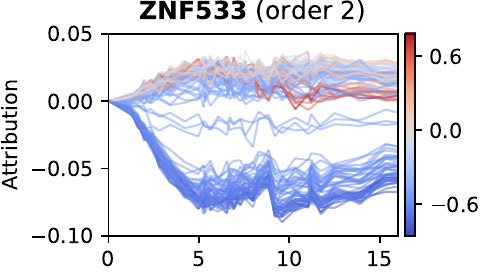}
    \includegraphics[width=0.24\linewidth]{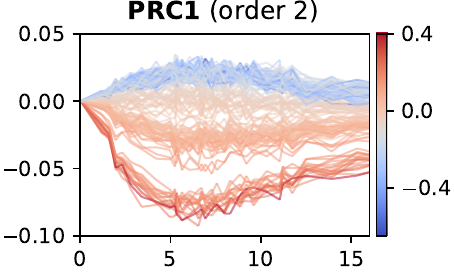}
    \includegraphics[width=0.24\linewidth]{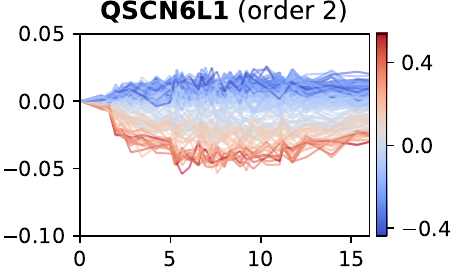}
    \includegraphics[width=0.24\linewidth]{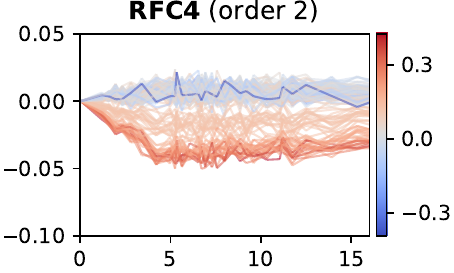}
    
    \includegraphics[width=0.26\linewidth]{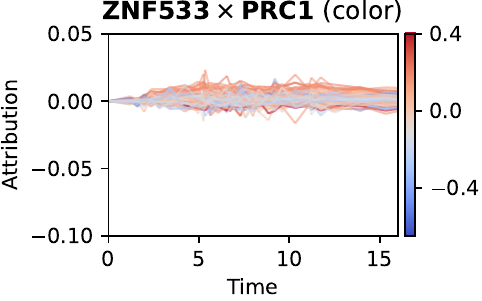}
    \includegraphics[width=0.24\linewidth]{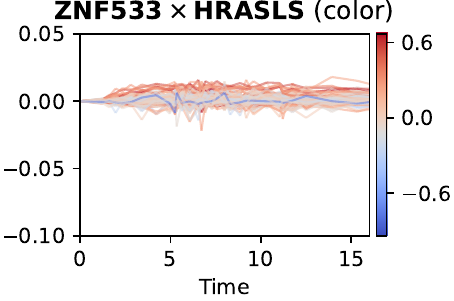}
    \includegraphics[width=0.24\linewidth]{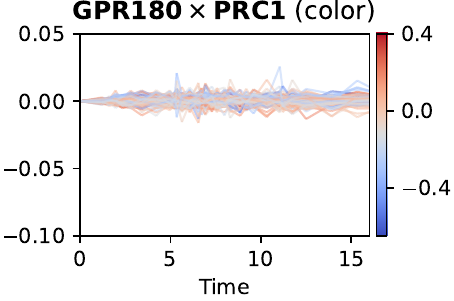}
    \includegraphics[width=0.24\linewidth]{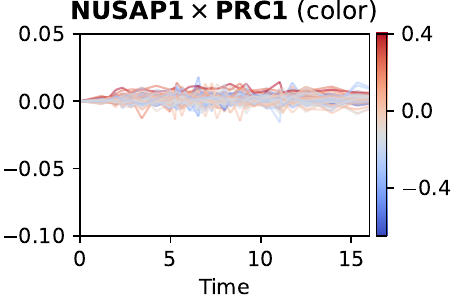}
    \caption{
    Explanation of an RSF model predicting breast cancer survival. 
    \textbf{Top:} SurvSHAP-IQ individual effects~(order~2) for four most important gene expressions. 
    \textbf{Bottom:} SurvSHAP-IQ interaction effects (order~2) for four most important gene interactions.
    }
    \label{fig:experiments_nki70}
\end{figure}

\begin{figure}
    \centering
    \includegraphics[width=0.211\linewidth]{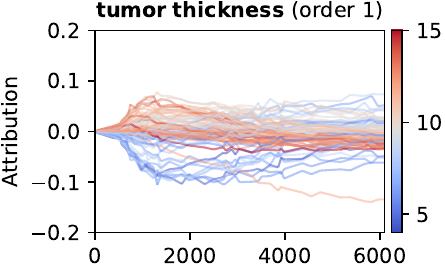}
    \includegraphics[width=0.191\linewidth]{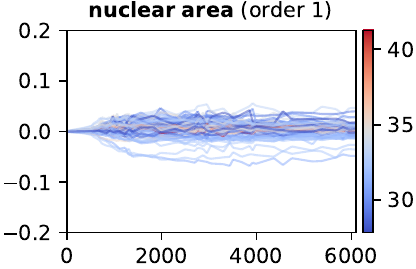}
    \includegraphics[width=0.191\linewidth]{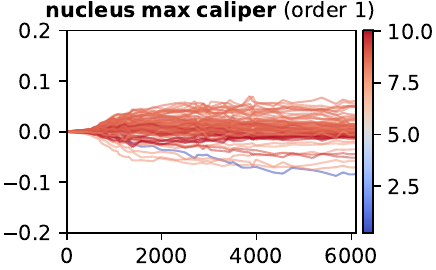}
    \includegraphics[width=0.196\linewidth]{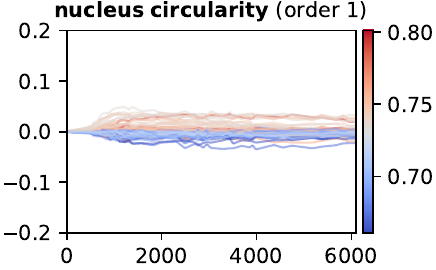}
    \includegraphics[width=0.186\linewidth]{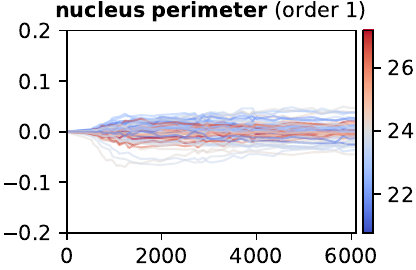}

    \includegraphics[width=0.211\linewidth]{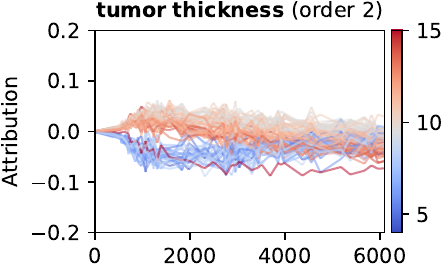}
    \includegraphics[width=0.191\linewidth]{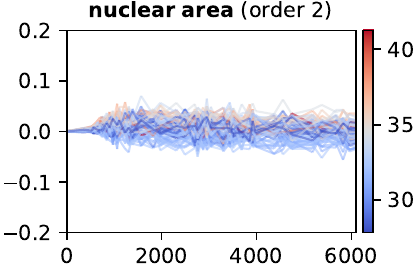}
    \includegraphics[width=0.191\linewidth]{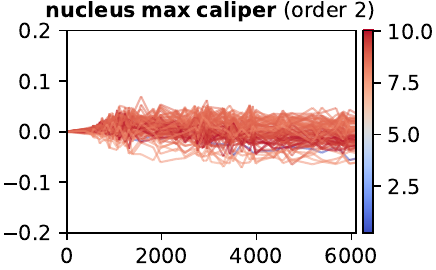}
    \includegraphics[width=0.196\linewidth]{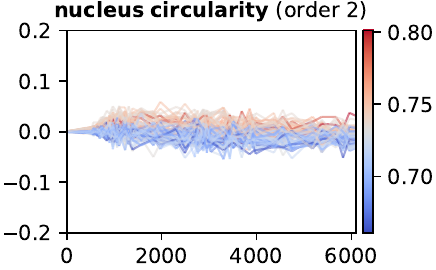}
    \includegraphics[width=0.186\linewidth]{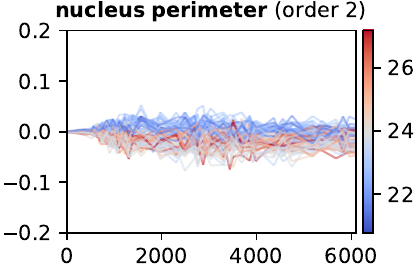}

    \includegraphics[width=0.211\linewidth]{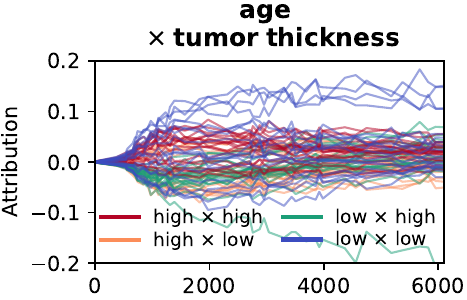}
    \includegraphics[width=0.191\linewidth]{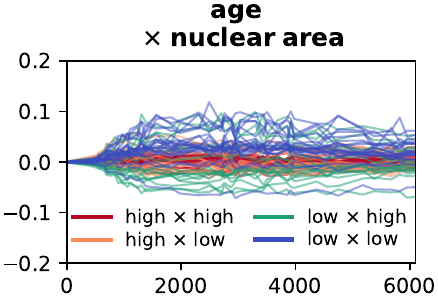}
    \includegraphics[width=0.191\linewidth]{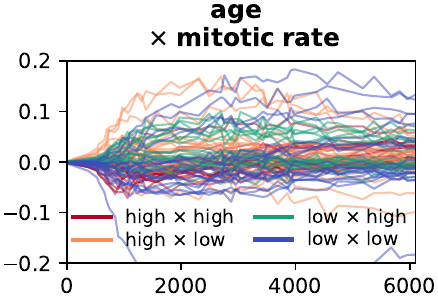}
    \includegraphics[width=0.191\linewidth]{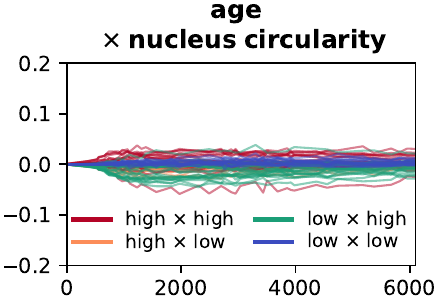}
    \includegraphics[width=0.191\linewidth]{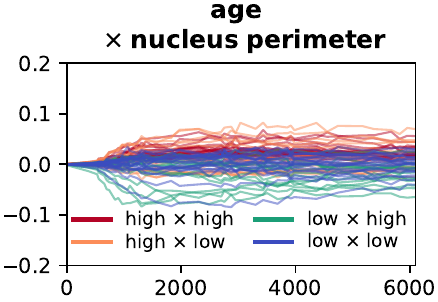}

    \includegraphics[width=0.211\linewidth]{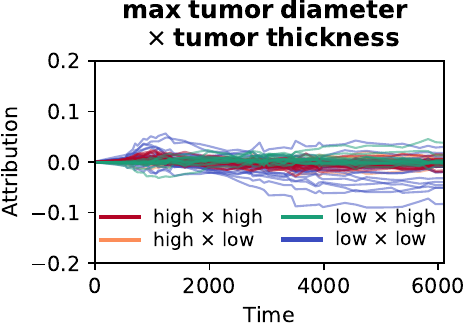}
    \includegraphics[width=0.191\linewidth]{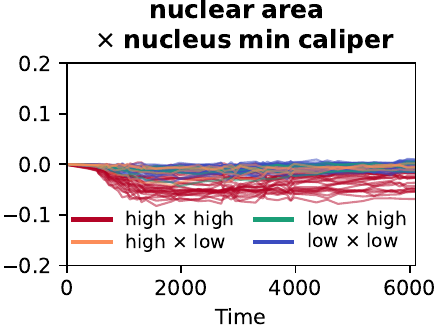}
    \includegraphics[width=0.191\linewidth]{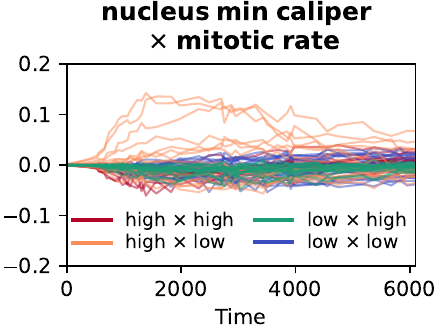}
    \includegraphics[width=0.191\linewidth]{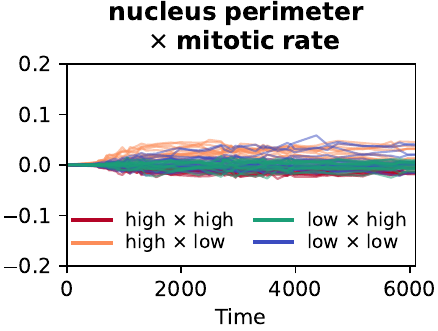}
    \includegraphics[width=0.191\linewidth]{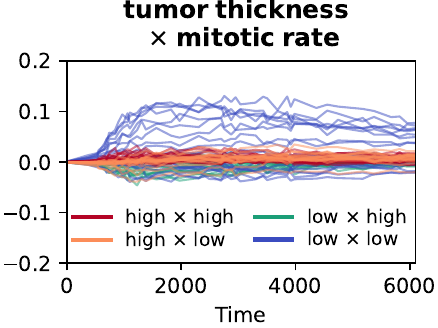}
    
    \caption{
    Extended Figure~\ref{fig:experiments_um} with a narrowed Y-axis scale for readability. 
    The remaining attribution and interaction effects in an explanation of a GBSA model predicting the survival of patients diagnosed with uveal melanoma, a type of eye cancer. 
    }
    \label{fig:experiments_um_extension}
\end{figure}

\begin{figure}[t]
    \centering
    \includegraphics[width=0.325\linewidth]{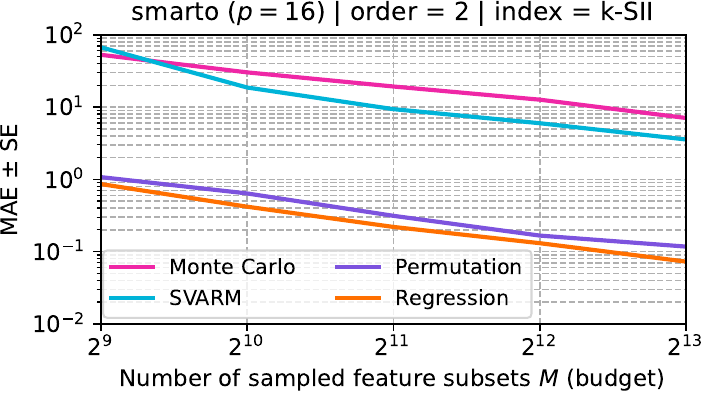}
    \includegraphics[width=0.325\linewidth]{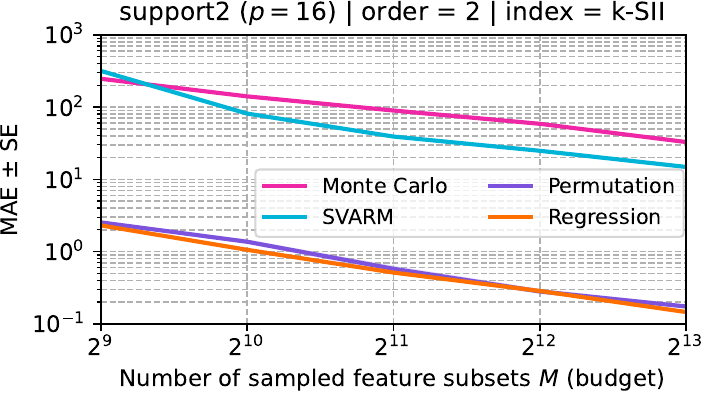}
    \includegraphics[width=0.325\linewidth]{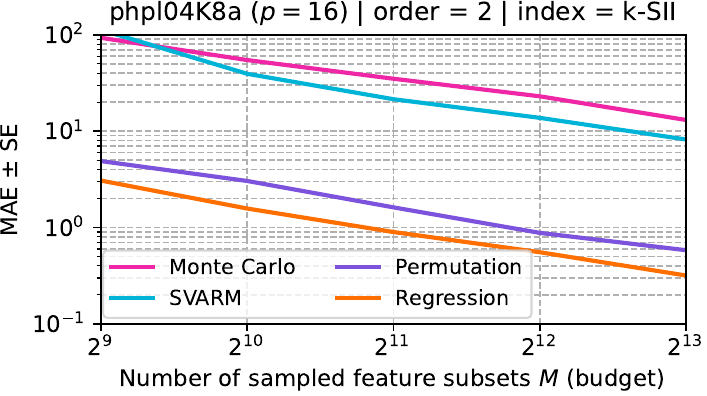}
    \caption{
    Extended Figure~\ref{fig:approximators_benchmark}.
    Mean absolute error of different SurvSHAP-IQ approximators as a function of budget for the three survival tasks. 
    }
    \label{fig:approximators_benchmark_extension}
\end{figure}

\subsection{Experiments with multi-modal TCGA-BRCA Dataset}\label{supp:experiments_real_tcga}

In this multi-modal example, we use data from \textit{The Cancer Genome Atlas Breast Invasive Carcinoma} (TCGA-BRCA) project \citep{tcga_brca}, which is provided in preprocessed form via Kaggle\footnote{Available at \url{https://www.kaggle.com/datasets/jmalagontorres/tcga-brca-survival-analysis}}. We train a multi-modal DeepHit model and use SurvSHAP-IQ to analyze interactions in the survival predictions of individual patients. Additional details and code are provided in the \href{https://github.com/sophhan/survshapiq}{online supplement}.

\subsubsection{Dataset}
The TCGA-BRCA dataset comprises high-resolution histopathological whole-slide images (WSIs) of tissue samples and tabular clinical features, including age, surgical procedure, tumor characteristics, and TNM-based staging information~\citep{Amin2017}. In total, dataset includes $1{,}097$ patients, of whom $990$ have both patched WSIs and complete clinical data.

\textbf{Images.} The image modality consists of extremely high-resolution histopathological WSIs of tissue samples, often spanning tens of thousands of pixels. Since processing full slides is computationally infeasible, standard practice in computational pathology is to divide each slide into smaller, non-overlapping patches. Accordingly, the Kaggle dataset provides 1000$\times$1000 patches at 20$\times$magnification, primarily containing tissue. The number of patches varies substantially across patients (50–$2{,}343$), with an average of 523 patches per patient.
We used the UNI2-h feature extractor \citep{Chen2024_UNI2}, a large vision transformer (ViT-H/14), which was pretrained on over 100 million histopathology images from more than 20 tissue types, to encode each patch into a compact numerical representation. The encoder maps each resized 224$\times$224 patch to a $1{,}536$-dimensional feature vector capturing the morphological content of the tissue region. Features are extracted once and stored, so training and explaining the survival model only requires loading precomputed vectors rather than processing raw image patches.

\textbf{Clinical tabular features.} In addition to image data, we use 8 clinical variables, resulting in 21 features after dummy encoding the categorical ones (see Table~\ref{tab:clinical_vars}). These \textbf{8 variables correspond to the players} in our Shapley-based analysis (Sec.~\ref{sup:explanations_tcga_survshap}). Categorical variables, such as tumor staging, are dummy encoded using clinically appropriate reference categories (e.g., T2 for T-Stage, N0 for N-Stage, and Stage II for overall staging \citep{Amin2017}). The two continuous variables (patient age and number of examined lymph nodes) are standardized to the range $[0,1]$.

\begin{table}[h]
\centering
\caption{Description of the clinical features of the TCGA-BRCA dataset. Dummy-encoded variables use the listed reference category as baseline.}
\label{tab:clinical_vars}
\begin{tabular}{llll}
\toprule
\textbf{Feature} & \textbf{Levels} & \textbf{Type} & \textbf{Reference} \\
\midrule
Age & -- & Continuous & -- \\
Lymph Nodes & Lymph nodes examined & Count & -- \\
Surgery & lumpectomy, simple mastectomy, other & Categorical & Radical mast. \\
Menopause & Indeterminate, pre-menopausal & Categorical & Post-menopausal \\
T-Stage & T1, T3, T4, TX & Categorical & T2 \\
N-Stage & N1, N2, N3, NX & Categorical & N0 \\
M-Stage & M1, MX & Categorical & M0 \\
Stage & I, III, IV, X & Categorical & Stage II \\
\bottomrule
\end{tabular}
\end{table}

\subsubsection{Model}

The central challenge in image-based pathology is aggregating a varying number of patch features into a single patient-level representation (i.e., a bag of patches per patient). We use gated attention from the field of multiple-instance learning \cite{pmlr-v80-ilse18a}, which learns an importance weight $\alpha_k$ for each patch. The patient-level image representation is then a weighted sum, i.e., $z_{\text{img}} = \sum_k \alpha_k \, e_k$, where $e_k$ is the $1{,}536$-dimensional feature vector from the UNI2-h feature extractor. Patches with higher weights contribute more than others, or more intuitively, the model learns to focus on survival-relevant tissue regions. This allows us to use an arbitrary number of patches for a single patient's prediction. We use six patches for the prediction, which are the \textbf{top-6 patches} according to their attention weights to serve \textbf{as players} in SurvSHAP-IQ.

The image representation is combined with clinical features via concatenation (both are mapped to a 64-dimensional vector using dense neural networks), including element-wise interactions, i.e., $z_{\text{fused}} = (z_{\text{img}}, z_{\text{tab}}, z_{\text{img}} \odot z_{\text{tab}})$. The fused representation is mapped to a \emph{DeepHit} head \citep{Lee_deephit_2018} using the \texttt{pycox} package \citep{pycox}, providing discrete-time survival probabilities. The DeepHit architecture provides the following survival quantities:
\begin{itemize}
    \item \textbf{Probability mass function (PMF):} \(p(t|\bm{x}) = \mathbb{P}(T = t | \bm{x})\), the probability that the event occurs exactly at time \(t\).
    \item \textbf{Survival function:} \(S(t) = \mathbb{P}(T > t | \bm{x})\), the probability of surviving beyond time \(t\).
\end{itemize}
Since DeepHit predicts these quantities at discrete time points \(t_1 < t_2 < \dots < t_L\), they are related by
\[
S(t | \bm{x}) = 1 - \sum_{t_i < t} p(t_i | \bm{x}).
\]

\subsubsection{Training}

We train the model using the DeepHit loss \cite{Lee_deephit_2018}, which combines a partial negative log-likelihood encouraging high probability for the observed event time with a ranking loss that penalizes incorrect orderings of predicted survival times across patients (with weight $\alpha=0.4$). Survival times are discretized into 16 bins using quantile-based cuts, ensuring that each bin contains approximately the same number of events. We split the 990 patients into training (70\%, 693 patients), validation (15\%, 149 patients), and test (15\%, 148 patients) sets using stratified splitting by event status. We use AdamW optimization with a learning rate of $5\cdot 10^{-5}$, weight decay of $5\cdot10^{-3}$, and a batch size of 16. Training runs for at most 200 epochs with early stopping (patience 25, monitored on validation loss). Since patients have varying numbers of patches, we randomly sample up to 150 patches per patient at each epoch, providing implicit data augmentation as each epoch sees a different subset of patches. Dropout is applied at a rate of 0.5 throughout the network, with a reduced rate of 0.25 before the gated attention mechanism to preserve patch-level information. With this setup, we achieve a C-index score of 0.7375 and an integrated Brier score of 0.1512 on the test set.

\subsubsection{Explanation with SurvSHAP-IQ}\label{sup:explanations_tcga_survshap} 

To compute Shapley interaction explanations, we build upon the open-source \texttt{shapiq} package~\cite{muschalik2024shapiq}. We define the cooperative game with \textbf{14 players}: the 6 highest-attention patches plus the 8 clinical features (Table~\ref{tab:clinical_vars}). We use the exact computation with marginal imputation to explain survival predictions at one of the 16 discrete time points of the DeepHit architecture. For marginal imputation, we use all other patches from the same patient for image modality and the dataset-wide distribution for the clinical features. In Figure~\ref{fig:experiments_tcga}, the lymph nodes examined feature and M-Stage are omitted due to negligible effects.

\subsection{Hardware and Software Setup}\label{sup:hardware}
Experiments were conducted using a 64-bit Linux platform running Ubuntu 22.04 LTS with two AMD EPYC Genoa 9534 64-Core processors (128 cores, 256 threads total), 1.5 terabytes of RAM, and eight NVIDIA RTX 6000 Ada Generation GPUs (each with 48 GB memory). Additionally, for the TCGA-BRCA dataset, a 64-bit Linux platform running Ubuntu 24.04 LTS with an AMD Ryzen Threadrippper 3960X (24 cores, 48 threads) CPU including 256 gigabyte RAM and two NVIDIA Titan RTX GPUs was used. 

\end{document}